\newif\ifconfver
\newif\ifonecoltab
\newif\ifplainver  
\definecolor{orange}{RGB}{255,107,0}
\newtheorem{Fact}{Fact}
\theoremstyle{definition}
\newtheorem{Assumption}{Assumption}
\newcommand{\W}{\boldsymbol{W}}
\newcommand{\D}{\boldsymbol{D}}
\newcommand{\Q}{\boldsymbol{Q}}
\newcommand{\X}{\boldsymbol{X}}
\newcommand{\U}{\boldsymbol{U}}
\renewcommand{\H}{\boldsymbol{H}}
\newcommand{\A}{\boldsymbol{A}}
\newcommand{\Z}{\boldsymbol{Z}}
\newcommand{\x}{\boldsymbol{x}}
\renewcommand{\a}{\boldsymbol{a}}
\newcommand{\cone}[1]{\textsf{cone}\left\{#1\right\}}
\newcommand{\T}{{\!\top\!}}
\newcommand{\tX}{\underline{\bm X}}
\DeclareMathOperator*{\minimize}{\textrm{minimize}}
\newtheorem{theorem}{Theorem}
\newtheorem{lemma}{Lemma}
\newtheorem{proposition}{Proposition}
\newtheorem{definition}{Definition}
\newtheorem{remark}{Remark}
\newtheorem{proof}{Proof}
\begin{document}

	\newcommand{\papertitle}{
		Recovering Joint Probability of Discrete Random Variables from Pairwise Marginals
	}
	
	\newcommand{\paperabstract}{%
		Learning the joint probability of random variables (RVs) {is the cornerstone of} statistical signal processing and machine learning. However, direct nonparametric estimation for high-dimensional joint probability is in general impossible, due to the curse of dimensionality. Recent work has proposed to recover the joint probability mass function (PMF) of an arbitrary number of RVs from three-dimensional marginals, leveraging the algebraic properties of low-rank tensor decomposition and the (unknown) dependence among the RVs. Nonetheless, accurately estimating three-dimensional marginals can still be costly in terms of sample complexity, affecting the performance of this line of work in practice in the sample-starved regime. 
		Using three-dimensional marginals also involves challenging tensor decomposition problems whose tractability is unclear.
		This work puts forth a new framework for learning the joint PMF using only pairwise marginals, which naturally enjoys a lower sample complexity relative to the third-order ones. A coupled nonnegative matrix factorization (CNMF) framework is developed, and its joint PMF recovery guarantees under various conditions are analyzed.
		Our method also features a Gram--Schmidt (GS)-like algorithm that exhibits competitive runtime performance. The algorithm is shown to provably recover the joint PMF up to bounded error in finite iterations, under reasonable conditions.
		It is also shown that a recently proposed economical {\it expectation maximization} (EM) algorithm guarantees to improve upon the GS-like algorithm's output, thereby further lifting up the accuracy and efficiency.
		Real-data experiments are employed to showcase the effectiveness.
	}
	
	
	\ifplainver
	
	\date{\today}
	
	\title{\papertitle}
	
	\author{
		 Shahana Ibrahim and Xiao Fu
		\\ ~ \\
		School of Electrical Engineering and Computer Science\\ Oregon State University\\
		Corvallis, OR 97331, United States
		\\~
	}
	
	\maketitle

	\begin{abstract}
		\paperabstract
	\end{abstract}
	
	\else
	\title{\papertitle}
	
	\ifconfver \else {\linespread{1.1} \rm \fi
		
		\author{Shahana Ibrahim and Xiao Fu
			
			\thanks{
				%
				
				
				X. Fu and S. Ibrahim are with the School of Electrical Engineering and Computer Science, Oregon State University, Corvallis, OR 97331, United States. email (xiao.fu,ibrahi)@oregonstate.edu

				This work is supported by the National Science Foundation under NSF-ECCS 1608961, NSF-ECCS 1808159, and NSF IIS-1910118, and the Army Research Office under ARO W911NF-19-1-0247 and W911NF-19-1-0407.

			}
		}
		
		\maketitle
		
		\ifconfver \else
		\begin{center} \vspace*{-2\baselineskip}
		\end{center}
		\fi
		
		\begin{abstract}
			\paperabstract
		\end{abstract}
		
\begin{IEEEkeywords}
	joint probability learning, nonnegative matrix factorization, probability tensors, two-dimensional marginals
\end{IEEEkeywords}
		
		\ifconfver \else \IEEEpeerreviewmaketitle} \fi
	
	\fi
	
	\ifconfver \else
	\ifplainver \else
	\newpage
	\fi \fi

\section{Introduction}
{Estimating the joint probability of random variables (RVs) from noisy, incomplete and limited observations lies at the heart of statistical signal processing and machine learning. Having the joint probability allows us to construct statistically optimal estimators and detectors under certain metrics.
For example, the \textit{maximum a posteriori} (MAP) and the \textit{minimum mean squared error} (MMSE) principles can be easily carried out, if the joint probability of the RVs involved has been estimated \cite{trees2013detection}.
These principles are widely used in tasks such as speech processing, filter design, symbol detection, compressive sensing, and remote sensing; see, e.g., {\cite{ephraim1992bayesian,colavolpe2005map,rangan2012asymptotic,farag2005unified}}.
{In addition, joint probability is the key ingredient for estimating information-theoretic measures, e.g., mutual information and entropy \cite{cover1999elements}, which are often integrated into the solutions of challenging signal processing problems; see examples in blind source separation \cite{taleb1999source}, image processing \cite{thevenaz2000optimization} and radar waveform design \cite{yang2007mimo}.}
Joint probability estimation is also the linchpin of data mining, information retrieval, and machine intelligence \cite{murphy2012machine}.} 
{Driven by its critical role across multiple domains, joint probability learning using both parametric and nonparametric methods has been a long-term interest in the signal processing community for decades; see, e.g., \cite{yeredor2019maximum,kargas2017tensors,dabeer2013joint,hunsop2008newapproach,jenq1994nonparametric}.}

{Joint probability estimation poses a variety of challenges in both theory and methods. In particular,}
in the high-dimensional regime, directly estimating the joint probability via `structure-free' methods such as sample averaging is considered infeasible \cite{wainwright2019high}. Suppose that there are $N$ RVs, where each has an $I$-dimensional alphabet. To estimate the joint probability reliably, we generally need $S \gg \Omega(I^{N})$ `diverse' $N$-dimensional examples (without any entries missing). This is because the probability of encountering most $N$-tuples is very low. Therefore, only a small portion of the empirical distribution will be non-zero given a reasonable amount of data samples---this makes the approach considerably inaccurate in most cases. 

Many workarounds have been proposed to circumvent this challenge in the literature. For example, linear approximations have been widely used, e.g., the linear MMSE (LMMSE) estimator \cite{kay1993fundamentals}.
Logistic regression, kernel methods, and neural networks can be understood as nonlinear function approximation-based counterparts \cite{bishop2006pattern,murphy2012machine}. These are effective surrogates, but do not directly address the fundamental challenge in estimating high-dimensional joint probability from limited samples. Another commonly adopted route in statistical learning is to make use of problem-specific structural assumptions such as graphical models and prior distributions to help reduce the model complexity \cite{jordan1998learning,tipping2001sparse,blei2003latent}---but making these assumptions restricts the methods to specific applications.

Very recently, Kargas {\it et al.} proposed a new framework for {\it blindly} estimating the \textit{joint probability mass function} (PMF) of $N$ discrete finite-alphabet RVs \cite{kargas2017tensors}. Building upon a link between high-dimensional joint PMFs and low-rank tensors, the work in \cite{kargas2017tensors} shows that any $N$-dimensional joint PMF admits a naive Bayes {\it representation}. In addition, if the RVs are `reasonably dependent', the joint PMF can be recovered via jointly decomposing the three-dimensional marginal PMFs (which are third-order tensors). 


The work in \cite{kargas2017tensors} has shown promising results. However, many challenges remain. First, accurately estimating three-dimensional marginal PMFs is not a trivial task, which requires enumerating the co-occurrences of three RVs. This is particularly hard if the alphabets of the RVs are large and the data is sparse.
Second, tensor decomposition is a challenging optimization problem \cite{hillar2013most}. Factoring a large number of latent factor-coupled tensors as in \cite{kargas2017tensors} also gives rise to scalability issues. A natural question is: can we use pairwise marginals, i.e., joint PMFs of {\it two} RVs, to achieve the same goal of joint PMF recovery? Pairwise marginals are much easier to estimate relative to triple ones. In addition, the pairwise marginals give rise to probability mass matrices other than tensors---which may circumvent tensor decomposition in algorithm design and thus lead to more lightweight solutions.

Notably, a recent work in \cite{yeredor2019maximum} offers an {\it expectation maximization} (EM) algorithm that directly estimates the latent factors of the $N$th-order probabilistic tensor from the `raw data', instead of working with a large number of three-dimensional marginals. The EM algorithm is well-motivated, since it is associated with the {\it maximum likelihood estimator} (MLE) under the naive Bayes model in \cite{kargas2017tensors}. In addition, it admits simple and economical updates, and thus is much more scalable relative to the coupled tensor decomposition approach. However, {performance characterizations (e.g., estimation accuracy under finite samples and convergence properties) of the EM algorithm have been elusive}. 

\smallskip

\noindent
{\bf Contributions.}
In this work, we propose a new framework that utilizes pairwise marginals to recover the joint PMF of an arbitrary number of discrete finite-alphabet RVs.
We address both the theoretical aspects (e.g., recoverability) and the algorithmic aspects. Our contributions are as follows:

\noindent
$\bullet$ {\bf A Pairwise Marginal-based Framework.} On the theory side, we offer in-depth analyses for the recoverability of the joint PMF from pairwise marginals.
We show that the joint PMF can be recovered via jointly factoring the pairwise marginals under nonnegativity constraints in a latent factor-coupled manner.
Unlike \cite{kargas2017tensors} which leverages the uniqueness of tensor decomposition to establish recoverability, our analysis utilizes the model identifiability of \textit{nonnegative matrix factorization} (NMF)---leading to recoverability conditions that have a different flavor compared to those in \cite{kargas2017tensors}.
More importantly, accurately estimating pariwise marginals is much easier relative to the three-dimensional counterparts, in terms of the amount of data samples needed.

\noindent
$\bullet$ {\bf Provable Algorithms.} On the algorithm side, we propose to employ a block coordinate descent (BCD)-based algorithm for the formulated coupled NMF problem.
More importantly, we propose a pragmatic and easy-to-implement initialization approach, which is based on performing a Gram--Schmidt (GS)-like structured algorithm on a carefully constructed `virtual NMF' model. 
We show that this approach works provably well even when there is modeling noise---e.g., noise induced by finite-sample estimation of the pairwise marginals. This is in contrast to the method in \cite{kargas2017tensors}, whose recoverability guarantees are based on the assumption that the third-order marginals are perfectly accessible. 
We also show that the EM algorithm from \cite{yeredor2019maximum} can provably improve upon proper initialization (e.g., that given by the GS-like algorithm) towards the target latent model parameters that are needed to reconstruct the joint PMF.

\noindent
$\bullet$ {\bf Extensive Validation.} We test the proposed approach on a large variety of synthetic and real-world datasets. In particular, we validate our theory via performing classification on four different UCI datasets and three movie recommendation tasks.

\smallskip

Part of the paper was {presented at} the Asilomar signal processing conference 2020 \cite{ibrahim2020recover}. The journal version additionally includes new theoretical results, e.g., the recoverability analysis for the coupled NMF formulation, and a BCD-based CNMF algorithm, the detailed proofs for the Gram--Schmidt-like procedure, and the optimality analysis for the EM algorithm. More real data experiments are also presented.

\smallskip

\noindent
{\bf Notation.} We use $x,\bm x,\bm X,\underline{\X}$ to denote a scalar, vector, matrix and  tensor, respectively. $\mathbb{D}_{\rm KL}$ denotes the KL divergence. $\kappa(\bm X)$ denotes the condition number of the matrix $\bm X$ and is given by $\kappa(\bm X) = \frac{\sigma_{\max}(\bm X)}{\sigma_{\min}(\bm X)}$ where $\sigma_{\max}$ and $\sigma_{\min}$ are the largest and smallest singular values of $\bm X$, respectively. $\|\bm x\|_0$ denotes the `zero norm' of the vector $\bm x$, i.e., the number of nonzero elements in $\bm x$. $\bm X \ge \bm 0$ implies that all the elements of $\bm X$ are nonnegative. ${\rm cone}(\bm X)$ represents the conic hull formed by the columns of $\bm X$ and ${\rm conv}(\bm X)$ denotes the convex hull formed by the columns of $\bm X$. $\emptyset$ denotes the empty set. $\|\bm X\|_{2}$ represents the 2-norm of matrix $\bm X$ and $\|\bm X\|_{2} = \sigma_{\max}(\bm X)$.  $\|\bm X\|_{\rm F}$ denotes the Frobenius norm of $\bm X$. $\|\bm x\|_2$ and $\|\bm x\|_1$ denote $\ell_2$ and $\ell_1$ norm of vector $\bm x$, respectively. ${\rm vec}(\bm X)$ denotes the vectorization operator that concatenates the columns of $\bm X$. $^\T$ and $^\dag$ denote transpose and pseudo-inverse, respectively. $|{\cal C}|$ denotes the cardinality of set ${\cal C}$. ${\rm Diag}(\bm x)$ is a diagonal matrix that holds the entries of $\bm x$ as its diagonal elements. `$\odot$' denotes the Khatri-Rao product. {$[T]$ denotes $\{1,\dots,T\}$, where $T$ is an integer. $\bm e_n$ denotes the unit vector with $n$th element being one. $f(x) = O(g(x))$ (where $f(\cdot)$ and $g(\cdot)$ are nonnegative) means that there exist a positive constant $C$ and $x_0$ such that $f(x) \le Cg(x)$, $\forall x \ge x_0$. Similarly, $f(x) = \Omega(g(x))$ means that there exist a positive constant $c$ and $x_0$ such that $f(x) \ge cg(x)$, $\forall x \ge x_0$.}

\section{Background}
Consider a set of discrete and finite-alphabet RVs, i.e., $Z_1,\ldots,Z_N$. We will use ${\sf Pr}(i_1,\ldots,i_N)$ as the shorthand notation to represent $${\sf Pr}\left(Z_1=z_1^{(i_1)},\ldots,Z_N=z_N^{(i_N)}\right)$$ in the sequel, where $\{z_n^{(1)},\ldots,z_n^{(I_n)}\}$ denotes the alphabet of $Z_n$. 
Suppose that we do not have access to the joint PMF ${\sf Pr}(i_1,\ldots,i_N)$. 
Instead, we have access to the lower-dimensional marginal distributions, e.g., ${\sf Pr}(i_j,i_k,i_\ell)$ and ${\sf Pr}(i_j,i_k)$ for different $j,k,\ell$. Kargas \textit{et al.} asked the following research question \cite{kargas2017tensors}:

\vspace{.15cm}
\noindent
\boxed{
	\begin{minipage}{.98\linewidth}
		\noindent
		Can the joint distribution of $Z_1,\ldots,Z_N$ be recovered from the low-dimensional marginals without knowing any structural information about the underlying probabilistic model?
	\end{minipage}
}
\vspace{.15cm}

Note that estimating a high-dimensional joint distribution from lower-dimensional marginals is challenging, but not entirely impossible---if some structural information between the RVs (e.g., tree or Markov chain) is known; see discussions in \cite{kargas2017tensors}. 
However, learning the structural information from data {\it per se} is often a hard problem \cite{jordan1998learning}.

\subsection{Tensor Algebra} 
The work in \cite{kargas2017tensors} utilizes a connection between joint PMFs and low-rank tensors under the \textit{canonical polyadic decomposition} (CPD) model \cite{harshman1970foundations,sidiropoulos2017tensor}.  
If an $N$-way tensor $\underline{{\bm X}} \in \mathbb{R}^{I_1 \times I_2 \times \cdots \times I_N}$ has CP rank $F$, 
it can be written as:
\begin{equation}\label{eq:individ}
\underline{{\bm X}}(i_1,i_2,\ldots,i_N) = \sum_{f=1}^F \boldsymbol{\lambda}(f)  \prod_{n=1}^N {\bm A}_n(i_n,f),
\end{equation}
where  ${\bm A}_n \in \mathbb{R}^{I_n \times F}$ is called the mode-$n$ latent factor.
In the above, ${\bm \lambda}=[ \bm \lambda(1),\ldots,\bm \lambda(F)]^\T$ with $\|{\bm \lambda}\|_0=F$ is employed to `absorb' the norms of columns (so that the columns of $\A_n$ can be normalized w.r.t. a certain $\ell_q$ norm for $q\geq 1$). 
We use  the shorthand notation $$\underline{{\bm X}} = [\![ {\bm \lambda}, {\bm A}_1,\ldots,{\bm A}_N ]\!]$$ to denote the CPD in \eqref{eq:individ}.
Note that $F\gg I_n$ for $n=1,\ldots,N$ could happen. In fact, the tensor rank can reach $O(I^{N-1})$ if $I_1=\ldots=I_N=I$ \cite{sidiropoulos2017tensor}.

\subsection{Joint PMF Recovery: A Tensor Perspective} 
A key takeaway in \cite{kargas2017tensors} is that {\em any} joint PMF admits a naive Bayes (NB) model {\em representation}; i.e., any joint PMF can be generated from a latent variable model with just one hidden variable. It follows that the joint PMF of $\{Z_n\}_{n=1}^N$ can always be decomposed as
\begin{equation}\label{eq:pmf_latent_var}
\begin{aligned}
{\sf Pr}(i_1,i_2,\ldots,i_N) = \sum_{f=1}^F {\sf Pr}(f) \prod_{n=1}^N {\sf Pr}(i_n | f)
\end{aligned},
\end{equation}
where ${\sf Pr}(f) := {\sf Pr}(H = f) $ is the prior distribution of a latent variable $H$ and ${\sf Pr}(i_n | f) := {\sf Pr}( Z_n = z_n^{(i_n)} | H = f) $ are the conditional distributions. 
Remarkably, such representation is {\em universal} if one allows the hidden variable to have a sufficiently rich finite alphabet:
\begin{theorem}\label{prop:F_upperbound}\cite{kargas2017tensors}
	The maximum $F$ needed in order to describe an arbitrary PMF using a naive Bayes model is bounded by 
	$F\leq \underset{k}{\min}\prod_{\substack{n=1 \\ n \neq k}}^N I_n.$
\end{theorem}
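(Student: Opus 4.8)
The plan is to prove the bound \emph{constructively}: for \emph{any} joint PMF and for \emph{any} fixed mode $k$, I will exhibit a naive Bayes model of the form \eqref{eq:pmf_latent_var} that uses exactly $F=\prod_{n\neq k} I_n$ latent states, and then take $k$ to be a mode with the largest alphabet, which minimizes the product. As a preliminary observation, note that a naive Bayes representation with $F$ hidden states is nothing but a nonnegative CP decomposition $[\![\,\bm\lambda,\bm A_1,\ldots,\bm A_N\,]\!]$ of the joint PMF tensor in which $\bm\lambda\ge \bm 0$ sums to one (its $f$-th entry being ${\sf Pr}(f)$) and each column of every $\bm A_n\ge \bm 0$ sums to one (its $i_n$-th entry being ${\sf Pr}(i_n\mid f)$). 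Hence ``the maximum $F$ needed'' is just the worst-case size of such a stochastic decomposition, and it suffices to write down one valid decomposition of the claimed size.

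For the construction, let $k^\star$ be a mode with $I_{k^\star}=\max_n I_n$; to lighten notation, relabel the RVs so that $k^\star=N$, and put $F_0:=\prod_{n=1}^{N-1} I_n$. Take the latent variable $H$ to have alphabet identified with the set of $(N-1)$-tuples $(j_1,\ldots,j_{N-1})$, so that $H$ has $F_0$ states. For the state $f\leftrightarrow(j_1,\ldots,j_{N-1})$, set ${\sf Pr}(f):={\sf Pr}(j_1,\ldots,j_{N-1})$ (the $(N-1)$-dimensional marginal), set ${\sf Pr}(i_n\mid f)$ equal to $1$ if $i_n=j_n$ and $0$ otherwise for $n=1,\ldots,N-1$ (i.e., the column of $\bm A_n$ is the $j_n$-th canonical basis vector), and set ${\sf Pr}(i_N\mid f):={\sf Pr}(i_N\mid j_1,\ldots,j_{N-1})$. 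All of these are legitimate probability vectors, so this is a bona fide naive Bayes model with $F\le F_0$. Substituting into \eqref{eq:pmf_latent_var}, the product of the $0/1$ selectors over $n=1,\ldots,N-1$ annihilates every summand except the one with $j_n=i_n$ for all $n<N$, leaving ${\sf Pr}(i_1,\ldots,i_{N-1})\,{\sf Pr}(i_N\mid i_1,\ldots,i_{N-1})={\sf Pr}(i_1,\ldots,i_N)$ by the chain rule, which is exactly the desired left-hand side. Since $k^\star$ maximizes $I_n$, we get $F_0=\prod_{n\neq k^\star} I_n=\min_k\prod_{n\neq k} I_n$, which is the claimed bound.

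The only point requiring care --- and the closest thing to an obstacle --- is the degenerate case in which a marginal ${\sf Pr}(j_1,\ldots,j_{N-1})$ vanishes, so that ${\sf Pr}(i_N\mid j_1,\ldots,j_{N-1})$ is a $0/0$ expression. This is harmless: for such $f$ one may define ${\sf Pr}(i_N\mid f)$ arbitrarily (say, uniformly), since the prefactor ${\sf Pr}(f)=0$ annihilates that term anyway; equivalently, one can simply delete these states, which only decreases $F$. In either case the bound is preserved. Beyond this bookkeeping, the argument needs no optimization, no uniqueness of the CPD, and no nonnegative-rank estimates --- it is purely the probability chain rule composed with a deterministic encoding of all-but-one of the RVs into the hidden variable --- so I do not anticipate any genuine difficulty.
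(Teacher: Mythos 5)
Your construction is correct, and it is essentially the standard argument behind this bound (the one the paper, via \cite{kargas2017tensors}, alludes to when it notes that any nonnegative tensor admits a nonnegative polyadic decomposition): absorbing all but the largest-alphabet mode into the latent variable is exactly the fiber-wise rank-one decomposition of the joint PMF tensor along mode $k^\star$, rewritten via the chain rule, and your handling of zero-probability marginals is the right bookkeeping. Nothing further is needed.
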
 
To see this, one can represent any joint PMF as an $N$th-order tensor by letting $\tX(i_1,\ldots,i_N)={\sf Pr}(i_1,\ldots,i_N)$ and  ${\bm A}_n(i_n,f) = {\sf Pr}(i_n | f),~\boldsymbol{\lambda}(f) = {\sf Pr}(f).$
Since any nonnegative tensor admits a nonnegative polyadic decomposition with nonnegative latent factors, any joint PMF admits a naive Bayes representation with finite latent alphabet.


A relevant question is: what does `low tensor rank' mean for the joint PMF tensors? In \cite{kargas2017tensors}, the authors argue that reasonably dependent RVs lead to a joint PMF tensor with relatively small $F$---which is normally the case in statistical learning and inference. For example, classification is concerned with inferring $Z_N$ (label) from $Z_1,\ldots,Z_{N-1}$ (features), and the dependence amongs the RVs are leveraged for building up effective classifiers.

Another key observation in \cite{kargas2017tensors} is that the marginal distribution of any subset of RVs is an induced CPD model. From the law of total probability,
if one marginalize the joint PMF down to ${\sf Pr}(i_j,i_k,i_\ell)$ $\forall j,k,\ell \in [N],$ the expression is
$${\sf Pr}(i_j,i_k,i_\ell) = \sum_{f=1}^{F}   {\sf Pr}(f) {\sf Pr}(i_j | f) {\sf Pr}(i_k | f) {\sf Pr}(i_\ell | f).$$
Let $\underline{\bm X}_{jk\ell}(i_j,i_k,i_\ell)={\sf Pr}(i_j,i_k,i_\ell)$. Then, we have $\underline{{\bm X}}_{jk\ell} = [\![ \boldsymbol{\lambda},{\bm A}_j,{\bm A}_k,{\bm A}_\ell ]\!]$, 
where $\{{\bm A}_n\}_{n=1}^N$ and ${\bm \lambda}$ are defined as before. Another important observation here is that
{\it the marginal PMFs $\underline{\bm X}_{jk\ell}$'s and the joint PMF $\underline{\bm X}$ share the same $\bm A_{j}$, $\A_{k}$, $\bm A_{\ell}$ and $\bm \lambda$}.
It is readily seen that if the $\underline{\bm X}_{jk\ell}$'s admit essentially unique CPD, then $\bm A_n$'s and $\bm \lambda$ can be identified from the marginals---and thus ${\sf Pr}(i_1,\ldots,i_N)$ can be recovered via identifying the latent factors. 

The tensor-based approach in \cite{kargas2017tensors} has shown that recovering high-dimensional joint PMF from low-dimensional marginals is possible. However, a couple of major hurdles exist for applying it in practice. First, estimating three-dimensional marginals ${\sf Pr}(i_j,i_k,i_\ell)$ is still not easy, since one needs many co-realizations of three RVs. For sparse datasets, this is particularly hard. Second, tensor decomposition is a hard computation problem \cite{hillar2013most}. The coupled tensor decomposition (CTD) algorithm in \cite{kargas2017tensors} that involves many tensors is even more challenging in practice.

\subsection{Maximum Likelihood Estimation and EM} \label{sec:mle_em}

A recent work in \cite{yeredor2019maximum} offers an alternative for estimating ${\sf Pr}(f)$ and ${\sf Pr}(i_n|f)$.
Instead of working with third-order statistics, the model estimation problem is formulated as a maximum likelihood (ML) estimation problem in \cite{yeredor2019maximum}, and an EM algorithm is employed to handle the ML estimator (MLE).
The upshot of this approach is that the EM updates are much more scalable relative to the CTD updates in \cite{kargas2017tensors}. 
However, there are two challenges of applying EM. First, {the recoverability guarantees of the joint PMF using the MLE framework is unclear under finite number of observations}.
Second, as the authors of \cite{yeredor2019maximum} have noticed, the EM algorithm sometimes does not converge to a good solution if randomly initialized. 
This is perhaps due to the nonconvex nature of the ML estimation problem. 

\section{Proposed Approach}
To advance the task of joint PMF recovery from marginal distributions, we propose a pairwise marginal-based approach.
Note that the pairwise marginal is given by
\begin{align*}
& {\sf Pr}(i_j,i_k)=\sum_{f=1}^F{\sf Pr}(f){\sf Pr}(i_j|f){\sf Pr}(i_k|f) \\&  \Longleftrightarrow
\X_{jk} = \A_j{\bm D}(\bm \lambda)\A_k^\T,~\X_{jk}(i_j,i_k)={\sf Pr}(i_j,i_k),
\end{align*}    
where $\bm D(\bm \lambda) = {\rm Diag}(\bm \lambda)$. 

In practice, the pairwise marginals ${\bm X}_{jk}$'s are estimated from realizations of the joint PMF. Consider a set of realizations (data samples) of ${\sf Pr}(Z_1,\ldots,Z_N)$, denoted as $\{\bm d_{s} \in\mathbb{R}^{N} \}_{s=1}^S$. 
{The following sample averaging estimator can be employed:
 \begin{align*}\label{eq:Xij_est}
 \widehat{\bm X}_{jk}(i_j,i_k) &= \frac{1}{|\mathcal{S}_{jk}|}\sum_{s\in\mathcal{S}_{jk}} \mathbb{I}\left[\bm d_s(j)=z_j^{(i_j)},\bm d_s(k)=z_k^{(i_k)}\right],
 \end{align*}
 where $\bm d_s(n)$ denotes the realization of $Z_n$ in the $s$-th sample, $\mathbb{I}[E]=1$ if the event $E$ happens and $\mathbb{I}[E]=0$ otherwise,
 $S$ is the number of co-realizations of $Z_1,\ldots,Z_N$ (i.e., the number of `samples'), and
 $\mathcal{S}_{jk} \subseteq [S]$ denotes the set of indices of the data samples in which the realizations of both $Z_j$ and $Z_k$ are observed.}

{ 
Notably, with the same amount of data (i.e., with a fixed $S$), the second-order statistics can be estimated to a much higher accuracy, compared to the third-order ones \cite{han2015minimax}. 
This is particularly articulated when the dataset is sparse.
To see this, assume that every RV has an alphabet size $I$ and is observed with probability {$p \in (0,1]$} (which means that every entry of $\bm d_s$ is observed with probability $p$---and $\bm d_s$ is very sparse when $p$ is small). Then, on average $Sp^2$ and $Sp^3$ co-realizations of every pair and triple of RVs, respectively, can be observed in the samples.
Under such cases, Theorem~1 in \cite{han2015minimax} shows that the average estimation error (over all possible joint distributions) for $\underline{\bm X}_{jk\ell}$ in terms of $\ell_1$ distance is $O(\sqrt{\nicefrac{I^3}{Sp^3}})$,
while that for $\X_{jk}$ is $O( \sqrt{\nicefrac{I^2}{Sp^2}})$. One can see that the difference is particularly significant when $p$ is small---i.e., when the dataset is considerably sparse.}

\subsection{Challenges}
Despite the sample complexity appeal of using pairwise marginals, the {\it recoverability} of the joint PMF under such settings is nontrivial to establish. The main analytical tool for proving recoverability in \cite{kargas2017tensors} is the essential uniqueness of tensor decomposition---i.e., the latent factors of low-rank tensors are identifiable up to resolvable ambiguities under mild conditions. 
However, pairwise distributions such as $\bm X_{jk}= \A_j{\bm D}(\bm \lambda)\A_k^\T$ are matrices, and low-rank matrix decomposition is in general \textit{nonunique}. 
A natural thought to handle the identifiability problem would be employing certain NMF tools \cite{fu2018nonnegative,gillis2014and}, since the latent factors are all nonnegative, per their physical interpretations.
However, the identifiability of NMF models holds only if $F\leq \min\{I_j,I_k\}$ (and preferably $F\ll\min\{I_j,I_k\}$).
The pairs $\bm X_{jk}= \A_j{\bm D}(\bm \lambda)\A_k^\T\in\mathbb{R}^{I_j\times I_k}$ inherit the inner dimension $F$ (i.e., the column dimension of $\bm A_j$) from the joint PMF of all the variables, which is the tensor rank of an $N$th-order tensor. As mentioned, the tensor rank $F$ could be much larger than the $I_j$'s. 
Hence, one may not directly use the available NMF uniqueness results on individual $\X_{jk}$'s to argue for joint PMF recoverability.
Nonetheless, as we will show shortly, NMF identifiability can be applied onto carefully constructed {\it coupled} NMF models to establish recoverability.

\subsection{Preliminaries: NMF and Model Identifiablility}
To see how we approach these challenges, let us first briefly introduce some pertinent preliminaries of NMF.
Consider a nonnegative data matrix $\X\in\mathbb{R}^{L\times K}$. Assume that the matrix is generated by $\X=\W\H^\T$, where $\W\in\mathbb{R}^{L\times F}$ and $\H \in \mathbb{R}^{K\times F}$. Also assume that $\W\geq \bm 0$ and $\H\geq \bm 0$ and $F\leq \min\{L,K\}$. NMF aims to factor $\X$ for identifying $\W$ and $\H$ up to certain trivial ambiguities. 

A number of conditions on $\W$ and/or $\H$ allow us to establish identifiability of the latent factors. The first one is the so-called separability condition:
\begin{definition} (Separability) \label{def:sep} \cite{donoho2003does}
	If $\bm H\geq \bm 0$, and  $\bm \varLambda =\{ l_1,\ldots,l_F \}$ such that $\bm H(\bm \varLambda,:)=\bm \Sigma$ holds, where $\bm \Sigma ={\rm Diag}(\alpha_1,\ldots,\alpha_F)$ and $\alpha_f>0$, then, $\H$ satisfies the {\it separability condition}. When $\bm \varLambda =\{ l_1,\ldots,l_F \}$ satisfies $\|\H(l_f,:)-\bm e_f\|_2\leq \varepsilon$ for $f=1,\ldots,F$, $\bm H$ is called $\varepsilon$-separable.
\end{definition}
If one of $\W$ and $\H$ satisfies the separability condition and the other has full column rank, there exist polynomial-time algorithms that can provably identify $\W$ and $\H$ up to scaling and permutation ambiguities. 
Among these algorithms, a number of Gram--Schmidt-like lightweight algorithms exist; see, e.g., \cite{Gillis2012,fu2014self,arora2012practical,VMAX,MC01}.
A more relaxed condition is as follows:
\begin{definition}\label{def:suff}  (Sufficiently Scattered) \cite{huang2014non,fu2015blind,fu2016robust}
	Assume that $\bm H\geq \bm 0$ and 
	${\cal C}\subseteq {\rm cone}\{\bm H^\T\}$
	where 	$\mathcal{C} = \{\x \in\mathbb{R}^F~|~ \x^\T\mathbf{1} \geq \sqrt{F-1}\|\x\|_2\}$. 
	In addition, assume that ${\sf cone}\{\H^\T\} \not\subseteq {\sf cone}\{\bm Q\} $ for any orthonormal {$\Q\in\mathbb{R}^{F\times F}$} except for the permutation matrices.
	Then, $\bm H$ is called {\it sufficiently scattered}.
\end{definition}
The sufficiently scattered condition subsumes the separability condition as a special case. An important result is that if $\W$ and $\H$ are both sufficiently scattered, then the model $\X=\W\H^\T$ is unique up to scaling and permutation ambiguities \cite{huang2014non}. The two conditions are illustrated in Fig.~\ref{fig:cone}. 
More detailed discussions and illustrations can be seen in \cite{fu2018nonnegative,fu2018identifiability}.

In principle, the `taller' $\bm H$ is, the chance that $\bm H$ (approximately) attains the separability or sufficiently scattered condition is higher, if $\H$ is generated following a certain continuous distribution. This intuition was formalized in \cite{ibrahim2019crowdsourcing}, under the assumption that $\H(l,:)$'s are drawn from the probability simplex uniformly at random and then positively scaled. The work in \cite{ibrahim2019crowdsourcing} also shows that the sufficiently scattered condition is easier to satisfy relative to separability, under the same generative model; see Lemma~\ref{lem:Lm} in Appendix~\ref{app:lemmata}.


\begin{figure}[t!]
	\centering
	\includegraphics[scale= 0.15]{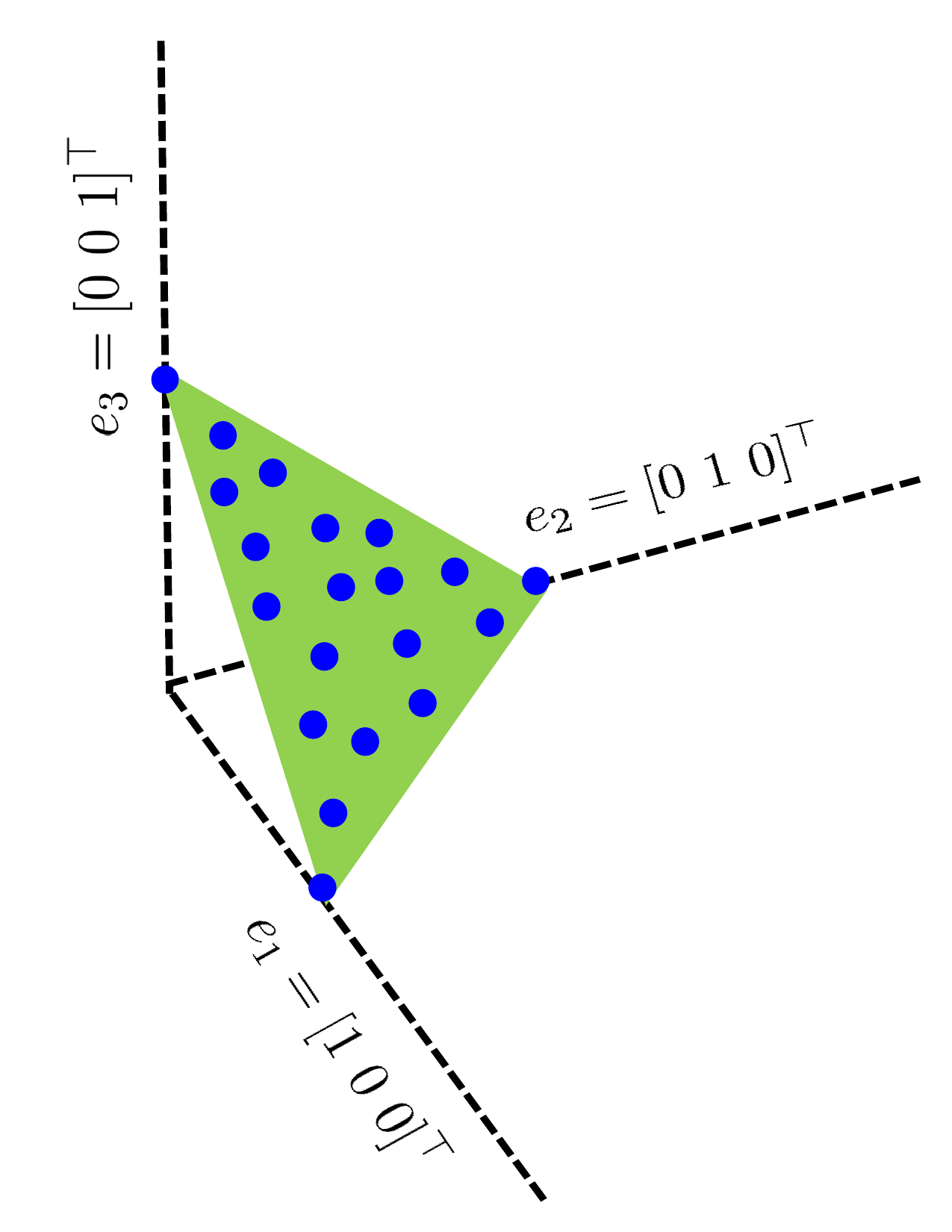}
	\includegraphics[scale= 0.15]{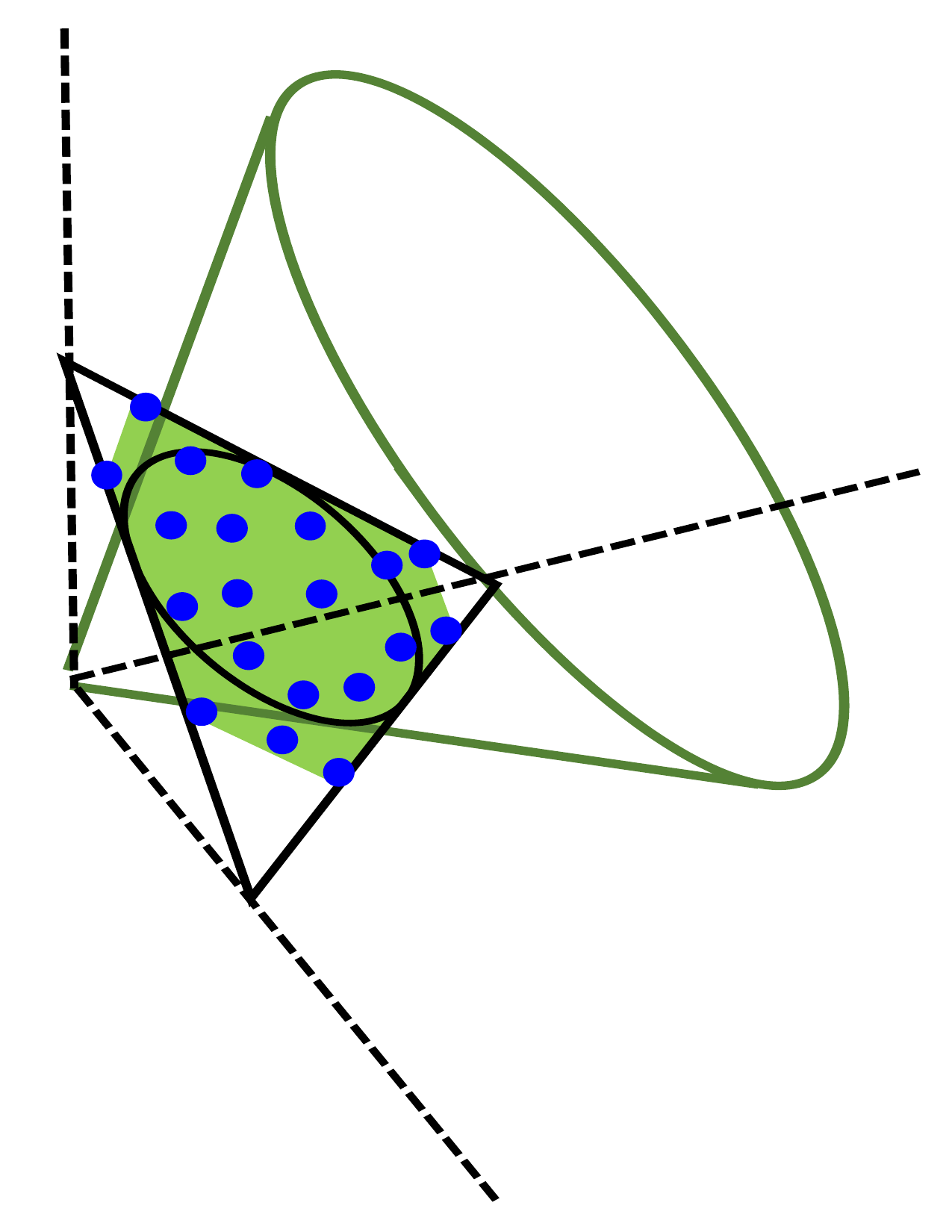}
	\put(-24,100){$\mathcal{C}$}
	\caption{{Left: an $\bm H \in \mathbb{R}^{20 \times 3}$ that satisfies the separability condition. Right: an $\H \in \mathbb{R}^{20 \times 3}$ that satisfies the sufficiently scattered condition.} The triangle corresponds to probability simplex $\Delta_3$; and the dots correspond to rows of $\H$ rescaled to have unit $\ell_1$-norm. The inner circle on $\Delta_3$ corresponds to the intersection of the second-order cone ${\cal C}$ and $\Delta_3$; and the shaded region is the intersection of $\cone{\bm H^\T}$ and $\Delta_3$.}
	\label{fig:cone}
	\vspace{-.35cm}
\end{figure}

\subsection{Recoverability Analysis}
Our goal is to identify $\A_n$ and $\bm \lambda$ from the available pairwise marginals $\X_{jk}=\A_j\bm D(\bm \lambda)\A_k^\T$'s, so that the joint PMF can be reconstructed following \eqref{eq:pmf_latent_var}.
Note that applying existing NMF techniques onto $\X_{jk}$ is unlikely to work, since when $F> \min \{I_j,I_k\}$, `fat' $\A_j\in\mathbb{R}^{I_j\times F}$'s can never be separable or sufficiently scattered. To circumvent this, our idea is as follows.
Consider a splitting of the indices of the $N$ variables, i.e., ${\cal S}_1=\{\ell_1,\ldots,\ell_M\}$ and ${\cal S}_2=\{\ell_{M+1},\ldots,\ell_{N}\}$ such that
${\cal S}_1\cup{\cal S}_2=\{1,\ldots,N\}$ and ${\cal S}_1\cap {\cal S}_2=\emptyset.$
Then, we construct the following matrix:
\begin{equation}\label{eq:constructedX}
\begin{aligned}
\widetilde{\bm X}=& \begin{bmatrix} \bm X_{\ell_1\ell_{M+1}}& \ldots & {\bm X}_{\ell_1\ell_N} \\ \vdots & \vdots &\vdots \\ {\bm X}_{\ell_M\ell_{M+1}} & \ldots & {\bm X}_{\ell_M\ell_N} \end{bmatrix} \\
= &\underbrace{\begin{bmatrix}
	\bm A_{\ell_1}\\ \vdots\\ \bm A_{\ell_M}
	\end{bmatrix}}_{\bm W}\underbrace{{\bm D}({\bm \lambda })[{\bm A}_{\ell_{M+1}}^\T,\ldots,{\bm A}_{\ell_{N}}^\T]}_{{\bm H}^\T}.
\end{aligned}
\end{equation} 
Note that in this `virtual NMF' model, $\W$ and $\H$ matrices have sizes of $MI\times F$ and $(N-M)I\times F$, respectively, if $I_1=\ldots=I_N=I$. 
The idea is to construct $\widetilde{\bm X}$ such that $F\leq\min\{MI, (N-M)I \}$ so that $\W$ and $\H$ may satisfy the separability or sufficiently scattered condition---and thus $\A_1,\ldots,\A_N$
can be identified via factoring $\widetilde{\X}$.
This is a straightforward application of the NMF identifiability results introduced in the previous subsection.

The challenge lies in finding a suitable splitting of ${\cal S}_1,{\cal S}_2$ such that $\W$ and $\H$ are sufficiently scattered and is highly nontrivial. 
Note that even if $\W$ and $\H$ are given, checking if these matrices are sufficiently scattered is hard \cite{huang2014non}.
It is also impractical to try every possible index splitting. 
To address this challenge, we consider the following factorization problem:
\begin{subequations}\label{eq:coupled_mat}
	\begin{align}
	{\rm find}&~\A_n,~n=1,\ldots,N,~\bm \lambda\\
	{\rm subject~to}&~\X_{jk}=\A_j\bm D(\bm \lambda)\A_k^\T,~\forall  j,k \in{\bm \Omega} \label{eq:fic_feas}\\
	&~\bm 1^\T\bm A_j=\bm 1^\T,~\bm A_j\geq \bm 0\\
	&~\bm 1^\T\bm \lambda =1,~\bm \lambda\geq{\bm 0},
	\end{align}
\end{subequations}
where $\bm \Omega$ contains the index set of $(j,k)$'s such that $j<k$ and the joint PMF ${\sf Pr}(i_j,i_k)$ is accessible. 
Note that the above can be understood as a latent factor-coupled NMF problem.
Regarding the identifiability of the conditional PMFs and the latent prior, we have the following result:
\begin{theorem}[Recoverability]\label{prop:prop2}
	Assume that that ${\sf Pr}(i_j,i_k)$'s for $j,k\in{\bm \Omega}$ are available and that ${\sf Pr}(f)\neq 0$ for $f=1,\ldots,F$.
	Suppose that there exists ${\cal S}_1=\{\ell_1,\ldots,\ell_M\}$ and ${\cal S}_2=\{\ell_{M+1},\ldots,\ell_Q\}$ such that $Q\leq N$
	and $ {\cal S}_1\cup{\cal S}_2\subseteq \{1,\ldots,N\},~{\cal S}_1\cap {\cal S}_2=\emptyset. $
	Also assume the following conditions hold:\\
	\textbf{(i)} the matrices $[\A_{\ell_1}^\T,\ldots,\A_{\ell_M}^\T]^\T$ and $[\A_{\ell_{M+1}}^\T,\ldots,\A_{\ell_{Q}}^\T]^\T$ are \textit{sufficiently scattered};\\
	\textbf{(ii) }all pairwise marginal distributions ${\sf Pr}(i_j,i_k)$'s for $j\in{\cal S}_1$ and $k\in{\cal S}_2$ are available; \\
	\textbf{(iii)} every $T$-concatenation of $\A_n$'s, i.e., $[\A_{n_1}^\T,\ldots,\A_{n_T}^\T]^\T$, is a full column rank matrix, if $I_{n_1}+\ldots+I_{n_T}\geq F$;\\
	\textbf{(iv) }for every $j\notin{\cal S}_1\cup{\cal S}_2$ there exists a set of $r_t\in{\cal S}_1\cup{\cal S}_2$ for $t=1,\ldots,T$ such that ${\sf Pr}(i_j,i_{r_t})$ or ${\sf Pr}(i_{r_t},i_j)$ are available.
	
	Then, solving Problem~\eqref{eq:coupled_mat} recovers ${\sf Pr}(i_j|f)$ and ${\sf Pr}(f)$ for $j=1,\ldots,N, ~f=1,\dots, F$, thereby the joint PMF ${\sf Pr}(i_1,\ldots,i_N)$.
\end{theorem}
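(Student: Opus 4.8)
The plan is to show that, under (i)--(iv), the true parameters $(\{\A_n\}_{n=1}^N,\boldsymbol{\lambda})$ constitute the unique feasible point of Problem~\eqref{eq:coupled_mat} up to one common permutation of the $F$ latent states, which leaves the joint PMF in \eqref{eq:pmf_latent_var} unchanged. I would organize the argument into three stages: (a) identify $\{\A_\ell\}_{\ell\in\mathcal{S}_1\cup\mathcal{S}_2}$ and $\boldsymbol{\lambda}$ via the virtual NMF model $\widetilde{\bm X}$ of \eqref{eq:constructedX}; (b) identify the remaining $\A_j$'s by a linear inversion; (c) reassemble the joint PMF.

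\textbf{Stage (a).} I would take an arbitrary feasible point $(\{\A'_n\},\boldsymbol{\lambda}')$. By (ii), every block $\X_{\ell_j\ell_k}$ with $\ell_j\in\mathcal{S}_1,\ell_k\in\mathcal{S}_2$ appears in constraint \eqref{eq:fic_feas}, so stacking them as in \eqref{eq:constructedX} gives $\widetilde{\bm X}=\W'\H'^\T$ with $\W'=[(\A'_{\ell_1})^\T,\ldots,(\A'_{\ell_M})^\T]^\T\ge\bm 0$ and $\H'=[(\A'_{\ell_{M+1}})^\T,\ldots,(\A'_{\ell_Q})^\T]^\T\D(\boldsymbol{\lambda}')\ge\bm 0$; the same identity with the ground-truth factors gives $\widetilde{\bm X}=\W\H^\T$. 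Since ${\sf Pr}(f)\neq 0$, $\D(\boldsymbol{\lambda})$ is invertible, and by (i) the ground-truth $\W$ and $[\A_{\ell_{M+1}}^\T,\ldots,\A_{\ell_Q}^\T]^\T$ are sufficiently scattered---hence full column rank---so $\rank(\widetilde{\bm X})=F$ and any feasible $\W',\H'$ must have full column rank $F$ as well. I would then invoke the NMF identifiability result of \cite{huang2014non} on $\widetilde{\bm X}$ (distributing $\D(\boldsymbol{\lambda})$ between the two factors so that both meet the sufficiently scattered condition) to conclude $\W'=\W\bm{\Pi}\bm{\Lambda}$ and $\H'=\H\bm{\Pi}\bm{\Lambda}^{-1}$ for some permutation $\bm{\Pi}$ and positive diagonal $\bm{\Lambda}$. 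The constraint $\bm 1^\T\A'_j=\bm 1^\T$ makes each column of $\W'$ have the same entrywise sum ($=M$) as the corresponding column of $\W$, forcing $\bm{\Lambda}=\bm I$; hence $\A'_{\ell_j}=\A_{\ell_j}\bm{\Pi}$ for $\ell_j\in\mathcal{S}_1$, and matching the two expressions for $\H'$ while using $\boldsymbol{\lambda}>\bm 0$ and $\bm 1^\T\boldsymbol{\lambda}'=1$ yields $\A'_{\ell_k}=\A_{\ell_k}\bm{\Pi}$ for $\ell_k\in\mathcal{S}_2$ and $\boldsymbol{\lambda}'=\bm{\Pi}^\T\boldsymbol{\lambda}$.

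\textbf{Stage (b).} For a fixed $j\notin\mathcal{S}_1\cup\mathcal{S}_2$, condition (iv) supplies indices $r_1,\ldots,r_T\in\mathcal{S}_1\cup\mathcal{S}_2$, which I take to satisfy $I_{r_1}+\cdots+I_{r_T}\ge F$, with ${\sf Pr}(i_j,i_{r_t})$ or ${\sf Pr}(i_{r_t},i_j)$ available. Transposing where necessary and invoking feasibility, $[\X_{jr_1},\ldots,\X_{jr_T}]=\A'_j\D(\boldsymbol{\lambda}')\B^\T$ with $\B=[(\A'_{r_1})^\T,\ldots,(\A'_{r_T})^\T]^\T$ already known from Stage~(a). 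By (iii), $\B$ has full column rank $F$, so $\B^\T$ has a right inverse and, since $\D(\boldsymbol{\lambda}')$ is invertible, $\A'_j=[\X_{jr_1},\ldots,\X_{jr_T}](\B^\T)^{\dag}\D(\boldsymbol{\lambda}')^{-1}$ is uniquely determined; the identical computation for the ground truth gives $\A'_j=\A_j\bm{\Pi}$. Hence $\A'_n=\A_n\bm{\Pi}$ for all $n=1,\ldots,N$ and $\boldsymbol{\lambda}'=\bm{\Pi}^\T\boldsymbol{\lambda}$.

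\textbf{Stage (c).} Substituting $\A'_n=\A_n\bm{\Pi}$ and $\boldsymbol{\lambda}'=\bm{\Pi}^\T\boldsymbol{\lambda}$ into \eqref{eq:pmf_latent_var}, the permutation merely relabels the summation index $f$, so the reconstructed joint PMF equals ${\sf Pr}(i_1,\ldots,i_N)$; identifying $\A'_n(i_n,f)={\sf Pr}(i_n\mid f)$ and $\boldsymbol{\lambda}'(f)={\sf Pr}(f)$ up to this relabeling finishes the proof. I expect the main obstacle to be Stage~(a)---namely, upgrading the NMF uniqueness result (which is intrinsically only up to permutation and positive diagonal scaling, and is sensitive to how the diagonal $\D(\boldsymbol{\lambda})$ is distributed between the two factors) to an \emph{exact} identification of the column-stochastic factors and the prior, while simultaneously ruling out rank-deficient spurious feasible points of \eqref{eq:coupled_mat}; this is exactly where the normalization constraints, the hypothesis ${\sf Pr}(f)\neq 0$, and condition (iii) are used.
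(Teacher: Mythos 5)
Your proposal is correct and follows essentially the same route as the paper's proof: identify the factors indexed by ${\cal S}_1\cup{\cal S}_2$ via the sufficiently-scattered NMF identifiability of the virtual matrix $\widetilde{\bm X}$ (with the sum-to-one constraints absorbing the diagonal scaling ambiguity), then recover each remaining $\A_j$ by inverting against a full-column-rank $T$-concatenation from conditions (iii)--(iv), and finally reassemble the joint PMF, noting that the common permutation is immaterial. The only cosmetic deviation is that you pin down $\bm\lambda'=\bm\Pi^\T\bm\lambda$ by matching column sums of the two expressions for $\H'$, whereas the paper does it via the pseudo-inverse of the Khatri--Rao product $\bm Z_2\odot\bm Z_1$ applied to ${\rm vec}(\widetilde{\bm X})$; both are valid.
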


The proof is relegated to Appendix~\ref{app:prop1}, which is reminiscent of identifiability of a similar coupled NMF problem that arises in crowdsourced data labeling \cite{ibrahim2019crowdsourcing}, with proper modifications to handle the cases where $F>I_n$ (since the crowdsourcing model always has $F=I_n$, which is a simpler case). 
In particular, conditions (iii) and (iv) are employed to accommodate the more challenging cases.
The criterion spares one the effort for first finding ${\cal S}_1$ and ${\cal S}_2$ and then constructing the matrix $\widetilde{\X}$. Instead, under Theorem~\ref{prop:prop2}, the goal is simply finding an approximate solution of \eqref{eq:coupled_mat}---which is much more approachable in practice.

\begin{remark}
{ 
Under our pairwise marginal based approach, a key enabler for recovering the joint PMF is the {\it identifiability} of $\A_n$'s up to an {\it identical} column permutation ambiguity.
At first glance, the identifiability of the latent factors seems to be irrelevant since what we truly care {about} is the ambient joint PMF tensor $\tX\in\mathbb{R}^{I_1\times\ldots\times I_N}$.
However, since the tensor is heavily marginalized (i.e.,     one only observes pairwise marginals), the identifiability of $\bm A_n$'s is leveraged on to assemble the joint PMF following the model in \eqref{eq:pmf_latent_var} under our framework. 
}
\end{remark}

\section{Algorithms and Performance Analyses}
In this section, we develop algorithms under the coupled NMF framework and analyze performance under realistic conditions, e.g., cases with finite samples and missing values.
\subsection{BCD for Problem~\eqref{eq:coupled_mat}} \label{sec:alg1}
To handle Problem \eqref{eq:coupled_mat}, we employ the standard procedure based on \textit{block coordinate descent} (BCD) for handling coupled tensor/matrix decomposition algorithms in the literature \cite{kargas2017tensors,kargas2019learning,ibrahim2019crowdsourcing}.
To be specific, we cyclically minimize the constrained optimization problem w.r.t. $\A_k$, when fixing $\A_j$ for all $j\neq k$ and $\bm \lambda$. Consequently, the subproblem is as follows:
\begin{subequations}\label{eq:sub}
	\begin{align}
	\minimize_{\A_k}&~\sum_{j\in{\bm \Omega}_k}\mathbb{D}\left( \X_{jk},\A_j\bm D(\bm \lambda)\A_k^\T\right)\\
	{\rm subject~to}&~\bm 1^\T\bm A_k=\bm 1^\T,~\bm A_k\geq \bm 0,
	\end{align}
\end{subequations}
where ${\bm \Omega}_k$ is the index set of $j$ such that ${\sf Pr}(i_j,i_k)$ is available and $\mathbb{D}(\cdot,\cdot)$ is a certain `distance' measure. Under a number of { $\mathbb{D}(\cdot,\cdot)$-functions}, e.g., the Euclidean distance or the Kullback--Leibler (KL) {divergence}, the subproblems are convex and thus can be solved via off-the-shelf algorithms.
In this work, we employ the KL divergence as the {fitting} measure since it is natural for measuring {the similarity of} PMFs.
Consequently, the subproblems are
\begin{subequations}\label{eq:subkl}
	\begin{align}
	\A_k &\leftarrow\arg\min_{ \begin{subarray}{c} \bm 1^\T\bm A_k=\bm 1^\T\\\bm A_k\geq \bm 0\end{subarray}}~\sum_{j\in{\bm \Omega}_k}\mathbb{D}_{\sf KL}\left( \X_{jk},\A_j\bm D(\bm \lambda)\A_k^\T\right), \label{eq:Aupdate}\\
	\bm \lambda &\leftarrow\arg\min_{ \begin{subarray}{c} \bm 1^\T\bm \lambda=1\\ \bm \lambda\geq \bm 0\end{subarray}}~\sum_{j,k\in{\bm \Omega}}\mathbb{D}_{\sf KL}\left( \X_{jk},\A_j\bm D(\bm \lambda)\A_k^\T\right). \label{eq:lambda_update}
	\end{align}
\end{subequations}
We employ the {\it mirror descent} algorithm to solve the two subproblems. Mirror descent admits closed-form updates for handling KL-divergence based linear model fitting under simplex constraints (which is also known as exponential gradient in the literature); see, e.g., \cite{arora2012practical}. The algorithm {CNMF-OPT} is described in \eqref{algo:CNMF-OPT}.
\begin{algorithm}[h!]
	\footnotesize
	\SetKwInOut{Input}{input}
	\SetKwInOut{Output}{output}
	\SetKwRepeat{Repeat}{repeat}{until}
	\Input{data samples $\{\bm d_s\}_{s=1}^S$}
	
	 estimate second order statistics $\widehat{\X}_{jk }$;
	 
	 get initial estimates $\{\widehat{\A}_n\}_{n=1}^N$ and $\widehat{\bm \lambda}$ (e.g., using {CNMF-SPA});
		\Repeat{some stopping criterion is reached}{
		
		\For{$n=1$ {\bf to} $N$}{
		
		  update $\widehat{\A}_n \leftarrow \eqref{eq:Aupdate}$ using mirror descent;
		}
		
		 update $\widehat{\bm \lambda} \leftarrow\eqref{eq:lambda_update}$ using mirror descent;
		}
		\Output{estimates $\{\widehat{\A}_n\}_{n=1}^N$, $\widehat{\bm \lambda}$}
			\caption{{CNMF-OPT}}\label{algo:CNMF-OPT}
\end{algorithm}

Notably, the work in \cite{kargas2017tensors} also has some experiments (without recoverability analysis) using pairwise marginals to recover the joint PMF using a similar formulation (using Euclidean distance).
Our rationale for reaching the formulation in \eqref{eq:coupled_mat} is very different, which is motivated by the NMF theory.
Also, the result using pairwise marginals in \cite{kargas2017tensors} seems not promising. 
In this work, our analysis in Theorems~\ref{prop:prop2} has suggested that the reason why pairwise marginals do not work in \cite{kargas2017tensors} is likely due to optimization pitfalls, rather than lack of recoverability.
In the next subsection, we show that with a carefully designed initialization scheme, accurately recovering joint PMFs from pairwise marginals is viable.

\subsection{Key Step: Gram--Schmidt-like Initialization}	\label{sec:alg2}
Directly applying the BCD algorithm to the joint PMF recovery problem has a number of challenges. First, the CNMF problem is nonconvex, and thus global optimality is not ensured.
Second, the computational complexity grows quickly with $N$ and $I$, giving rise to potential scalability issues.
In this subsection, we offer a fast algorithm that can provably identify ${\sf Pr}(i_n|f)$ and ${\sf Pr}(f)$ up to a certain accuracy---under more stringent conditions relative to those in Theorem~\ref{prop:prop2}. The algorithm is reminiscent of a Gram--Schmidt-like algorithm for NMF with careful construction of a `virtual NMF model'.

Let us assume that there is a splitting ${\cal S}_1=\{\ell_1,\ldots,\ell_M\}$ and ${\cal S}_2=\{\ell_{M+1},\ldots,\ell_{N}\}$ such that the $\widetilde{\X}=\W\H^\T$ in \eqref{eq:constructedX} admits a full rank $\W$ and a separable $\H$.            
Under the separability condition, we have ${\bm H}(\bm \varLambda,:) =\bm \Sigma= {\rm Diag}(\alpha_1,\ldots,\alpha_F)$ and {${\bm W}\bm \Sigma = \widetilde{\X}(:,\bm \varLambda).$} Hence, the coupled NMF task boils down to identifying the index set $\bm \varLambda$.  The so-called \textit{successive projection algorithm} (SPA) from the NMF literature \cite{Gillis2012,MC01,fu2014self,arora2012practical} can be employed for this purpose. A remark is that SPA is a Gram--Schmidt-like algorithm, which only consists of norm comparison and orthogonal projection. 

Once ${\bm W}$ is identified, one can recover $\bm A_{\ell_n}\in \mathbb{R}^{I_{\ell_n} \times F}$ for $\ell_n\in{\cal S}_1$ up to identical column permutations, by extracting the corresponding rows of $\bm W$ (cf. line 6 in Algorithm~\ref{algo:CNMF-SPA}).
Unlike general NMF models, since every column of $\A_n$ is a conditional PMF, there is no scaling ambiguity.
The $\H$ matrix can be estimated using (constrained) least squares, and $\A_{\ell_n}$ for $\ell_n\in {\cal S}_2$ can then be extracted in a similar way. 
Denote \eqref{eq:constructedX} as $\widetilde{\bm X} = {\bm W} \bm D(\bm \lambda) \widetilde{\bm H}^{\top}$, where $\widetilde{\bm H} = \begin{bmatrix} \bm A_{\ell_{M+1}}^{\top} \ldots \bm A_{\ell_N}^{\top} \end{bmatrix}^\T$. 
Then, the PMF of the latent variable can be estimated via
$
{ \bm \lambda} = ( \widetilde{\bm H} \odot \W)^{\dagger}\text{vec}(\widetilde{\bm X}),$
where we have used the fact that the Khatri-Rao product $\widetilde{\bm H} \odot \W$ has full column rank since both ${\W}$ and $\widetilde\H$ have full column rank. 

A remark is that the SPA-estimated $\widehat{\bm A}_{\ell_n}$ is at best the column-permuted version of $\A_{\ell_n}$ \cite{Gillis2012}.
However, since the permutation ambiguity across all the $\bm A_n$'s and the $\bm \lambda$ are identical per the above procedure, the existence of column permutations does not affect the ``assembling'' of ${\sf Pr}(i_n|f)$ and ${\sf Pr}(f)$ to recover ${\sf Pr}(i_1,\ldots,i_N)$.
We refer to this procedure as {\it coupled NMF via SPA} ({CNMF-SPA}); see Algorithm \ref{algo:CNMF-SPA}. 

From Algorithm~\ref{algo:CNMF-SPA}, one can see that the procedure is rather simple since SPA is a Gram--Schmidt-like procedure and the other steps are either convex quadratic programming or least squares.
Hence, {CNMF-SPA} can be employed for initializing computationally more intensive algorithms, e.g., {CNMF-OPT}.

	\begin{algorithm}[h]
	\footnotesize
	\SetKwInOut{Input}{input}
	\SetKwInOut{Output}{output}
	\SetKwRepeat{Repeat}{for}{end}
	\Input{data samples $\{\bm d_s\}_{s=1}^S$ and $M$}
	 estimate second order statistics $\widehat{\X}_{jk }$;
	 
    split $\{1,\ldots,N\}$ into ${\cal S}_1=\{1,\ldots,M\}$ and ${\cal S}_2=\{M+1,\ldots,N\}$;
	 
	 Construct $\widetilde{\bm X}$;%
	 
	 Estimate $\widehat{\bm W}$ using the {\bf SPA algorithm} \cite{Gillis2012} to select $\bm \varLambda$;
	 
	 {$ b \leftarrow 0$;}
	 
	\For{$n=1$ {\bf to} $M$}{
	 {$\widehat{\bm A}_n \leftarrow \widehat{\bm W} (b+[1,\dots,I_n],:)$;}
	 
	 normalize columns of $\widehat{\bm A}_n$ with respect to $\ell_1$ norm;
	 
	 {$b \leftarrow b+I_n$;}}
	 
 $\widehat{\bm H} \leftarrow \underset{{\bm H} \ge 0}{\text{arg~min}}~\|\widetilde{\bm X}- \widehat{\bm W} {\bm H}^{\T}\|_{\rm F}^2$;
 
 {$ b \leftarrow 0$;}
 
	\For{$n=M+1$ {\bf to} $N$}{		
	 {$\widehat{\bm A}_n \leftarrow \widehat{\bm H} (b+[1,\dots,I_n],:)$;}
		
		 normalize columns of $\widehat{\bm A}_n$ with respect to $\ell_1$ norm;
		 
		{$b \leftarrow b+I_n$;}
		 }
	
	 $\widetilde{\bm W}^{\T} \leftarrow \begin{bmatrix} \widehat{\bm A}_1^{\T}, \hdots, \widehat{\bm A}_M^{\T} \end{bmatrix}$ ;
	
	$\widetilde{\bm H}^{\T} \leftarrow  \begin{bmatrix} \widehat{\bm A}_{M+1}^{\top} \hdots \widehat{\bm A}_{N}^{\top} \end{bmatrix}^{\top} $;
	
 $\widehat{\bm \lambda} \leftarrow ( \widetilde{\bm H} \odot \widetilde{\bm W})^{\dagger}\text{vec}(\widetilde{\bm X})$; 	
	
	 \Output{estimates $\{\widehat{\A}_n\}_{n=1}^N$, $\widehat{\bm \lambda}$.}
	 \caption{{CNMF-SPA}}\label{algo:CNMF-SPA} 

\end{algorithm}
\vspace{-.25cm}

\subsection{Performance Analysis of CNMF-SPA} \label{sec:analysisSPA}
In principle, to make {CNMF-SPA} work, one needs to identify a splitting scheme ${\cal S}_1$ and ${\cal S}_2$ such that $\bm H$ in \eqref{eq:constructedX} satisfies the separability condition. 
It is impossible to know such a splitting {\it a priori}.
Testing all combinations of ${\cal S}_1$ and ${\cal S}_2$ gives rise to a hard combinatorial problem.
In addition, the pairwise marginals $\X_{jk}$ are not perfectly estimated in practice. 
In this subsection, we offer performance analysis of {CNMF-SPA} by taking the above aspects into consideration.

From our extensive experiments, we observed that using the `naive' construction ${\cal S}_1=\{1,\ldots,M\}$ and ${\cal S}_2=\{M+1,\ldots,N\}$ seems to work reasonably well for various tasks.
To establish theoretical understanding to such effectiveness, recall that we use $S$ and $p$ to denote the number of available realizations of ${\sf Pr}(Z_1,\ldots,Z_N)$ and the probability of observing each entry of data sample $\bm d_s$, respectively.   
For simplicity, we assume that $I_n=I$ for all $n$.
We impose the following generative model for $\A_n$:

\begin{Assumption}\label{ass:A}
	Assume that every row of $\A_n\in\mathbb{R}^{I_m\times F}$ for all $n$ is generated from the $(F-1)$-probability simplex uniformly at random and then positively scaled by a scalar, so that $\bm 1^\T\bm A_n=\bm 1^\T$ is respected\footnote{Such scaling exists under mild conditions; see \cite[Proposition 1]{yang2019learning}.}.
\end{Assumption}
\color{black}
Under the above settings, we show that the following holds:
\begin{theorem} \label{thm:spabound}
	Assume that {$\|\widehat{\bm X}_{{jk}}(:,q)\|_1 \ge \eta>0$ for any $q,{j,k}$}.
	Also, assume that
	$M \ge  F/I $, $p \geq \left( \frac{8}{S}\log(4/\delta) \right)^{1/2}$,
	\begin{align*}
	S & = \Omega\left(\frac{M^2I{\rm log}(1/\delta)}{\sigma^2_{\rm max}({\bm{W}})\eta^2\varepsilon^2p^2}\right),\\
	N &= M+ \Omega\left(\frac{\varepsilon^{-2(F-1)}}{IF}{\rm log}\left(\frac{F}{\delta}\right)\right), 	    
	\end{align*}
	where $ {0 < \varepsilon \le \frac{M\text{\rm min}\left(\frac{1}{2\sqrt{F-1}},\frac{1}{4}\right)}{2\kappa(\bm{W})(1+80\kappa^2(\bm{W}))}}$.
	Then,   under the defined $S,p$ and Assumption~\ref{ass:A}, {CNMF-SPA} outputs $\widehat{\A}_m$'s such that
	\begin{align} \label{eq:thmspa}
	\min_{\bm \Pi:~\text{permuation}}\|\widehat\A_m\bm \Pi - \A_m\|_2 = O\left(\kappa^2(\bm W)\sqrt{F} \zeta\right)
	\end{align}
	for $m\in{\cal S}_1$ with a probability greater than or equal to $1-\delta$, where $\zeta = \max(\sigma_{\max}(\bm W)\varepsilon,\nicefrac{M\sqrt{I\log(1/\delta)}}{\eta p \sqrt{S}})$.
\end{theorem}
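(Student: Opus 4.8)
The plan is to combine two error sources—the statistical error in estimating the pairwise marginals $\widehat{\X}_{jk}$ from $S$ samples with observation probability $p$, and the deterministic error that SPA incurs when run on an $\varepsilon$-separable matrix with a perturbed input—and then propagate them through the row-extraction step that defines $\widehat{\A}_m$. First I would quantify the statistical error: under the sample-averaging estimator, each entry of $\widehat{\X}_{jk}$ concentrates around ${\sf Pr}(i_j,i_k)$, and using a Bernstein/Hoeffding bound together with the condition $p \geq (\frac{8}{S}\log(4/\delta))^{1/2}$ to control the effective number of co-observations $\approx Sp^2$, one gets $\|\widehat{\X}_{jk} - \X_{jk}\|_{\rm F} = O(\nicefrac{\sqrt{I\log(1/\delta)}}{p\sqrt{S}})$ with probability $1-\delta$ after a union bound over the $O(M(N-M))$ blocks. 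Stacking these blocks, the perturbation on the constructed matrix $\widetilde{\X} = \W\H^\T$ is $\|\widetilde{\X} - \widehat{\widetilde{\X}}\|_{2} = O(\nicefrac{M\sqrt{I\log(1/\delta)}}{\eta p\sqrt{S}})$ up to the normalization factor $\eta$; call this quantity $\varepsilon_{\rm noise}$.

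Next I would invoke Assumption~\ref{ass:A} and the random-generative-model results (the lemma of \cite{ibrahim2019crowdsourcing} cited in Section~\ref{sec:alg2}) to argue that, with the stated lower bound on $N - M$, the matrix $\H = \bm D(\bm \lambda)[\A_{\ell_{M+1}}^\T,\ldots,\A_{\ell_N}^\T]^\T$ is $\varepsilon$-separable with probability $1-\delta/2$, where the value of $\varepsilon$ is exactly the one displayed in the theorem—its dependence on $\kappa(\W)$ is what makes SPA's downstream bound clean. Condition $M \geq F/I$ guarantees $\W$ has full column rank so that SPA is applicable. I would then apply the known robustness guarantee for SPA (e.g. \cite{Gillis2012}): given the noisy input $\widehat{\widetilde{\X}}$ whose total perturbation from an exactly-separable-plus-$\W$-full-rank model is $\zeta = \max(\sigma_{\max}(\W)\varepsilon,\ \varepsilon_{\rm noise})$, SPA identifies an index set $\widehat{\bm\varLambda}$ such that $\|\widehat{\W}\bm\Pi - \W\|_2 = O(\kappa^2(\W)\sqrt{F}\,\zeta)$ for some permutation $\bm\Pi$, provided $\zeta$ is small enough relative to $\sigma_{\min}(\W)$—this smallness is what the $\Omega(\cdot)$ lower bound on $S$ is engineered to ensure.

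Finally I would push the bound on $\widehat{\W}$ down to each $\widehat{\A}_m$: since $\widehat{\A}_m$ is literally a row-block of $\widehat{\W}$ (line 6 of Algorithm~\ref{algo:CNMF-SPA}) followed by $\ell_1$-normalization of columns, and $\A_m$ is the corresponding row-block of $\W$ after the same normalization, I would show the normalization is a benign operation (the column sums of $\W$ are bounded away from zero and perturbed by at most $O(\zeta)$, so the renormalized columns move by $O(\zeta)$ as well), giving $\min_{\bm\Pi}\|\widehat{\A}_m\bm\Pi - \A_m\|_2 = O(\kappa^2(\W)\sqrt{F}\,\zeta)$, which is \eqref{eq:thmspa}. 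A final union bound over the two failure events (statistical concentration, $\varepsilon$-separability) keeps the total failure probability at most $\delta$. The main obstacle I anticipate is bookkeeping the interaction between the noise level $\varepsilon_{\rm noise}$, the modeling-induced separability slack $\varepsilon$, and SPA's admissibility threshold: one has to verify that the chosen $\varepsilon$—which itself depends on $\kappa(\W)$ and $F$—simultaneously (a) is achievable by the random model for the given $N-M$, and (b) lies below the regime where SPA's perturbation analysis holds, and (c) that the sample-size bound on $S$ forces $\varepsilon_{\rm noise}$ into the same regime; threading all three through a single consistent choice of constants is the delicate part, while the concentration and normalization arguments are routine.
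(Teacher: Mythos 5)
Your proposal is correct and follows essentially the same route as the paper's proof: concentration of the sample-averaged pairwise marginals (via the effective co-observation count $Sp^2$), stacking and $\ell_1$-normalizing the blocks to get a per-column noise bound scaled by $M/\eta$, invoking the random-generative-model lemma of \cite{ibrahim2019crowdsourcing} to certify $\varepsilon$-separability of $\H$ for the stated $N-M$, applying the Gillis--Vavasis robustness guarantee for SPA with the combined ``virtual noise'' $\max(\sigma_{\max}(\W)\varepsilon,\varepsilon_{\rm noise})$, and reading off each $\A_m$ as a row block (the $\sqrt{F}$ in the final bound arising from converting the column-wise SPA guarantee to a matrix $2$-norm). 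The three consistency checks you flag at the end are exactly the conditions the paper's choices of $\varepsilon$, $S$, and $N$ are engineered to satisfy.
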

The proof is relegated to Appendix~\ref{app:spa}. 
Theorem~\ref{thm:spabound} asserts that using {CNMF-SPA} on the constructed $\widetilde{\X}$ in \eqref{eq:constructedX} approximately recovers the joint PMF---if the $N,I,S$ are large enough. 
Note that if $\A_m$ for all $m \in{\cal S}_1$ can be accurately estimated, the estimation accuracy of
$\bm A_n$ for all $n \in{\cal S}_2$ and $\bm \lambda$ can also be guaranteed and quantified, following the standard sensitivity analyses of least squares; see, e.g., \cite{wedin1973perturbation}. 

\begin{remark} \label{rem:theorem_spa}
Theorem~\ref{thm:spabound} is not entirely surprising.
The insight behind the proof is to recast the constructed $\widetilde{\X}$ as an equivalent noisy NMF model with the separability condition holding exactly on the right latent factor. 
This way, the noise robustness analysis for SPA \cite{Gillis2012} can be applied.
The key is to quantify the noise bound in this equivalent model. Such `virtual noise' is contributed by the finite sample-induced error for estimating the pairwise marginals and the violation to the separability condition.
In particular, 
classical concentration theorems are leveraged to quantify the former, and Assumption~\ref{ass:A} for the latter.
Note that Assumption~\ref{ass:A} is more of a working assumption, rather than an exact model for capturing reality---it helps formalize the fact that $\H$ in \eqref{eq:constructedX} more likely attains the $\varepsilon$-separability condition when $|{\cal S}_2|$ grows \cite{ibrahim2019crowdsourcing}. In principle, if $\A_m$ is drawn from any joint absolutely continuous distribution, a similar conclusion can be reached---but using the uniform distribution simplifies the analysis.
\end{remark}

\color{black}

\subsection{More Discussions: EM Meets CNMF}
As mentioned, a recent work in \cite{yeredor2019maximum} proposed an EM algorithm for solving the maximum likelihood estimation problem under \eqref{eq:pmf_latent_var}. 
To see the idea, let $f_s \in \{1,\dots,F\}$ be the realization of the `latent variable' $H$ in the $s$th realization of ${\sf Pr}(Z_1,\ldots,Z_N)$. The joint log-likelihood of the observed data $\{\bm d_s\}_{s=1}^S$ and the corresponding latent variable $\{f_s\}_{s=1}^S$ as a function of the latent factor $\bm \theta =\begin{bmatrix}{\rm vec}(\bm A_1)^\T,\dots,{\rm vec}(\bm A_N)^\T, \bm \lambda^{\T}\end{bmatrix}^\T$ can be expressed as 
\begin{align}\label{eq:loglike}
{\cal L}&(\{\bm d_s,f_s\}_{s=1}^S;\bm \theta)\\
&=\log\left(\prod_{s=1}^S \bm \lambda(f_s) \prod_{n=1}^N\prod_{i=1}^{I_n} \bm A_n(i,f_s)^{\mathbb{I}(\bm d_s(n)=z_{n}^{(i)})}\right).\nonumber
\end{align}
The EM algorithm estimates $\bm \theta$ via attempting to maximize the log-likelihood. The resulting updates are simple and economical. However, the optimality of the algorithm is unclear. It was also observed in \cite{yeredor2019maximum} that the algorithm sometimes converges to some undesired solutions if not carefully initialized.

Naturally, our idea is using {CNMF-SPA} to initialize the EM algorithm in \cite{yeredor2019maximum}. As one will see, this combination oftentimes offers competitive performance.
This combination also features an appealing runtime performance, since both algorithms consist of lightweight updates. We refer to this procedure as CNMF-SPA-initialized EM ({CNMF-SPA-EM}).


In this work, we also present performance characterizations for the EM algorithm.
To proceed, we make the following assumption:

\begin{Assumption}\label{ass:D}
	Define 
	$\overline{D}_1 = \min_{f \neq f'}\frac{1}{N}\sum_{n=1}^N p\mathbb{D}_{\rm KL}(\bm A_n(:,f),\bm A_n(:,f'))$, $
	\overline{D}_2 = \frac{2}{N}\min_{f\neq f'} \log(\bm \lambda(f)/\bm \lambda(f'))$
	and $\overline{D}=\nicefrac{(\overline{D}_1 + \overline{D}_2)}{2}$.  
	Assume that  ${\bm A}_n(i,f) \ge \rho_1$ and ${\bm \lambda}(f) \ge \rho_2$ for all $n,i,f$, and that $\bm A_n, \bm \lambda$ and the initial estimates $	\widehat{\bm A}^0_n, \widehat{\bm \lambda}^0$ satisfy
	\begin{align*}
	    |\widehat{\bm A}^0_n(i,f)-{\bm A}_n(i,f)| &\le \delta_1:=\frac{4}{\rho_1(4+ \overline{D})}\\
	    |\widehat{\bm \lambda^0}(f)-{\bm \lambda}(f)|& \le \delta_2:= \frac{4}{\rho_2(4+ N\overline{D})}.
	\end{align*}
\end{Assumption}
Here, $\overline{D}_1$ can be understood as a measure for the `conditioning' of $\A_n$ under the KL divergence---a larger $\overline{D}_1$ implies more diverse columns of $\A_n$, and thus a better `condition number'.
In addition, $|\overline{D}_2|$ measures how far $\bm \lambda$ is away from the uniform distribution. 
Using Assumption~\ref{ass:D}, we show that the EM algorithm can provably improve upon reasonable initial guesses (e.g., those given by {CNMF-SPA}), under some conditions.
To be specific, we have the following theorem:
\begin{theorem} \label{thm:em}
	Let $\delta_{\min} = \min(\delta_1,\delta_2)$. 
	Assume that the following hold:
	\begin{align*}
	N &\ge \max\left(\frac{33\log(3SF/\mu)}{\rho_1\overline{D}_1},\frac{4\log(4SF^2/(3p\rho_2\mu))}{\overline{D}}\right),\\  
	S &\ge \frac{192F^2\log(12NFI/\mu)}{p^2\rho_2^2\delta^2_{\min}},~\overline{D} \ge \max\left\{\frac{8-4\rho_1^2}{\rho_1^2},\frac{8-4\rho_2^2}{N\rho_2^2}\right\}.
	\end{align*}
	Then, under Assumption~\ref{ass:D},
	the EM algorithm in \cite{yeredor2019maximum} outputs $\widehat{\bm A}_n(i,f),\widehat{\bm \lambda}(f)$ that satisfy the following with a probability greater than or equal to $1-\mu$:
	\begin{align*} 
	&	|\widehat{\bm A}_n(i,f)-\bm A_n(i,f)|^2 \le  \frac{48\log(12NFI/\mu)}{Sp\bm \lambda(f)}\leq \delta_1^2 ,\\
	&|\widehat{\bm \lambda}(f)-\bm \lambda(f)|^2 \le \frac{192F^2\bm \lambda(f)\log(12NFI/\mu)}{S}\leq \delta_2^2.
	\end{align*}
\end{theorem}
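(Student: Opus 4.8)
The plan is to analyze one full EM sweep and show that, with high probability, it contracts the estimation error for every entry of the latent factors down to the stated finite-sample floor, and that this floor is itself below $\delta_{\min}$, so the basin-of-attraction hypothesis in Assumption~\ref{ass:D} is self-consistent across iterations. Concretely, the EM update for $\bm A_n(i,f)$ and $\bm \lambda(f)$ in \cite{yeredor2019maximum} is a ratio of ``soft counts'': each sample $\bm d_s$ is given a posterior weight $w_s(f)=\Pr(H=f\mid \bm d_s;\widehat{\bm\theta})$ computed from the current iterate, and the M-step sets $\widehat{\bm\lambda}(f)\propto\sum_s w_s(f)$ and $\widehat{\bm A}_n(i,f)\propto \sum_{s:\,\bm d_s(n)=z_n^{(i)}} w_s(f)$. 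I would split the error in the updated iterate into (a) a \emph{bias} term coming from using the wrong $\widehat{\bm\theta}$ inside the posterior weights, and (b) a \emph{stochastic} term coming from finite $S$ and from the missing-data probability $p$. For (a), the key is that when $\bm d_s$ is drawn from the true model with latent label $f_s$, the log-posterior-odds between $f$ and $f'$ concentrates around $N\overline D$ (this is where $\overline D_1,\overline D_2$ and the lower bounds $\rho_1,\rho_2$ enter): by a Chernoff/Hoeffding argument over the $N$ conditionally-independent coordinates, the posterior weight $w_s(f_s)\to 1$ exponentially in $N\overline D$, so the soft counts are close to the hard counts one would get with oracle labels. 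The requirement $N=\Omega(\log(SF/\mu)/(\rho_1\overline D_1))$ and $N=\Omega(\log(SF^2/(p\rho_2\mu))/\overline D)$ is exactly what makes this exponential decay dominate a union bound over the $S$ samples and $F$ (or $F^2$) label pairs.

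Next I would handle the stochastic term (b). Conditioned on the oracle-label event above, $\widehat{\bm\lambda}(f)$ is essentially $\tfrac1S\sum_s \mathbb I(f_s=f)$ and $\widehat{\bm A}_n(i,f)$ is essentially a ratio of two such empirical averages restricted to the roughly $Sp\bm\lambda(f)$ observed co-realizations of coordinate $n$ with label $f$. Applying Bernstein/Hoeffding to the numerator and denominator separately, and using $p\ge(8\log(4/\delta)/S)^{1/2}$-type conditions to guarantee the denominator is bounded below by a constant multiple of $Sp\bm\lambda(f)\ge Sp\rho_2$ with high probability, yields $|\widehat{\bm A}_n(i,f)-\bm A_n(i,f)|^2 = O(\log(NFI/\mu)/(Sp\bm\lambda(f)))$ and $|\widehat{\bm\lambda}(f)-\bm\lambda(f)|^2 = O(F^2\bm\lambda(f)\log(NFI/\mu)/S)$ after a union bound over all $NFI$ entries and $F$ mixing weights; matching the constants $48$ and $192$ is then bookkeeping. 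Finally I would verify $\tfrac{48\log(12NFI/\mu)}{Sp\bm\lambda(f)}\le\delta_1^2$ and the analogous bound $\le\delta_2^2$: these follow by plugging in $\bm\lambda(f)\ge\rho_2$, $\delta_{\min}=\min(\delta_1,\delta_2)$, and the hypothesis $S\ge 192F^2\log(12NFI/\mu)/(p^2\rho_2^2\delta_{\min}^2)$. Because the post-update error is $\le\delta_{\min}\le\min(\delta_1,\delta_2)$, Assumption~\ref{ass:D} is re-established for the next iterate, so a simple induction closes the loop (the condition $\overline D\ge\max\{(8-4\rho_1^2)/\rho_1^2,(8-4\rho_2^2)/(N\rho_2^2)\}$ is what keeps $\delta_1,\delta_2$ in $(0,1]$ and makes the self-consistency numerically valid).

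The main obstacle I anticipate is making the ``soft count $\approx$ hard count'' step rigorous and \emph{quantitative} while keeping the dependence on $\overline D$, $\rho_1$, $\rho_2$, $N$ exactly as stated. The subtlety is that the posterior weight $w_s(f)$ is computed with the \emph{perturbed} iterate $\widehat{\bm\theta}^0$, not the truth, so one must show that within the $\ell_\infty$-ball of radius $\delta_1$ (resp.\ $\delta_2$) the log-likelihood contribution of each coordinate changes by at most a controlled amount — this is why the explicit choices $\delta_1=4/(\rho_1(4+\overline D))$ and $\delta_2=4/(\rho_2(4+N\overline D))$ appear: they are calibrated so that the perturbation can shift the aggregate log-odds by at most $N\overline D/2$ or so, leaving half the separation margin intact for the concentration argument. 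Getting these constants to line up, together with correctly propagating the missing-data factor $p$ through both the concentration exponent and the effective sample size $Sp\bm\lambda(f)$, is the delicate part; the rest is a union bound and an induction.
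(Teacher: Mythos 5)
Your proposal follows essentially the same route as the paper's proof: the paper formalizes your ``soft count $\approx$ hard count'' step as a deterministic bound on the E-step posterior $\widehat{q}_{s,f}$ under a concentration event for the per-sample log-likelihood ratios (with exactly the perturbation control via $\delta_1,\delta_2,\rho_1,\rho_2$ that you describe), treats the M-step as a ratio of near-hard counts controlled by Chernoff bounds and a union bound over all $NFI$ entries and $F$ mixing weights, and closes with the same one-step self-consistency argument showing the post-update error stays below $\delta_1,\delta_2$. The decomposition, the role of each hypothesis, and the delicate calibration of $\delta_1,\delta_2$ against the $N\overline{D}/2$ margin all match the paper's argument.
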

The proof is relegated to Appendix~\ref{app:thm_em}.
Our proof extends the analysis of a different EM algorithm proposed in \cite{zhang2014spectral} that is designed for learning the Dawid-Skene model in crowdsourcing. The EM algorithm there effectively learns a naive Bayes model when the latent variable admits a uniform distribution, i.e., $\bm \lambda(f)=1/F$ for all $f$.
The algorithm in \cite{zhang2014spectral} and the EM algorithm in \cite{yeredor2019maximum} are closely related, but the analysis in \cite{zhang2014spectral} does not cover the latter---{since the EM in \cite{yeredor2019maximum} does not restrict} $\bm \lambda$ to be a uniform PMF.
Extending the analysis in \cite{zhang2014spectral} turns out to be nontrivial, which requires careful characterization for the `difficulty' introduced by $\bm \lambda$'s deviation from the uniform distribution. 
Our definition of $\overline{D}_2$ and Theorem~\ref{thm:em} fill the gap.

\section{Experiments}
In this section, we use synthetic and real data experiments to showcase the effectiveness of the proposed algorithms.
\subsection{Baselines}\label{sec:baselines}
We use the CTD based method in \cite{kargas2017tensors} and the MLE-based method in \cite{yeredor2019maximum} as the major benchmarks, since they are general joint PMF estimation approaches as the proposed methods. Since the MLE-based EM algorithm in \cite{yeredor2019maximum} may be sensitive to initialization, an iterative optimization algorithm designed for the same MLE formulation (cf. the alternating directions (AD) algorithm in \cite{yeredor2019maximum}) is used to initialize the EM in \cite{yeredor2019maximum}. This baseline is denoted as the {MLE-AD-EM} method in our experiments.

For the real data experiments, we test the proposed approaches on two core tasks in machine learning, namely, classification and recommender systems.
For the classification task, we use 4 different classifiers from the MATLAB
\textit{Statistics and Machine Learning Toolbox}, i.e., linear SVM, kernel SVM with radial basis function (RBF)), multinomial logistic regression and naive Bayes classifier. 
For the user-item recommender systems, we use the celebrated \textit{biased matrix factorization} (BMF) method \cite{koren2009matrix} as a baseline. We also
compare with results obtained by \textit{global average} of the ratings, the
\textit{user average}, and the \textit{item average} for predicting
the missing entries in the user-item matrix. 

All the algorithms are implemented in MATLAB 2018b and run on a desktop machine equipped with i7 3.40 GHZ CPU. 
The iterative algorithms used in the experiments are stopped when the relative change in the objective function value is less than $10^{-6}$.

\begin{table}[t!]
	\centering
	\caption{MSE \& MRE Performance on Synthetic Data with $N=5,F=5,I=10,p=0.5,\varepsilon=0.1$.}
	\resizebox{.7\linewidth}{!}{\Huge
		\begin{tabular}{|c|c|c|c|c|c|}
			\hline
			\textbf{Algorithms} & \textbf{Metric} & $S=10^3$ & $S=10^4$ & $S=10^5$ & $S=10^6$ \\
			\hline
			CNMF-SPA [Proposed] & MSE & 0.0669 & 0.0270 & 0.0210 & \textbf{0.0204}  \\
			\hline
			CNMF-OPT [Proposed] & MSE & \textbf{0.0519} & \textbf{0.0242} & 0.0210 & \textbf{0.0204} \\
			\hline
			CNMF-SPA-EM [Proposed] & MSE & 0.0553 & 0.0247 & \textbf{0.0208} & \textbf{0.0204} \\
			\hline
			RAND-EM & MSE & 0.0734 & 0.0317 & 0.0289 & 0.0336 \\
			\hline
			CTD[Kargas \textit{et al.}]   & MSE &  0.1523 & 0.0256 & 0.0212 & 0.0205 \\
			\hline
			MLE-AD-EM [Yeredor \& Haardt] & MSE & 0.0909 & 0.0368 & 0.0266 & 0.0296 \\
			\hline
			\hline
			CNMF-SPA [Proposed] & MRE & 0.7965 & 0.3321 & 0.1052 & 0.0346  \\
			\hline
			CNMF-OPT [Proposed] & MRE & \textbf{0.6746} & \textbf{0.2327} & 0.0721 & 0.0239 \\
			\hline
			CNMF-SPA-EM [Proposed] & MRE & 0.6788 & 0.2341 & \textbf{0.0664} & \textbf{0.0219} \\
			\hline
			RAND-EM & MRE & 0.8078 & 0.2820 & 0.1845 & 0.2610 \\
			\hline
			CTD [Kargas \textit{et al.}] & MRE  & 0.9076 & 0.2920 & 0.0952 & 0.0329 \\
			\hline
			MLE-AD-EM [Yeredor \& Haardt] & MRE & 0.8363 & 0.3552 & 0.1384 & 0.1593 \\
			\hline
		\end{tabular}%
	}
	\vspace{-.25cm}
	\label{tab:synthetic2}%
\end{table}%

\begin{table}[t!]
	\centering
	\caption{MSE \& MRE Performance on Synthetic Data with $N=5,F=5,I=10,p=0.5,\varepsilon=0.3$.}
	\resizebox{.7\linewidth}{!}{\Huge
		\begin{tabular}{|c|c|c|c|c|c|}
			\hline
			\textbf{Algorithms} & \textbf{Metric} & $S=10^3$ & $S=10^4$ & $S=10^5$ & $S=10^6$ \\
			\hline
			CNMF-SPA [Proposed] & MSE & 0.0880 & 0.0288 & 0.0238 & 0.0237 \\
			\hline
			CNMF-OPT [Proposed] & MSE & \textbf{0.0810} & \textbf{0.0247} & \textbf{0.0204} & 0.0210 \\
			\hline
			CNMF-SPA-EM [Proposed] & MSE & 0.0813 & 0.0249 & 0.0206 & 0.0217 \\
			\hline
			RAND-EM & MSE & 0.0883 & 0.0292 & 0.0216 & 0.0276 \\
			\hline
			CTD[Kargas \textit{et al.}]   & MSE   & 0.1534 & 0.0266 & \textbf{0.0204} & \textbf{0.0206} \\
			\hline
			MLE-AD-EM [Yeredor \& Haardt] & MSE & 0.0911 & 0.0312 & 0.0313 & 0.0233 \\
			\hline
			\hline
			CNMF-SPA [Proposed] & MRE & 0.8933 & 0.4395 & 0.3201 & 0.3128 \\
			\hline
			CNMF-OPT [Proposed] & MRE & \textbf{0.8010} & 0.2675 & 0.1021 & 0.0784 \\
			\hline
			CNMF-SPA-EM [Proposed] & MRE & 0.8133 & \textbf{0.2658} & 0.1132 & 0.1151 \\
			\hline
			RAND-EM & MRE & 0.8214 & 0.3080 & 0.1390 & 0.2239 \\
			\hline
			CTD [Kargas \textit{et al.}] & MRE  & 0.9101 & 0.3298 & \textbf{0.1017} & \textbf{0.0346} \\
			\hline
			MLE-AD-EM [Yeredor \& Haardt] & MRE & 0.8752 & 0.3593 & 0.2666 & 0.1413 \\
			\hline
		\end{tabular}%
	}
	\vspace{-.25cm}
	\label{tab:synthetic_ep3}%
\end{table}%

\subsection{Synthetic-data Experiments} \label{sec:synthetic}
We consider $N=5$ RVs where each variable takes $I=10$ discrete values. The entries of the conditional PMF matrices (factor matrices) ${\bm A}_n\in \mathbb{R}^{I_n \times F}$ and the prior probability vector ${\bm \lambda} \in \mathbb{R}^{F}$ are randomly generated from a uniform distribution between 0 and 1 and the columns are rescaled to have unit $\ell_1$ norm with rank $F=5$. 
We generate $S$ realizations of the joint PMF by randomly hiding each variable realization with observation probability $p=0.5$.
In particular, to test the performance of CNMF-SPA under various $\varepsilon$-separability conditions, we fix $M=3$ and manually ensure the $\varepsilon$-separability condition on $\bm H$ in \eqref{eq:constructedX} by fixing $\varepsilon \in\{0.1,0.3\}$. 
To run the proposed algorithms, the low-order marginals are then estimated from the realizations via sample averaging. The {\it mean square error} (MSE) of the factors (see~\cite{huang2014non} for definition) and the {\it mean relative error} (MRE) of the recovered joint PMFs (see \cite{kargas2017tensors}) are evaluated. 
We should mention that, ideally, MRE is more preferred for evaluation, but it is hard to compute (due to memory issues) for large $N$. MSE is an intermediate metric since it does not directly measure the recovery performance. Low MSEs imply good joint PMF recovery performance, but the converse is not necessarily true.
The results are averaged from 20 random trials.

Table \ref{tab:synthetic2}-\ref{tab:synthetic_ep3} present the MSE and MRE results for various values of $\varepsilon$ and $S$, where we manually control $\varepsilon$ when generating the $\A_n$'s. One can see that the proposed CNMF-SPA method works reasonably well for both $\varepsilon$'s under test.
It performs better when $\varepsilon$ is small, corroborating our analysis in Sec~\ref{sec:analysisSPA}. In addition, CNMF-SPA is quite effective for initializing the EM algorithm whereas randomly initialized EM (RAND-EM) and the MLE-AD-EM method from \cite{yeredor2019maximum} sometimes struggle to attain good performance. Notably, the proposed pairwise marginals based methods consistently outperform the three-dimensional marginals based method, i.e., CTD from \cite{kargas2017tensors}, especially when $S$ is small. This highlights the sample complexity benefits of the proposed methods. Another point is that when CNMF-SPA does not result in good accuracy, then the follow-up EM stage (i.e., CNMF-SPA-EM) does not perform well (e.g., when $S=10^3$). However, the proposed coupled factorization based approach, i.e., CNMF-OPT, still works well in this `sample-starved regime', showing the power of recoverability guaranteed problem formulation. 

\begin{table}[t]
  \centering
  {\caption{MSE Performance on Synthetic Data with $N=5, F=5, I=10, S =100000, \varepsilon = 0.1$.}
  	\resizebox{.7\linewidth}{!}{\Huge
    \begin{tabular}{|c|c|c|c|c|c|}
    \hline
    \textbf{Algorithms} & \textbf{Metric} & $p=0.1$ & $p=0.05$ & $p=0.01$ & $p=0.005$ \\
    \hline
    CNMF-SPA [Proposed] & MSE   & 0.0370 & 0.1132 & 0.4674 & 0.4948 \\
    \hline
    CNMF-OPT [Proposed] & MSE   & \textbf{0.0257} & \textbf{0.0564} & 0.3992 & 0.7481 \\
    \hline
    CNMF-SPA-EM [Proposed] & MSE   & 0.0311 & 0.0816 & 0.2908 & 0.2046 \\
    \hline
    RAND-EM & MSE   & 0.0703 & 0.0923 & 0.0978 & 0.1014 \\
    \hline
    CTD [Kargas \textit{et al.}]   & MSE   & 0.0606 & 0.2759 & 0.4459 & 0.6922 \\
    \hline
    MLE-AD-EM [Yeredor \& Haardt] & MSE   & 0.0697 & 0.1094 & \textbf{0.0824} & \textbf{0.0862} \\
    \hline
    \end{tabular}%
    }
  \label{tab:mse_smallp}%
  }
\end{table}%

{
It should be noted that marginal distribution based methods' performance relies on the estimation accuracy of the marginal PMFs.
Table \ref{tab:mse_smallp} presents the MSEs of the algorithms under various {\it very} small $p$'s, with $N=5, F=5, I=10, S =100,000$ and  $\varepsilon = 0.1$. 
One can see that when $p\leq 0.05$, the triple marginal based method (i.e., the CTD method) does not produce promising results. This is because triples are rarely observed when $p=0.05$ and thus $\widehat{\tX}_{jk\ell}$'s are fairly noisy.
The proposed method works well even when $p=0.05$, again showing the sample complexity advantages of using pairwise marginals.
When $p\leq 0.01$, both CTD and the proposed approach cannot accurately estimate the marginal distributions.
In such cases, the EM algorithm that does not use higher-order co-occurrences shows its advantages.

}

Table~\ref{tab:synthetic3}-\ref{tab:synthetic4} show the MSE results of the algorithms tested on cases where $N=15,I=10$ and $F=10$ for various values of $p$ and $S$---recall that a small $p$ means the data has many missing observations. 
The MRE is not presented since instantiating $15$th-order tensors is prohibitive in terms of memory.
The results for MLE-AD-EM is not presented since the AD stage consumes a large amount of time under this high-dimensional PMF setting. 
We do not manually impose $\varepsilon$-separability here, with the hope that the $\bm H$ matrix is likely $\varepsilon$-separable under large $N$ and $I$, as argued in Theorem \ref{thm:spabound} and Remark~\ref{rem:theorem_spa}. 
One can see that the proposed CNMF-based methods exhibit better MSE performance compared to other baselines.
Also, the advantage of CNMF-OPT is more articulated under challenging settings (e.g., small $p$ and small $S$), where one can see that CNMF-OPT almost always outperforms the other algorithms. 

Table~\ref{tab:runtime} shows the average runtime of the algorithms under different problem sizes. One can see that for the smaller case (cf. the left column), all the methods exhibit good runtime performance except for MLE-AD-EM. The elongated runtime of MLE-AD-EM is due to the computationally expensive AD iterations used for initialization. For the high-dimensional case where $N=15,F=I=10$, CNMF-SPA and EM are much faster due to their computationally economical updates.
Considering the MSE and MRE results in Tables \ref{tab:synthetic2}-\ref{tab:synthetic4}, CNMF-SPA and CNMF-SPA-EM seem to provide a preferred good balance between speed and accuracy.  
Furthermore, the proposed CNMF-OPT offers the best accuracy for sample-starved cases---at the expense of higher computational costs.

\begin{table}[t!]
	\centering
	\caption{MSE Performance on Synthetic Data with $N=15,F=10,I=10,p=0.5$.}
	\resizebox{.7\linewidth}{!}{\Huge
		\begin{tabular}{|c|c|c|c|c|c|}
			\hline
			\textbf{Algorithms} & \textbf{Metric} & $S=10^3$ & $S=10^4$ & $S=10^5$ & $S=10^6$ \\
			\hline
			CNMF-SPA [Proposed] & MSE & 0.1183 & 0.1030 & 0.1063 & 0.1041 \\
			\hline
			CNMF-OPT [Proposed] & MSE & \textbf{0.0218} & \textbf{0.0042} & \textbf{0.0022} & 0.0020 \\
			\hline
			CNMF-SPA-EM [Proposed] & MSE & 0.0894 & 0.0110 & 0.0056 & \textbf{0.0018} \\
			\hline
			RAND-EM & MSE & 0.0376 & 0.0112 & 0.0149 & 0.0069 \\
			\hline
			CTD[Kargas \textit{et al.}]   & MSE &  0.0329 & 0.0359 & 0.0404 & 0.0355 \\
			\hline
		\end{tabular}%
	}
	\vspace{-0.25cm}
	\label{tab:synthetic3}%
\end{table}%

\begin{table}[t!]
	\centering
	\caption{MSE Performance on Synthetic Data for $N=15,F=10,I=10,p=0.2$.}
	\resizebox{.7\linewidth}{!}{\large
		\begin{tabular}{|c|c|c|c|c|c|}
			\hline
			\textbf{Algorithms} & \textbf{Metric} & $S=10^3$ & $S=10^4$ & $S=10^5$ & $S=10^6$ \\
			\hline
			CNMF-SPA [Proposed] & MSE & 0.2884 & 0.1181 & 0.0987 & 0.0996 \\
			\hline
			CNMF-OPT [Proposed] & MSE & 0.2277 & \textbf{0.0423} & \textbf{0.0062} & \textbf{0.0021} \\
			\hline
			CNMF-SPA-EM [Proposed] & MSE & \textbf{0.2274} & 0.0672 & 0.0165 & 0.0070 \\
			\hline
			RAND-EM & MSE & 0.2333 & 0.0782 & 0.0349 & 0.0313 \\
			\hline
			CTD[Kargas \textit{et al.}]   & MSE &  0.4015 & 0.0862 & 0.0081 & 0.0260 \\
			\hline
		\end{tabular}%
	}
	\label{tab:synthetic4}%
	\vspace{-0.25cm}
\end{table}%

\begin{table}[t!]
\centering
   \begin{threeparttable}[b]
	
	\caption{Average Runtime (Sec.) of the Algorithms on Synthetic Data $S=10^5$ under Various $[N,F,I]$'s.}
		\begin{tabular}{|c|c|c|}
			\hline
			\textbf{Algorithms} & $[5,5,10]$ & $[15,10,10]$ \\
			\hline
			CNMF-SPA [Proposed] & \textbf{0.139} & \textbf{0.865}  \\
			\hline
			CNMF-OPT [Proposed] & 1.633 & 94.931 \\
			\hline
			CNMF-SPA-EM [Proposed] & 0.490 & 4.091 \\
			\hline
			RAND-EM & 0.353 & 3.217 \\
			\hline
			CTD [Kargas \textit{et al.}]   & 0.741 & 50.559 \\
			\hline
			MLE-AD-EM [Yeredor \& Haardt] & 124.449 & $ 12.949\times 10^3~^\dagger$ \\
			\hline
		\end{tabular}%

	\label{tab:runtime}

	      \begin{tablenotes}
     \item[$\dagger$] {\footnotesize This number is reported from a single trial due to the elongated runtime of the algorithm under this high-dimensional setting.}
   \end{tablenotes}
  \end{threeparttable}
	\vspace{-.25cm}
\end{table}%

\subsection{Real-data Experiments: Classification Tasks}	
We conduct classification tasks on a number of UCI datasets (see \url{https://archive.ics.uci.edu}) using the joint PMF recovery algorithms and dedicated data classifiers as we detailed in Sec.~\ref{sec:baselines}.  The details of the datasets are given in Table~\ref{tab:uci}. Note that in the datasets, some of the features are continuous-valued. To make the joint PMF recovery methods applicable, we follow the setup in \cite{kargas2017tensors} to discretize those continuous features. We use $I_{\sf ave}=(1/N)\sum_{n=1}^N I_n$ to denote the averaged alphabet size of the discretized features for each dataset in Table~\ref{tab:uci} \footnote{ The detailed setup and source code are available at \url{https://github.com/shahanaibrahimosu/joint-probability-estimation}.}.  
We split each dataset into training, validation, and testing sets with a ratio of $50:20:30$. 
For our approach, we estimate the joint PMF of the features and the label using the training set, and then predict the labels on the testing data by constructing an MAP predictor following \cite{kargas2017tensors}. The rank $F$ for the joint PMF recovery methods and the number of iterations needed for the methods involving EM are chosen using the validation set. We set $M=5$ for all datasets. We perform 20 trials for each dataset with randomly partitioned training/testing/validation sets, and report the mean and standard deviation of the classification accuracy.

Tables~\ref{tab:votes}-\ref{tab:mushroom} show the results for the UCI datasets `Votes', `Car', `Nursery' and `Mushroom', respectively. 
 In all cases, one can see that the proposed approaches CNMF-SPA, CNMF-OPT and CNMF-SPA-EM are promising in terms of offering competitive performance for data classification. 
 We should remark that our method is not designed for data classification, but estimating the joint PMF of the features and the label. The good performance on classification clearly support our theoretical analysis and the usefulness of the proposed algorithms.
 
 A key observation is that the CPD method in \cite{kargas2017tensors} does not perform as well compared to the proposed pairwise marginals based methods, especially when the number of data samples is small (e.g., the `Votes' dataset in Table~\ref{tab:votes}).  This echos our motivation for this work---using pairwise marginals is advantageous over using third-order ones when fewer samples are available.
 In terms of runtime, the proposed CNMF-SPA and CNMF-SPA-EM methods are faster than other methods by several orders of magnitude.

\begin{table}[t]
	\centering
	\caption{Details of UCI datasets.}
	\resizebox{.6\linewidth}{!}{\Huge
		\begin{tabular}{|l|r|r|r|r|}
			\hline
			\textbf{Dataset} & \textbf{$\#$ Samples} & \textbf{$\#$ Features} & \textbf{$I_{\text{avg}}$}  & \textbf{$\#$ Classes} \\
			\hline
			Votes & 435 & 17    & 2    & 2 \\
			\hline
			Car   & 1728  & 7     & 4     & 4 \\
			\hline
			Nursery & 12960 & 8    & 4    & 4 \\
			\hline
			Mushroom & 8124  & 22    & 6     & 2 \\
			\hline
		\end{tabular}%
	}
	\label{tab:uci}%
		\vspace{-.25cm}	
\end{table}%

\begin{table}[t!]
	\centering
	
	\caption{Classification Results on UCI Dataset `Votes'.}
	\resizebox{.6\linewidth}{!}{\Huge
			\begin{tabular}{|c|c|c|}
			\hline
			\textbf{Algorithm} & \textbf{Accuracy (\%)} & \textbf{Time (sec.)} \\
			\hline
			CNMF-SPA [Proposed] & 90.07$\pm$1.30 & 0.005 \\
			\hline
			CNMF-OPT [Proposed] & \textbf{94.94$\pm$2.13} & 9.720 \\
			\hline
			CNMF-SPA-EM [Proposed] & 92.82$\pm$2.17 & 0.018 \\
			\hline
			CTD [Kargas \textit{et al.}] & 92.37$\pm$2.88 & 6.413 \\
			\hline
			MLE-AD-EM [Yeredor \& Haardt] &90.99$\pm$4.18 & 10.556\\
			\hline
			SVM   & 94.34$\pm$1.68 & 0.057 \\
			\hline
			Logistic Regression & 94.05$\pm$1.24 & 0.301 \\
			\hline
			SVM-RBF & 92.21$\pm$2.66 & 0.012 \\
			\hline
			Naive Bayes & 90.46$\pm$1.98 & 0.032 \\
			\hline
	\end{tabular}}
	\label{tab:votes}%
		\vspace{-.25cm}
\end{table}%

\begin{table}[t!]
	
	\centering
	\caption{Classification Results on UCI Dataset `Car'.}
	\resizebox{.6\linewidth}{!}{\Huge
				\begin{tabular}{|c|c|c|}
			\hline
			\textbf{Algorithm} & \textbf{Accuracy (\%)} & \textbf{Time (sec.)} \\
			\hline
			CNMF-SPA [Proposed] & 70.31$\pm$2.17  & 0.007 \\
			\hline
			CNMF-OPT [Proposed] & 85.00$\pm$1.80 & 5.550 \\
			\hline
			CNMF-SPA-EM [Proposed] & \textbf{87.42$\pm$1.12} & 0.017\\
			\hline
			CTD [Kargas \textit{et al.}] & 84.21$\pm$1.27 & 3.601 \\
			\hline
			MLE-AD-EM [Yeredor \& Haardt] &87.25$\pm$1.41 & 6.321\\
			\hline
			SVM   & 84.59$\pm$3.21 & 0.149 \\
			\hline
			Logistic Regression & 83.09$\pm$2.42 & 1.528 \\
			\hline
			SVM-RBF & 77.13$\pm$4.29 & 0.876 \\
			\hline
			Naive Bayes & 83.32$\pm$2.23 & 0.023 \\
			\hline
		\end{tabular}%
	}
	\label{tab:uci_car}%
	\vspace{-.25cm}
\end{table}%

\begin{table}[t!]

	\centering
	\caption{Classification Results on UCI Dataset `Nursery'.}
	\resizebox{.6\linewidth}{!}{\Huge
				\begin{tabular}{|c|c|c|}
			\hline
			\textbf{Algorithm} & \textbf{Accuracy (\%)} & \textbf{Time (sec.)} \\
			\hline
			CNMF-SPA [Proposed] & 97.48$\pm$0.20 & 0.008 \\
			\hline
			CNMF-OPT [Proposed] & \textbf{98.16$\pm$0.22} & 6.719 \\
			\hline
			CNMF-SPA-EM [Proposed] & 98.04$\pm$0.35 & 0.041 \\
			\hline
			CTD [Kargas \textit{et al.}] & 97.70$\pm$0.22 & 4.318 \\
			\hline
			MLE-AD-EM [Yeredor \& Haardt] &98.04$\pm$0.33 & 8.920\\
			\hline			
			SVM   & 97.42$\pm$0.79 & 0.706 \\
			\hline
			Logistic Regression & 98.02$\pm$0.15 & 5.019 \\
			\hline
			SVM-RBF & 70.92$\pm$4.70 & 0.738 \\
			\hline
			Naive Bayes & 98.03$\pm$0.20 & 0.046 \\
			\hline
		\end{tabular}
	}
	\label{tab:nursery}%
	\vspace{-.25cm}
\end{table}%

\begin{table}[t!]
	\centering
	\caption{Classification Results on UCI Dataset `Mushroom'.}
	\resizebox{.6\linewidth}{!}{\Huge
		\begin{tabular}{|c|c|c|}
			\hline
			\textbf{Algorithm} & \textbf{Accuracy (\%)} & \textbf{Time (sec.)} \\
			\hline
			CNMF-SPA [Proposed] & 91.86$\pm$6.33 & 0.025 \\
			\hline
			CNMF-OPT [Proposed] & 96.70$\pm$0.82 & 39.349 \\
			\hline
			CNMF-SPA-EM [Proposed] & \textbf{99.47$\pm$0.68} & 0.210 \\
			\hline
			CTD [Kargas \textit{et al.}] & 96.09$\pm$0.46 & 24.116 \\
			\hline
			SVM   & 97.60$\pm$0.28 & 34.443 \\
			\hline
			Logistic Regression & 96.75$\pm$0.62 & 2.875 \\
			\hline
			SVM-RBF & 97.16$\pm$0.41 & 1.523 \\
			\hline
			Naive Bayes & 94.60$\pm$0.56 & 0.063 \\
			\hline
		\end{tabular}%
	}
	\label{tab:mushroom}%
		\vspace{-.25cm}
\end{table}%

\subsection{Real-data Experiments: Recommender Systems}	
We test the approaches using the MovieLens 20M dataset \cite{harper2015movielens}. Again, following the evaluation strategy in \cite{kargas2017tensors}, we first round the ratings to the closest integers so that every movie's rating resides in $\{1,2,\ldots,5\}$. We choose different movie genres, namely, action, animation and romance, and select a subset of movies from each genre. Each subset contains 30 movies. Hence, for every subset, $N=30$, $Z_i$ represents the rating of movie $i$, and $Z_i$'s alphabet is $\{1,2,\ldots,5\}$. We predict the rating for a movie, say, movie $N$, by user $k$ via computing $\mathbb{E}[i_N|r_k(1),\ldots,r_k(N-1)]$, where $r_k(i)$ denotes the rating of movie $i$ by user $k$ (i.e., using the MMSE estimator). This can be done via estimating ${\sf Pr}(i_1,\ldots,i_N)$.
Note that we leave out the MLE-AD-EM algorithm for the MovieLens datasets due to its scalability challenges.

We create the validation and testing sets by randomly hiding  $20\%$ and $30\%$ of the dataset for each trial. The remaining $50\%$ is used for training (i.e., learning the joint PMF in our approach). 
The results are taken from 20 random trials as before. In addition to the root MSE (RMSE) performance, we also report the \textit{mean absolute error} (MAE) of the predicted ratings, which is more meaningful in the presence of outlying trials; see the definitions of RMSE and MAE in \cite{kargas2017tensors}.

Table~\ref{tab:action}-\ref{tab:romance} present the results for three different subsets, respectively. One can see that the proposed methods are promising---their predictions are either comparable or better than the BMF approach. Note that BMF is specialized for recommender systems, while the proposed approaches are for generic joint PMF recovery. 
The fact that our methods perform better suggests that the underlying joint PMF is well captured by the proposed CNMF approach.


\begin{table}[t!]
	\centering
	\caption{Movie Recommendation Results on the Action set.  }
	\label{tab:action}%
	\resizebox{.6\linewidth}{!}{\Huge
		\begin{tabular}{|c|c|c|c|}
			\hline
			\textbf{Algorithm} & \textbf{RMSE} & \textbf{MAE} & \textbf{Time (s)} \\
			\hline
			CNMF-SPA [Proposed] & 0.8497$\pm$0.0114 & 0.6663$\pm$0.0059 & 0.031 \\
			\hline
			CNMF-OPT [Proposed] & 0.8167$\pm$0.0035 & 0.6321$\pm$0.0040 & 70.018 \\
			\hline
			CNMF-SPA-EM [Proposed] & \textbf{0.7840$\pm$0.0025} & \textbf{0.5991$\pm$0.0031} & 2.424 \\
			\hline
			CTD [Kargas \textit{et al.}] & 0.8770$\pm$0.0088 & 0.6649$\pm$0.0076 & 52.253 \\
			\hline
			BMF   & 0.8011$\pm$0.0012 & 0.6260$\pm$0.0013 & 46.637 \\
			\hline
			Global Average  & 0.9468$\pm$0.0018 & 0.6956$\pm$0.0017 & -- \\
			\hline
			User Average & 0.8950$\pm$0.0010 & 0.6825$\pm$0.0010 & -- \\
			\hline
			Movie Average & 0.8847$\pm$0.0018 & 0.6982$\pm$0.0012 & -- \\
			\hline
		\end{tabular}%
		
	}
	\vspace{-.25cm}	
\end{table}%
\begin{table}[t!]
	\centering
	\caption{Movie Recommendation Results on the Animation set.}
	\resizebox{.6\linewidth}{!}{\Huge
		\begin{tabular}{|c|c|c|c|}
			\hline
			\textbf{Algorithm} & \textbf{RMSE} & \textbf{MAE} & \textbf{Time (s)} \\
			\hline
			CNMF-SPA [Proposed] & 0.8705$\pm$0.0095 & 0.6798$\pm$0.0060 & 0.028 \\
			\hline
			CNMF-OPT [Proposed] & \textbf{0.8124$\pm$0.0031} & \textbf{0.6241$\pm$0.0041} & 61.018 \\
			\hline
			CNMF-SPA-EM [Proposed] & 0.8170$\pm$0.0075 & 0.6317$\pm$0.0086 & 2.424 \\
			\hline
			CTD [Kargas \textit{et al.}] & 0.8300$\pm$0.0053 & 0.6335$\pm$0.0029 & 48.253 \\
			\hline
			BMF   & 0.8408$\pm$0.0023 & 0.6553$\pm$0.0015 & 46.637 \\
			\hline
			Global Average  & 0.9371$\pm$0.0021 & 0.7042$\pm$0.0014 & -- \\
			\hline
			User Average & 0.8850$\pm$0.0009 & 0.6632$\pm$0.0011 & -- \\
			\hline
			Movie Average & 0.9027$\pm$0.0019 & 0.6900$\pm$0.0013 & -- \\
			\hline
		\end{tabular}%
	}
	\label{tab:anim}%
	\vspace{-.25cm}
\end{table}%

\begin{table}[t!]
	\centering
	\caption{Movie Recommendation Results on the Romance set.}
	\resizebox{.6\linewidth}{!}{\Huge
		\begin{tabular}{|c|c|c|c|}
			\hline
			\textbf{Algorithm} & \textbf{RMSE} & \textbf{MAE} & \textbf{Time (s)} \\
			\hline
			CNMF-SPA [Proposed] & 0.9280$\pm$0.0066 & 0.7376$\pm$0.0076 & 0.032 \\
			\hline
			CNMF-OPT [Proposed] & 0.9076$\pm$0.0014 & 0.7123$\pm$0.0029 & 60.762 \\
			\hline
			CNMF-SPA-EM [Proposed] & \textbf{0.9057$\pm$0.0052} & \textbf{0.7106$\pm$0.0049} & 1.881 \\
			\hline
			CTD [Kargas \textit{et al.}] & 0.9498$\pm$0.0085 & 0.7416$\pm$0.0054 & 47.010 \\
			\hline
			BMF   & 0.9337$\pm$0.0007 & 0.7463$\pm$0.0009 & 31.823 \\
			\hline
			Global Average  & 1.0019$\pm$0.0007 & 0.8078$\pm$0.0008 & -- \\
			\hline
			User Average & 1.0195$\pm$0.0007 & 0.7862$\pm$0.0008 &  --\\
			\hline
			Movie Average & 0.9482$\pm$0.0007 & 0.7599$\pm$0.0007 &  --\\
			\hline
		\end{tabular}%
	}
	\label{tab:romance}%
	\vspace{-.25cm}
\end{table}%

\section{Conclusion}

We proposed a new framework for recovering joint PMF of a finite number of discrete RVs from marginal distributions. Unlike a recent approach that relies on three-dimensional marginals, our approach only uses two-dimensional marginals, which naturally has reduced sample complexity and a lighter computational burden.
We showed that under certain conditions, the recoverability of the joint PMF can be guaranteed---with only noisy pairwise marginals accessible. We proposed a coupled NMF formulation as the optimization surrogate for this task, and proposed to employ a Gram--Schmidt-like scalable algorithm as its initialization. 
We showed that the initialization method is effective even under the finite-sample case (whereas the existing tensor-based method does not offer sample complexity characterization). We also showed that the Gram--Schmidt-like algorithm can provably enhance the performance of an EM algorithm that is proposed from an ML estimation perspective for the joint PMF recovery problem.
We tested the proposed approach on classification tasks and recommender systems, and the results corroborate our analyses.

\color{black}

\bibliographystyle{IEEEtran}



\begin{thebibliography}{10}
\providecommand{\url}[1]{#1}
\csname url@samestyle\endcsname
\providecommand{\newblock}{\relax}
\providecommand{\bibinfo}[2]{#2}
\providecommand{\BIBentrySTDinterwordspacing}{\spaceskip=0pt\relax}
\providecommand{\BIBentryALTinterwordstretchfactor}{4}
\providecommand{\BIBentryALTinterwordspacing}{\spaceskip=\fontdimen2\font plus
\BIBentryALTinterwordstretchfactor\fontdimen3\font minus
  \fontdimen4\font\relax}
\providecommand{\BIBforeignlanguage}[2]{{%
\expandafter\ifx\csname l@#1\endcsname\relax
\typeout{** WARNING: IEEEtran.bst: No hyphenation pattern has been}%
\typeout{** loaded for the language `#1'. Using the pattern for}%
\typeout{** the default language instead.}%
\else
\language=\csname l@#1\endcsname
\fi
#2}}
\providecommand{\BIBdecl}{\relax}
\BIBdecl

\bibitem{trees2013detection}
H.~L. {Van Trees}, K.~L. {Bell}, and Z.~{Tian}, \emph{Detection Estimation and
  Modulation Theory, Part I: Detection, Estimation, and Filtering Theory, 2nd
  Edition}.\hskip 1em plus 0.5em minus 0.4em\relax John Wiley and Sons, New
  York, NY, 2013.

\bibitem{ephraim1992bayesian}
Y.~{Ephraim}, ``A {B}ayesian estimation approach for speech enhancement using
  hidden markov models,'' \emph{IEEE Trans. Signal Process.}, vol.~40, no.~4,
  pp. 725--735, 1992.

\bibitem{colavolpe2005map}
G.~{Colavolpe} and A.~{Barbieri}, ``On {MAP} symbol detection for {ISI}
  channels using the {U}ngerboeck observation model,'' \emph{IEEE Commun.
  Lett.}, vol.~9, no.~8, pp. 720--722, 2005.

\bibitem{rangan2012asymptotic}
S.~{Rangan}, A.~K. {Fletcher}, and V.~K. {Goyal}, ``Asymptotic analysis of
  {MAP} estimation via the replica method and applications to compressed
  sensing,'' \emph{IEEE Trans. Inf. Theory}, vol.~58, no.~3, pp. 1902--1923,
  2012.

\bibitem{farag2005unified}
A.~A. {Farag}, R.~M. {Mohamed}, and A.~{El-Baz}, ``A unified framework for
  {MAP} estimation in remote sensing image segmentation,'' \emph{IEEE Trans.
  Geosci. Remote Sens.}, vol.~43, no.~7, pp. 1617--1634, 2005.

\bibitem{cover1999elements}
T.~M. Cover, \emph{Elements of information theory}.\hskip 1em plus 0.5em minus
  0.4em\relax Hoboken, NJ, USA: John Wiley \& Sons, 1999.

\bibitem{taleb1999source}
A.~Taleb and C.~Jutten, ``Source separation in post-nonlinear mixtures,''
  \emph{IEEE Trans. Signal Process.}, vol.~47, no.~10, pp. 2807--2820, 1999.

\bibitem{thevenaz2000optimization}
P.~{Thevenaz} and M.~{Unser}, ``Optimization of mutual information for
  multiresolution image registration,'' \emph{IEEE Trans. Image Process.},
  vol.~9, no.~12, pp. 2083--2099, 2000.

\bibitem{yang2007mimo}
Y.~Yang and R.~S. Blum, ``Mimo radar waveform design based on mutual
  information and minimum mean-square error estimation,'' \emph{IEEE Trans.
  Aerosp. Electron. Syst. (}, vol.~43, no.~1, pp. 330--343, 2007.

\bibitem{murphy2012machine}
K.~P. Murphy, \emph{Machine learning: a probabilistic perspective}.\hskip 1em
  plus 0.5em minus 0.4em\relax Cambridge, MA, USA: MIT press, 2012.

\bibitem{yeredor2019maximum}
A.~{Yeredor} and M.~{Haardt}, ``Maximum likelihood estimation of a low-rank
  probability mass tensor from partial observations,'' \emph{IEEE Signal
  Process. Lett.}, vol.~26, no.~10, pp. 1551--1555, Oct 2019.

\bibitem{kargas2017tensors}
N.~Kargas, N.~D. Sidiropoulos, and X.~Fu, ``Tensors, learning, and
  '{K}olmogorov extension'for finite-alphabet random vectors,'' \emph{IEEE
  Trans. Signal Process.}, vol.~66, no.~18, pp. 4854--4868, 2018.

\bibitem{dabeer2013joint}
O.~{Dabeer}, ``Joint probability mass function estimation from asynchronous
  samples,'' \emph{IEEE Trans. Signal Process.}, vol.~61, no.~2, pp. 355--364,
  2013.

\bibitem{hunsop2008newapproach}
{Hunsop Hong} and D.~{Schonfeld}, ``A new approach to constrained
  expectation-maximization for density estimation,'' in \emph{IEEE
  International Conference on Acoustics, Speech and Signal Processing}, 2008,
  pp. 3689--3692.

\bibitem{jenq1994nonparametric}
{Jenq-Neng Hwang}, {Shyh-Rong Lay}, and A.~{Lippman}, ``Nonparametric
  multivariate density estimation: a comparative study,'' \emph{IEEE Trans.
  Signal Process.}, vol.~42, no.~10, pp. 2795--2810, 1994.

\bibitem{wainwright2019high}
M.~J. Wainwright, \emph{High-dimensional statistics: A non-asymptotic
  viewpoint}.\hskip 1em plus 0.5em minus 0.4em\relax Cambridge, UK: Cambridge
  University Press, 2019, vol.~48.

\bibitem{kay1993fundamentals}
S.~M. Kay, \emph{Fundamentals of statistical signal processing}.\hskip 1em plus
  0.5em minus 0.4em\relax Upper Saddle River, NJ, USA: Prentice Hall PTR, 1993.

\bibitem{bishop2006pattern}
C.~M. Bishop, \emph{Pattern Recognition and Machine Learning (Information
  Science and Statistics)}.\hskip 1em plus 0.5em minus 0.4em\relax Berlin,
  Heidelberg: Springer-Verlag, 2006.

\bibitem{jordan1998learning}
M.~I. Jordan, \emph{Learning in graphical models}.\hskip 1em plus 0.5em minus
  0.4em\relax Cambridge, MA, USA: MIT Press, 1998, vol.~89.

\bibitem{tipping2001sparse}
M.~E. Tipping, ``Sparse {B}ayesian learning and the relevance vector machine,''
  \emph{J. Mach. Learn. Res.}, vol.~1, pp. 211--244, Sep. 2001.

\bibitem{blei2003latent}
D.~M. Blei, A.~Y. Ng, and M.~I. Jordan, ``Latent dirichlet allocation,''
  \emph{J. Mach. Learn. Res.}, vol.~3, pp. 993--1022, Mar. 2003.

\bibitem{hillar2013most}
C.~J. Hillar and L.-H. Lim, ``Most tensor problems are {NP}-hard,''
  \emph{Journal of the ACM (JACM)}, vol.~60, no.~6, p.~45, 2013.

\bibitem{ibrahim2020recover}
S.~Ibrahim and X.~Fu, ``Recovering joint {PMF} from pairwise marginals,'' in
  \emph{54th Asilomar Conference on Signals, Systems and Computers}, Oct. 2020,
  pp. 356--360.

\bibitem{harshman1970foundations}
R.~Harshman, ``Foundations of the parafac procedure: Models and conditions for
  an ``explanatory'' multi-modal factor analysis,'' \emph{UCLA Working Papers
  in Phonetics}, vol.~16, 1970.

\bibitem{sidiropoulos2017tensor}
N.~D. Sidiropoulos, L.~De~Lathauwer, X.~Fu, K.~Huang, E.~E. Papalexakis, and
  C.~Faloutsos, ``Tensor decomposition for signal processing and machine
  learning,'' \emph{IEEE Trans. Signal Process.}, vol.~65, no.~13, pp.
  3551--3582, 2017.

\bibitem{han2015minimax}
Y.~Han, J.~Jiao, and T.~Weissman, ``Minimax estimation of discrete
  distributions under l1 loss,'' \emph{IEEE Trans. Info. Theory}, vol.~61,
  no.~11, pp. 6343--6354, 2015.

\bibitem{fu2018nonnegative}
X.~Fu, K.~Huang, N.~D. Sidiropoulos, and W.-K. Ma, ``Nonnegative matrix
  factorization for signal and data analytics: Identifiability, algorithms, and
  applications,'' \emph{IEEE Signal Process. Mag.}, vol.~36, no.~2, pp. 59--80,
  March 2019.

\bibitem{gillis2014and}
N.~Gillis, ``The why and how of nonnegative matrix factorization,''
  \emph{Regularization, Optimization, Kernels, and Support Vector Machines},
  vol.~12, pp. 257--291, 2014.

\bibitem{donoho2003does}
D.~Donoho and V.~Stodden, ``When does non-negative matrix factorization give a
  correct decomposition into parts?'' in \emph{Advances in Neural Information
  Processing Systems}, 2003, pp. 1141--1148.

\bibitem{Gillis2012}
N.~Gillis and S.~Vavasis, ``Fast and robust recursive algorithms for separable
  nonnegative matrix factorization,'' \emph{IEEE Trans. Pattern Anal. Mach.
  Intell.}, vol.~36, no.~4, pp. 698--714, April 2014.

\bibitem{fu2014self}
X.~Fu, W.-K. Ma, T.-H. Chan, and J.~M. Bioucas-Dias, ``Self-dictionary sparse
  regression for hyperspectral unmixing: Greedy pursuit and pure pixel search
  are related,'' \emph{IEEE J. Sel. Topics Signal Process.}, vol.~9, no.~6, pp.
  1128--1141, 2015.

\bibitem{arora2012practical}
S.~Arora, R.~Ge, Y.~Halpern, D.~Mimno, A.~Moitra, D.~Sontag, Y.~Wu, and M.~Zhu,
  ``A practical algorithm for topic modeling with provable guarantees,'' in
  \emph{Proceedings of International Conference on Machine Learning}, 2013.

\bibitem{VMAX}
T.-H. Chan, W.-K. Ma, A.~Ambikapathi, and C.-Y. Chi, ``A simplex volume
  maximization framework for hyperspectral endmember extraction,'' \emph{IEEE
  Trans. Geosci. Remote Sens.}, vol.~49, no.~11, pp. 4177--4193, Nov. 2011.

\bibitem{MC01}
U.~Ara\'ujo, B.~Saldanha, R.~Galv\~ao, T.~Yoneyama, H.~Chame, and V.~Visani,
  ``The successive projections algorithm for variable selection in
  spectroscopic multicomponent analysis,'' \emph{Chemometrics and Intelligent
  Laboratory Systems}, vol.~57, no.~2, pp. 65--73, 2001.

\bibitem{huang2014non}
K.~Huang, N.~Sidiropoulos, and A.~Swami, ``Non-negative matrix factorization
  revisited: Uniqueness and algorithm for symmetric decomposition,'' \emph{IEEE
  Trans. Signal Process.}, vol.~62, no.~1, pp. 211--224, 2014.

\bibitem{fu2015blind}
X.~Fu, W.-K. Ma, K.~Huang, and N.~D. Sidiropoulos, ``Blind separation of
  quasi-stationary sources: Exploiting convex geometry in covariance domain,''
  \emph{IEEE Trans. Signal Process.}, vol.~63, no.~9, pp. 2306--2320, May 2015.

\bibitem{fu2016robust}
X.~{Fu}, K.~{Huang}, B.~{Yang}, W.~{Ma}, and N.~D. {Sidiropoulos}, ``Robust
  volume minimization-based matrix factorization for remote sensing and
  document clustering,'' \emph{IEEE Trans. Signal Process.}, vol.~64, no.~23,
  pp. 6254--6268, Dec 2016.

\bibitem{fu2018identifiability}
X.~Fu, K.~Huang, and N.~D. Sidiropoulos, ``On identifiability of nonnegative
  matrix factorization,'' \emph{IEEE Signal Process. Lett.}, vol.~25, no.~3,
  pp. 328--332, 2018.

\bibitem{ibrahim2019crowdsourcing}
S.~Ibrahim, X.~Fu, N.~Kargas, and K.~Huang, ``Crowdsourcing via pairwise
  co-occurrences: Identifiability and algorithms,'' in \emph{Advances in Neural
  Information Processing Systems}, 2019, pp. 7845--7855.

\bibitem{kargas2019learning}
N.~Kargas and N.~D. Sidiropoulos, ``Learning mixtures of smooth product
  distributions: Identifiability and algorithm,'' in \emph{The 22nd
  International Conference on Artificial Intelligence and Statistics}, 2019,
  pp. 388--396.

\bibitem{yang2019learning}
B.~{Yang}, X.~{Fu}, N.~D. {Sidiropoulos}, and K.~{Huang}, ``Learning nonlinear
  mixtures: Identifiability and algorithm,'' \emph{IEEE Trans. Signal
  Process.}, vol.~68, pp. 2857--2869, 2020.

\bibitem{wedin1973perturbation}
P.-{\AA}. Wedin, ``Perturbation theory for pseudo-inverses,'' \emph{BIT
  Numerical Mathematics}, vol.~13, no.~2, pp. 217--232, Jun 1973.

\bibitem{zhang2014spectral}
Y.~Zhang, X.~Chen, D.~Zhou, and M.~I. Jordan, ``Spectral methods meet {EM}: A
  provably optimal algorithm for crowdsourcing,'' \emph{J. Mach. Learn. Res.},
  vol.~17, no. 102, pp. 1--44, 2016.

\bibitem{koren2009matrix}
Y.~Koren, R.~Bell, and C.~Volinsky, ``Matrix factorization techniques for
  recommender systems,'' \emph{Computer}, vol.~42, no.~8, pp. 30--37, Aug.
  2009.

\bibitem{harper2015movielens}
F.~M. Harper and J.~A. Konstan, ``The movielens datasets: History and
  context,'' \emph{ACM Trans. Interact. Intell. Syst.}, vol.~5, no.~4, pp.
  19:1--19:19, Dec. 2015.

\end{thebibliography}


\appendix

\section{Proof of Theorem~\ref{prop:prop2}}\label{app:prop1}
The first part of the proof is reminiscent of the identifiability proof of a similar problem in \cite{ibrahim2019crowdsourcing}.
The difference lies in how to handle ``fat'' $\A_n$'s. In \cite{ibrahim2019crowdsourcing}, the $\A_n$'s are all square nonsingular matrices.
In our case, $\A_n$'s can be `fat' matrices. This creates some more challenges.

Step 1):\
Under our assumption, there exists a splitting denoted by ${\cal S}_1 = \{\ell_1,\ldots,\ell_{M}\},~{\cal S}_2 =  \{\ell_{M+1},\ldots,\ell_{Q}\}$ $
{\cal S}_1\cup {\cal S}_1\subseteq [N],~{\cal S}_1\cap{\cal S}_2 = \emptyset$
such that
\begin{equation}\label{eq:constructedX_proof}
\begin{aligned}
\widetilde{\bm X}=& \begin{bmatrix} \bm X_{\ell_1,\ell_{M+1}}& \ldots & {\bm X}_{\ell_1,\ell_Q} \\ \vdots & \vdots &\vdots \\ {\bm X}_{\ell_M,\ell_{M+1}} & \ldots & {\bm X}_{\ell_M,\ell_Q} \end{bmatrix} = \bm W\bm H^{\top}, 
\end{aligned}
\end{equation}
where $\W=[\A_{\ell_1}^\T,\ldots,\A_{\ell_M}^\T]^\T $ and {$\H=[\A_{\ell_{M+1}}^\T,\ldots,\A_{\ell_{Q}}^\T]^\T\bm D(\bm \lambda)$}.
Note that $Q<N$ is allowed. Since we have assumed that $\W$ and $\H$ are sufficiently scattered (since  $\|\bm \lambda\|_0=F$ by the assumption that ${\sf Pr}(f)\neq 0$ for all $f$). By Theorem~4 in \cite{huang2014non}, one can see that by solving \eqref{eq:coupled_mat}, we always have
$\widehat{\W}=\W\bm D\bm \Pi$ and $\widehat{\bm H}=\H\bm \Pi\bm D^{-1}$, where $\widehat{\W}=[\widehat{\A}_{\ell_1}^\T,\ldots,\widehat{\A}_{\ell_M}^\T]^\T$ and $\widehat{\H}=[\widehat{\A}_{\ell_{M+1}}^\T,\ldots,\widehat{\A}_{\ell_Q}^\T]^\T$.
Since the column norms of $\A_n$'s are known, by column normalization of each $\widehat{\bm A}_n$ with respect to $\ell_1$-norm, we have
\begin{equation}\label{eq:inS}
\widehat{\bm A}_n = \bm A_n\bm \Pi,~\forall n\in{\cal S}_1\cup{\cal S}_2  .    
\end{equation}

Step 2):\ Now we show that $\A_j$ for $j\notin {\cal S}_1\cup{\cal S}_2 $ can also be identified up to the same permutation ambiguity; i.e., we hope to show that any $\widehat{\A}_j$ that is a solution of \eqref{eq:coupled_mat} satisfies $\widehat{\A}_j =\A_j\bm \Pi$, with the same $\bm \Pi$ as in \eqref{eq:inS}.

Let us denote $\widehat{\A}_n$ and $\widehat{\bm \lambda}$ as any optimal solution of Problem~\eqref{eq:coupled_mat}.
Since there exists a construction \eqref{eq:constructedX_proof}, and $\widehat{\A}_{\ell_q}=\A_{\ell_q}\bm \Pi$, $q \in \{1,\hdots,Q\}$ can be identified by our construction, it is easy to see that $\widehat{\bm \lambda}=\bm \Pi^\T \bm \lambda.$
Indeed, from \eqref{eq:constructedX_proof}, one can see that any optimal solution of Problem~\eqref{eq:coupled_mat} satisfies
\begin{equation}\label{eq:constructedX_proof2}
\begin{aligned}
\widetilde{\bm X}
= &\underbrace{\begin{bmatrix}
	\widehat{\bm A}_{\ell_1}\\ \vdots\\ \widehat{\bm A}_{\ell_M}
	\end{bmatrix}}_{\bm Z_1}{\bm D}(\widehat{\bm \lambda })\underbrace{[\widehat{\bm A}_{\ell_{M+1}}^\T,\ldots,\widehat{\bm A}_{\ell_{Q}}^\T]}_{{\bm Z}_2^\T}.
\end{aligned}
\end{equation}
Again, since $\Z_1$ and $\Z_2$ are sufficient scattered, we have ${\rm rank}(\Z_1)={\rm rank}(\Z_2)=F$. Consequently, we have
$     \widehat{\bm \lambda} = \left( \bm Z_2 \odot\bm Z_1 \right)^{\dagger} {\rm vec}(\widetilde{\bm X}) = \bm \Pi^\T\bm \lambda,$
due to \eqref{eq:constructedX_proof} and the fact that 
$  {\rm rank}( \bm Z_2 \odot\bm Z_1)=F, $
which is a result of ${\rm rank}(\Z_1)={\rm rank}(\Z_2)=F$ \cite{sidiropoulos2017tensor}.
In the above, `$\odot$' denotes the Khatri-Rao product.

Note that by condition iii), we have $r_1,\ldots,r_T \in {\cal S}_1\cup {\cal S}_2$ such that
$ \X_{r_tj} =\widehat{\A}_{r_t} \bm D(\widehat{\bm \lambda})\widehat{\bm A}_{j}^\T,$
for $t=1,\ldots,T$.
These equalities can be re-expressed as follows
\begin{equation}
\begin{bmatrix}
\X_{r_1j}\\
\vdots\\
\X_{r_Tj}
\end{bmatrix}
=
\begin{bmatrix}
{\A}_{r_1}\\
\vdots\\
{\A}_{r_T}
\end{bmatrix}{\bm D}({\bm \lambda}){\bm A}_j^\T
=
\begin{bmatrix}
\widehat{\A}_{r_1}\\
\vdots\\
\widehat{\A}_{r_T}
\end{bmatrix}{\bm D}(\widehat{\bm \lambda})\widehat{\bm A}_j^\T.
\end{equation}
Note that 
$  {\bm D}(\widehat{\bm \lambda}) = \bm \Pi^\T \bm D(\bm \lambda) \bm \Pi^\T, $
since $\widehat{\bm \lambda}=\bm \Pi^\T\bm \lambda$.
By the assumption that any $T$-concatenation of $\A_n$'s have full column rank and $\widehat{\bm A}_{r_t}=\A_{r_t}\bm \Pi$, one can see that
$\widehat{\bm A}_j = \A_j\bm \Pi,~j\notin{\cal S}_1\cup {\cal S}_2.$
Since we have assumed that for every $j\notin{\cal S}_1\cup {\cal S}_2$, there exists a set of $r_1,\ldots,r_T$ such that condition iii) is satisfied.

Once $\bm A_n$'s and $\bm \lambda$ are estimated, the joint probability ${\sf Pr}(i_1,i_2,\ldots,i_N), i_n \in \{1,\dots, I_n\}$ can be estimated by
\begin{equation}\label{eq:pmf_latent_var1}
\begin{aligned}
\widehat{\sf Pr}(i_1,i_2,\ldots,i_N) &= \sum_{f=1}^F \widehat{\bm \lambda}(f) \prod_{n=1}^N \widehat{\bm  A}_n(i_n , f)\\
&=\sum_{f=1}^F {\bm \lambda}(f) \prod_{n=1}^N{\bm  A}_n(i_n , f).
\end{aligned}
\end{equation}  
The last equality holds because the unified permutation ambiguity across $\bm A_n$'s and $\bm \lambda$ does not affect the reconstruction of $\widehat{\sf Pr}(i_1,i_2,\ldots,i_N)$.

\section{Proof of Theorem~\ref{thm:spabound}}\label{app:spa}

Consider a matrix factorization model as below:
\begin{align} \label{eq:noisynmf}
\widetilde{\X}=\W\H^\T,
\end{align}
where ${\bm W}\in\mathbb{R}^{L\times F}$, $\bm H\in\mathbb{R}^{K\times F}$, $\bm W\geq \bm 0$ and $\bm H\geq \bm 0$.

The SPA algorithm for factoring $\widetilde{\X}$ into $\W$ and $\H$ consists of two key steps:
\begin{mdframed}
\begin{enumerate}
    \item Normalize the columns of $\widetilde{\X}$ w.r.t. their $\ell_1$ norms.
    \item Apply an $F$ step Gram-Schmidt-like procedure to pick up $\bm \varLambda=\{l_1,\ldots,l_F\}$.
\end{enumerate}
\end{mdframed}
Note that $\widetilde{\X}(:,\bm \varLambda)= \bm W\bm \Sigma$; i.e., $\W$ can be identified up to a column scaling ambiguity through this simple procedure.
Gillis and Vavasis \cite{Gillis2012} have shown that under the model in \eqref{eq:noisynmf}, SPA is provably robust to noise in estimating the factor matrix $\bm W$ (see Lemma~\ref{lem:gillis} in Appendix~\ref{app:lemmata} from \cite{Gillis2012}). To proceed, first characterize the ``noise'' in our virtual NMF model.

Consider the pairwise marginals $\bm X_{jk}$'s which are used to construct the matrix $\widetilde{\X}$ in \eqref{eq:constructedX}.
$\bm X_{jk}$'s are estimated by sample averaging of a finite number of realizations and thus the estimated $\bm X_{jk}$ (denoted as $\widehat{\bm X}_{jk }$) is always noisy; i.e., we have
\begin{align}
\widehat{\bm X}_{jk } = \bm X_{jk} +{\bm N}_{jk},
\end{align}
where the noise matrix ${\bm N}_{jk} \in \mathbb{R}^{I\times I}$, assuming $I_n=I$ for all $n \in \{1,\dots,N\}$.
We have the following proposition:
\begin{proposition} \label{prop:sample}
	Let $p \in (0,1]$ be the probability that an RV is observed. Let $S$ be the number of available realizations of $N$ RVs. Assume that $ p \ge (\frac{8}{S}\log(2/\delta))^{1/2}$. Then, with probability at least $1-\delta$, 
	$
	\| \bm X_{jk}-\widehat{ \bm X}_{jk}\|_{\rm F}=\|{\bm N}_{jk}\|_{\rm F} \leq \phi,$
	holds for any distinct $j,k$ where $\phi = \frac{\sqrt{2}(1+\sqrt{{\rm log}(2/\delta)})}{(p\sqrt{S})}$.
\end{proposition}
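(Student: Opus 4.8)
\emph{Proof plan.} The plan is to treat the missing-data mechanism and the randomness of the realized values as two separate sources of randomness, conditioning on the former first. Write $\mathcal{I}_{jk}=\{s\in\{1,\dots,S\}:\ Z_j\text{ and }Z_k\text{ are both observed in }\bm d_s\}$ and $N_{jk}=|\mathcal{I}_{jk}|$. Since each coordinate of $\bm d_s$ is revealed independently with probability $p$, we have $N_{jk}\sim\mathrm{Binomial}(S,p^2)$; and under the ``missing completely at random'' assumption, conditioned on $\mathcal{I}_{jk}$ the co-realizations $(\bm d_s(j),\bm d_s(k))$, $s\in\mathcal{I}_{jk}$, are i.i.d.\ draws from ${\sf Pr}(Z_j,Z_k)$. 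The sample-average estimator can then be written as $\widehat{\bm X}_{jk}=\tfrac{1}{N_{jk}}\sum_{s\in\mathcal{I}_{jk}}\bm e_{i_j(s)}\bm e_{i_k(s)}^{\T}$, i.e., a conditional empirical distribution formed from $N_{jk}$ i.i.d.\ samples.

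First I would lower-bound $N_{jk}$: by the multiplicative Chernoff bound for the binomial, $\Pr\!\big(N_{jk}<Sp^2/2\big)\le\exp(-Sp^2/8)$, and the hypothesis $p\ge(\tfrac{8}{S}\log(2/\delta))^{1/2}$ makes this at most $\delta/2$. Next, conditioning on $\mathcal{I}_{jk}$ with $N_{jk}=m\ge Sp^2/2$, I would apply a bounded-difference (McDiarmid) argument to $g:=\|\widehat{\bm X}_{jk}-\bm X_{jk}\|_{\rm F}$ viewed as a function of the $m$ i.i.d.\ rank-one indicator matrices $\bm V_s=\bm e_{i_j(s)}\bm e_{i_k(s)}^{\T}$. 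Two ingredients are needed: (i) $\mathbb{E}[g]\le\sqrt{\mathbb{E}[g^2]}\le 1/\sqrt{m}$, since $\mathbb{E}\|\tfrac1m\sum_s(\bm V_s-\bm X_{jk})\|_{\rm F}^2=\tfrac1{m^2}\sum_s\big(1-\|\bm X_{jk}\|_{\rm F}^2\big)\le 1/m$ by independence; and (ii) replacing a single $\bm V_s$ by another indicator matrix changes $\tfrac1m\sum_s\bm V_s$ by a matrix of Frobenius norm at most $\sqrt{2}/m$ (two distinct rank-one indicator matrices are exactly $\sqrt2$ apart), so $g$ has bounded differences $\sqrt2/m$. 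McDiarmid then gives $\Pr(g\ge\mathbb{E}[g]+t)\le\exp(-mt^2)$; choosing $t=\sqrt{\log(2/\delta)/m}$ yields $g\le(1+\sqrt{\log(2/\delta)})/\sqrt m\le\sqrt2(1+\sqrt{\log(2/\delta)})/(p\sqrt S)=\phi$ with conditional probability at least $1-\delta/2$.

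Finally I would take a union bound over the two failure events --- $\{N_{jk}<Sp^2/2\}$ and the conditional deviation event --- to conclude that $\|\bm N_{jk}\|_{\rm F}=\|\widehat{\bm X}_{jk}-\bm X_{jk}\|_{\rm F}\le\phi$ holds with probability at least $1-\delta$ for a fixed pair $(j,k)$; a further union bound over the finitely many pairs entering the construction of $\widetilde{\bm X}$ (if a simultaneous guarantee is wanted) only rescales $\delta$. The main obstacle I anticipate is the careful bookkeeping of the two-stage randomness --- making precise that conditioning on the observation mask leaves a clean i.i.d.\ empirical-average problem --- together with pinning down the exact constant in $\phi$: this forces one to exploit the rank-one indicator structure of the $\bm V_s$ in the bounded-difference step (a cruder bound costs an extra $\sqrt2$), and to notice that $p\ge(\tfrac{8}{S}\log(2/\delta))^{1/2}$ is precisely the threshold that makes the Chernoff step deliver $N_{jk}\gtrsim Sp^2$ at the required confidence.
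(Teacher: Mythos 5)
Your proposal is correct and follows essentially the same route as the paper: a Chernoff lower bound on the number $N_{jk}\sim\mathrm{Binomial}(S,p^2)$ of co-observed pairs (with $t=1/2$, so the hypothesis on $p$ makes the failure probability at most $\delta/2$), followed by a concentration bound of the form $(1+\sqrt{\log(2/\delta)})/\sqrt{N_{jk}}$ on the Frobenius error of the conditional empirical distribution, and a union bound. The only difference is that the paper outsources the second step to Lemma~19 of \cite{zhang2014spectral}, whereas you re-derive it explicitly via McDiarmid's inequality with bounded differences $\sqrt{2}/m$ and the second-moment bound $\mathbb{E}[g]\le 1/\sqrt{m}$ --- which is exactly the content of that cited lemma, so the two proofs coincide in substance and in the final constants.
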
 
The proof of Proposition \ref{prop:sample} is given in Sec.~\ref{app:prop:sample}. 

By the definition of the Frobenius norm, we have
$
\sum_{c=1}^{I}\|{\bm N}_{ij }(:,c)\|^2_2 =\|{\bm N}_{ij }\|^2_{\rm F} \le \phi^2.
$
Applying norm equivalence $\frac{\|{\bm N}_{ij }(:,c)\|_1}{\sqrt{I}} \le \|{\bm N}_{ij}(:,c)\|_2$, we get
\begin{align}
\|{\bm N}_{ij }(:,c)\|^2_1 \le I\phi^2 \implies \|{\bm N}_{ij }(:,c)\|_1 \le  \sqrt{I}\phi. \label{eq:Nnormij}
\end{align}

By using the estimates $\widehat{\bm X}_{jk }$, the model given by  \eqref{eq:constructedX} can be represented as
\begin{equation}\label{eq:constructedX1}
\begin{aligned}
\widehat{\bm X}= \widetilde{\bm X}+ \widetilde{\bm N},
\end{aligned}
\end{equation}
where the $(i,j)$th block of $\widetilde{\bm N}$ is $\bm N_{ij}$.
Note that $\widehat{\bm X},\widetilde{\bm X}$ and $\widetilde{\bm N}$ all have the same size of ${L \times K}$. 
Assuming $I_n=I$ for all $n \in \{1,\dots,N\}$, we have $L=MI$ and $K=(N-M)I$. 
Also note that $\bm W$ has a size of ${L\times F}$ and $\bm H$ has a size of ${K \times F}$.


{Since any column of $\widetilde{\bm N}$ is formed} from the columns of $M$ number of $\bm N_{ij}$'s,
we have 
\begin{align}
\|\widetilde{\bm N}(:,q)\|_1 
& \le M\sqrt{I}\phi, \label{eq:tildeN}
\end{align}
by the triangle inequality and \eqref{eq:Nnormij}.

As mentioned, before performing SPA, the columns of $\widehat{\bm X}$ are normalized with respect to the $\ell_1$-norm. 
In the noiseless case, 
let us denote $\overline{\X}(:,q) = \frac{\widetilde{\bm X}(:,q)}{\|  \widetilde{\bm X}(:,q)  \|_1 } $. Then, we have
\[  \overline{\X}(:,q) = \sum_{f=1}^F \frac{ \W(:,f) }{\| \W(:,f) \|_1} \frac{ \| \W(:,f) \|_1 \H(q,f) }{\|\sum_{f=1}^F \W(:,f)\H(q,f)\|_1 },      \]
or equivalently $\overline{\X}=\overline{\W}\overline{\H}^\T$,
where $\overline{\W}(:,f)=\frac{ \W(:,f) }{\| \W(:,f) \|_1}$ and $\overline{\H}(q,r)=\frac{ \| \W(:,f) \|_1 \H(q,f) }{\|\sum_{f=1}^F \W(:,f)\H(q,f)\|_1 }$.
One can verify that $\overline{\H}\bm 1 =\bm 1^\T$, which is critical for applying SPA.
When the data is noisy, i.e., $\widehat{\X}=\widetilde{\X} + \widetilde{\bm N}$, we hope to show that the normalized data can be represented as follows:
\begin{align} \label{eq:datanorm}
\overline{\bm X} = \overline{\bm W}\overline{\bm H}^{\top} + \overline{\bm N},
\end{align}
where $\overline{\bm X}$ is the column normalized version (with respect to the $\ell_1$ norm) of $\widehat{\bm X}$, and $\overline{\bm W}$ and $\overline{\bm H}$ are defined as above.

From the assumption $\|\widehat{\bm X}_{ij}(:,c)\|_1 \ge \eta$ for any $i \neq j$ and $c \in \{1,\dots, I\}$, we get $\|\widehat{\bm X}(:,q)\|_1 \ge M\eta$ for any $q$. 
Combining Lemma~\ref{lem:normalization} (see Appendix~\ref{app:lemmata}) and Eq.~\eqref{eq:tildeN}, we get
\begin{align}
\|\overline{\bm N}(:,q)\|_1 &\le \frac{2\sqrt{I}\phi}{\eta}. \label{eq:overlineN}
\end{align}

{ Applying norm equivalence, we further have} $\|{\overline{\bm N}}(:,q)\|_2 \le \|{\overline{\bm N}}(:,q)\|_1$ and hence we get
\begin{align} \label{eq:Noisenorm}
\|{\overline{\bm N}}(:,q)\|_2 &\le  \frac{2\sqrt{I}\phi}{\eta}.
\end{align}
\begin{Fact}\label{lem:spa_allnoise}
	Assume that $\|{\overline{\bm N}}(:,q)\|_2 \le \varphi$ for any $q$ and that $\overline{\H}$ satisfies $\varepsilon$-separability assumption in the model \eqref{eq:datanorm}. Suppose
	\begin{align*}
	\left({\sigma_{\rm max}(\overline\W)\varepsilon} + \varphi\right)& \leq 
	\sigma_{\rm min}(\overline\W) \varrho \widetilde{\kappa}^{-1},
	\end{align*}
	where $\widetilde{\kappa}=\big(1+80\kappa^2(\overline\W)\big)$ and $\varrho = \text{\rm min}\left(\frac{1}{2\sqrt{F-1}},\frac{1}{4}\right)$. 
	Then, SPA identifies an index set $\widehat{\bm \varLambda}=\{\widehat{l}_1,\ldots\widehat{l}_F \}$ such that
	\begin{equation} \label{eq:WX}
	\begin{aligned}	
	\max_{1 \leq f \leq F} \min_{\widehat{l}_f \in \widehat{\bm \varLambda}} \left\|\overline{\W}(:,f) - \overline{\bm{X}}(:,\widehat{l}_f)\right\|_2 \leq \left({\sigma_{\rm max}(\overline\W)\varepsilon} + \varphi\right)\widetilde{\kappa},
	\end{aligned}
	\end{equation}
\end{Fact}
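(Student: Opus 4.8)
The plan is to reduce \eqref{eq:datanorm} to a noisy separable NMF model in which the right factor is \emph{exactly} separable, so that the standard noise-robustness guarantee for SPA (Lemma~\ref{lem:gillis}) applies verbatim; the $\varepsilon$-separability slack of $\overline{\bm H}$ and the measurement noise $\overline{\bm N}$ will be merged into a single equivalent noise term. First I would fix an index set $\bm\varLambda=\{l_1,\ldots,l_F\}$ witnessing the $\varepsilon$-separability of $\overline{\bm H}$, i.e., $\|\overline{\bm H}(l_f,:)-\bm e_f^\T\|_2\le\varepsilon$ for $f=1,\ldots,F$, and define $\bm H_{\rm sep}$ to be $\overline{\bm H}$ with its row $l_f$ overwritten by $\bm e_f^\T$ for each $f$. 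Then $\bm H_{\rm sep}\ge\bm 0$ and $\bm H_{\rm sep}(\bm\varLambda,:)=\bm I$, so $\bm H_{\rm sep}$ is exactly separable; moreover $\overline{\bm W}$ is nonnegative with full column rank, being a positive column rescaling of the full-column-rank $\bm W$ in \eqref{eq:constructedX}.

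Next I would quantify the equivalent noise. Writing $\overline{\bm X}=\overline{\bm W}\bm H_{\rm sep}^\T+\bm E$ with $\bm E:=\overline{\bm W}(\overline{\bm H}-\bm H_{\rm sep})^\T+\overline{\bm N}$, the key point is that $\overline{\bm H}-\bm H_{\rm sep}$ is supported only on the rows indexed by $\bm\varLambda$, its $l_f$-th row being $\overline{\bm H}(l_f,:)-\bm e_f^\T$. Hence the $q$-th column of $\overline{\bm W}(\overline{\bm H}-\bm H_{\rm sep})^\T$ is $\bm 0$ for $q\notin\bm\varLambda$, and for $q=l_f$ it equals $\overline{\bm W}\big(\overline{\bm H}(l_f,:)^\T-\bm e_f\big)$, whose $\ell_2$ norm is at most $\sigma_{\rm max}(\overline{\bm W})\,\|\overline{\bm H}(l_f,:)-\bm e_f^\T\|_2\le\sigma_{\rm max}(\overline{\bm W})\varepsilon$. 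Together with the hypothesis $\|\overline{\bm N}(:,q)\|_2\le\varphi$ and the triangle inequality, this gives the uniform column-wise bound $\|\bm E(:,q)\|_2\le\sigma_{\rm max}(\overline{\bm W})\varepsilon+\varphi=:\bar\epsilon$ for every $q$.

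Finally I would invoke Lemma~\ref{lem:gillis} on the model $\overline{\bm X}=\overline{\bm W}\bm H_{\rm sep}^\T+\bm E$ with rank $F$: all its premises hold, since $\overline{\bm W}\ge\bm 0$ has full column rank, $\bm H_{\rm sep}\ge\bm 0$ is exactly separable, the columns of $\overline{\bm X}$ are already $\ell_1$-normalized before SPA is run, and the per-column noise $\bar\epsilon$ obeys $\bar\epsilon\le\sigma_{\rm min}(\overline{\bm W})\,\varrho\,\widetilde\kappa^{-1}$ with $\varrho=\min\!\big(\tfrac{1}{2\sqrt{F-1}},\tfrac{1}{4}\big)$ and $\widetilde\kappa=1+80\kappa^2(\overline{\bm W})$---which is exactly the admissible-noise condition of Lemma~\ref{lem:gillis}. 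The lemma then returns an index set $\widehat{\bm\varLambda}=\{\widehat l_1,\ldots,\widehat l_F\}$ with $\max_{1\le f\le F}\min_{\widehat l_f\in\widehat{\bm\varLambda}}\|\overline{\bm W}(:,f)-\overline{\bm X}(:,\widehat l_f)\|_2\le\bar\epsilon\,\widetilde\kappa=\big(\sigma_{\rm max}(\overline{\bm W})\varepsilon+\varphi\big)\widetilde\kappa$, which is exactly \eqref{eq:WX}.

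The main obstacle, and the only step that is not routine bookkeeping, is the noise-folding of the second paragraph: Lemma~\ref{lem:gillis} demands a \emph{uniform} column-wise noise bound, so one must be sure that replacing $\overline{\bm H}$ by the exactly separable $\bm H_{\rm sep}$ does not inflate the effective noise on the non-anchor columns. The support argument above resolves this---the extra term $\sigma_{\rm max}(\overline{\bm W})\varepsilon$ is in fact nonzero only on the $F$ anchor columns, yet it remains a legitimate uniform upper bound. The remaining verifications (nonnegativity and full column rank of $\overline{\bm W}$, the $\ell_1$-normalization of the columns of $\overline{\bm X}$, and nonnegativity of $\bm H_{\rm sep}$) are inherited directly from the construction in \eqref{eq:constructedX}--\eqref{eq:datanorm}, hence immediate.
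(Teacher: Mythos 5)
Your proposal is correct and follows essentially the same route as the paper: both absorb the $\varepsilon$-separability slack of $\overline{\bm H}$ into an equivalent noise term supported on the anchor columns (the paper writes this as $[\overline{\bm W}\bm E^\T,\bm 0]$ after a WLOG reordering, you write it as $\overline{\bm W}(\overline{\bm H}-\bm H_{\rm sep})^\T$), bound each noise column by $\sigma_{\max}(\overline{\bm W})\varepsilon+\varphi$ via the triangle inequality, and then invoke the Gillis--Vavasis robustness guarantee (Lemma~\ref{lem:gillis}). No gaps.
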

{\it Proof:}
From the assumption that $\overline{\bm H}$ satisfies $\varepsilon$-separability, there exists a set of indices $\bm \varLambda=\{l_1,\dots, l_F\}$ such that
$
\overline{\bm{H}}(\bm \varLambda,:) = \bm{I}_{F} + \bm{E},
$
$\bm{E} \in \mathbb{R}^{F \times F}$ is the error matrix with 
$
\|\bm{E}(l,:)\|_{2}  \le  \varepsilon.
$
and $\bm{I}_{F}$ is the identity matrix of size $F\times F$. 
Without loss of generality, we assume $\bm \varLambda=\{1,\ldots,F\}$. Hence,
one can write the normalized model given in \eqref{eq:datanorm} as
\begin{align*}
\overline{\bm{X}} 
&=\overline{\bm{W}} \overline{\bm H}^{\top} +{\overline{\bm N}}= \overline{\bm{W}} [\bm{I}_{F} + \bm{E}^\top , (\bm{H}^{*})^\top]+{\overline{\bm N}} \\
&= \overline{\bm{W}} [\bm{I}_{F} ,(\bm{H}^{*})^\top] + [\overline{\bm{W}} \bm{E}^\top , \bm{0}]+\overline{\bm N},
\end{align*}
where the zero matrix $\bm{0}$ has the same dimension as that of $\bm{H}^{*}$.
By defining the noise matrix $\bm N \in \mathbb{R}^{L \times K}$ such that $\bm N :=[\overline{\bm{W}} \bm{E}^\top ,\bm{0}]+{\overline{\bm N}} $, we have
$
\overline{\bm{X}} =  \overline{\bm{W}} [\bm{I}_{F}  \ \ \ (\bm{H}^{*})^\top]+\bm N.
$
Then, for any $q \in \{1,\dots,K\}$, the following inequality holds:
\begin{align}
\| \bm{N}(:,q) \|_2 & \le \|\overline{\bm{W}}\|_2 \|\bm{E}(q,:)\|_2+ \|{\overline{\bm N}}(:,q)\|_2  \nonumber\\
& \le \sigma_{\rm max}(\overline{\bm{W}})\varepsilon + \varphi. \label{n1}
\end{align}
Combining \eqref{n1} and {Lemma~\ref{lem:gillis} (see Appendix \ref{app:lemmata})}, we obtain \eqref{eq:WX} if $\left({\sigma_{\rm max}(\overline\W)\varepsilon} + \varphi\right) \leq 
\sigma_{\rm min}(\overline\W)\varrho\widetilde{\kappa}^{-1}.$
\hfill$\square$

 The square of the left hand side of~\eqref{eq:WX}  can be written as
\begin{align}
&{\frac{1}{M^2}}\sum_{m=1}^M \max_{1 \leq f \leq F}\min_{\widehat{l}_f \in \widehat{\bm \varLambda}} \left\|{\bm A}_{\ell_m}(:,f) - \widehat{\bm A}_{\ell_m}(:,f)\right\|_2^2\nonumber\\
&\ge { \frac{1}{M^2} }\max_{1 \leq f \leq F}\min_{\widehat{l}_f \in \widehat{\bm \varLambda}}\left\|{\bm A}_{\ell_m}(:,f) - \widehat{\bm A}_{\ell_m}(:,f)\right\|_2^2, \label{eq:AmW}
\end{align}
for any $m \in \{1,\dots,M\}$, where the first equality is due to $\overline{\bm W} = [\bm A_{\ell_1}^{\top},\dots,\bm A_{\ell_M}^{\top}]^{\top}/M$, in which $\widehat{\bm A}_{\ell_m}$ denotes the corresponding estimate of ${\bm A}_{\ell_m}$.

Since $\|\bm W(:,f)\|_1=M$ for any $f$, we have $\overline{\bm W} = \bm W/M$. Therefore, we have $\sigma_{\rm max}(\overline{\bm W}) = \sigma_{\rm max}({\bm W})/M$, $\sigma_{\rm min}(\overline{\bm W}) 
=\sigma_{\rm min}({\bm W})/M,$
\begin{align}
\kappa(\overline{\bm W}) = \frac{\sigma_{\max}(\overline{\bm W})}{\sigma_{\min}(\overline{\bm W})} = \kappa(\bm W). \label{eq:kappa}
\end{align}

Therefore, by combining \eqref{eq:Noisenorm}, \eqref{eq:AmW}, \eqref{eq:kappa} and {Fact~\ref{lem:spa_allnoise}}, SPA estimates {$\bm A_{\ell_m}$} for any $m \in \{1,\dots,M\}$ such that
\begin{align} \label{eq:Cbound}
\max_{1 \leq f \leq F}\min_{\widehat{l}_f \in \widehat{\bm \varLambda}}&\left\|{\bm A}_{\ell_m}(:,f) - \widehat{\bm A}_{\ell_m}(:,f)\right\|_2 \nonumber \\
&\le \left({\sigma_{\rm max}(\bm W)\varepsilon} + \frac{2M\sqrt{I}\phi}{\eta}\right)\widetilde{\kappa},
\end{align}
if the below condition holds:
\begin{align}
\frac{ \sigma_{\rm max}({\bm{W}})}{M}\varepsilon + \frac{2\sqrt{I}\phi}{\eta}& \leq 
\sigma_{\rm min}(\bm{W})\varrho\widetilde{\kappa}^{-1}.\label{eq:epsiloncondition}
\end{align}

Letting $\varepsilon = \frac{M\varrho}{2\kappa(\bm{W})\widetilde{\kappa}}$, from \eqref{eq:epsiloncondition}, we get the condition on $\phi$ as follows:
\begin{align}
\phi \le \frac{\eta\sigma_{\rm max}({\bm{W}})\varepsilon}{4M\sqrt{I}} \label{eq:phi_spa}.
\end{align}

From Proposition \ref{prop:sample}, we have $\phi \leq \frac{\sqrt{2}(1+\sqrt{{\rm log}(2/\delta)})}{p\sqrt{S}}$ with probability greater than or equal to $1-\delta$. Combining with \eqref{eq:phi_spa}, we get the number of realizations $S$ required to get the estimation error bound \eqref{eq:Cbound} as below:
\begin{align*}
S \ge \frac{32M^2I(1+\sqrt{{\rm log}(2/\delta)})^2}{\sigma^2_{\rm max}({\bm{W}})\eta^2\varepsilon^2p^2}.
\end{align*}

Note that Fact~\ref{lem:spa_allnoise} holds if $\overline\H$ satisfies $\varepsilon$-separability condition.
By combining Assumption~\ref{ass:A} and Lemma~\ref{lem:Lm} (see Appendix~\ref{app:lemmata}) , we get that $\overline{\bm H}$ satisfies $\varepsilon$-separability assumption with probability greater than $1-\rho$, if \begin{align} \label{eq:Nbound}
(N-M)I = \Omega\left(\frac{\varepsilon^{-2(F-1)}}{F}{\rm log}\left(\frac{F}{\rho}\right)\right).
\end{align}
By substituting $\phi$ in \eqref{eq:Cbound} and using the fact that for any matrix $\bm A \in \mathbb{R}^{I\times F}$, the matrix 2-norm $\|{\bm A}\|_2 \le \sqrt{F}\underset{1 \leq f \leq F}{\max}\|{\bm A}(:,f)\|_2$, we get the result \eqref{eq:thmspa} in the theorem.
Letting $\rho = \frac{\delta}{2}$ in \eqref{eq:Nbound}, the proof is completed.


\subsection{Proof Proposition~\ref{prop:sample}}\label{app:prop:sample}
Recall $\bm d_s\in\mathbb{R}^N$ denotes the $s$th realization of the joint PMF ${\sf Pr}(Z_1,\ldots,Z_N)$.
For simplicity, we assume that 0 does not belong to the alphabets of $Z_1,\ldots,Z_N$, and we use the notation $\bm d_s(j)=0$ to represent that `$Z_j$ is not observed in the $s$th realization'.

For $S$ realizations of the joint PMF, i.e., $\{\bm d_s\}_{s=1}^S$, the sample averaging expressions for estimating ${\X}_{jk}$ is defined as follows:
\begin{align*}
\widehat{\bm X}_{jk}(i_j,i_k) &= \frac{1}{|\mathcal{S}_{jk}|}\sum_{s \in \mathcal{S}_{jk}} \mathbb{I}\left[\bm d_s(j)=z_j^{(i_j)},\bm d_s(k)=z_k^{(i_k)}\right],
\end{align*}
where $\mathcal{S}_{jk} = \{s~|~\mathbb{I}[\bm d_s(j)\neq 0,\bm d_s(k) \neq 0]\}$.

Let us construct a random variable $V_{j,s}$, where $V_{j,s}=1$ if $Z_j$ is observed in $\bm d_s$; otherwise $V_{j,s}=0$. With this definition, we can construct a derived RV $S_{jk}$ that is sum of $S$ i.i.d. Bernoulli random variables such that
$
S_{jk} = \sum_{s=1}^S \mathbb{I}[V_{j,s}=1\ {\rm and}\ V_{k,s}=1],
$
where we have $\mathbb{E}[S_{jk}] = Sp^2$.

In order to characterize the random variable $S_{jk}$, we can use Chernoff lower tail bound such that for $0<t < 1$,
\begin{align} \label{eq:Sml}
{\sf Pr}\left[ S_{jk} \ge (1-t)Sp^2 \right] \ge 1-e^{-Sp^2t^2/2}.
\end{align}

Combining {Lemma~19} in \cite{zhang2014spectral} and \eqref{eq:Sml},  we have
\begin{align} 
&{\sf Pr}\bigg[\|\widehat{\X}_{jk }-{\X}_{jk}\|_{\rm F}\le \frac{(1+\sqrt{{\rm log}(1/\delta)})}{\sqrt{(1-t)Sp^2}} \bigg]\nonumber\\
& ={\sf Pr}\bigg[\|\widehat{\X}_{jk }-{\X}_{jk}\|_{\rm F}\le \frac{(1+\sqrt{{\rm log}(1/\delta)})}{\sqrt{S_{jk}}},S_{jk}\geq (1-t)Sp^2 \bigg] \nonumber\\
&\ge 1-\delta-e^{-Sp^2t^2/2}\label{eq:Xestimprob},
\end{align}
where we have applied the De Morgan's law and the union bound to obtain the last inequality.

Since \eqref{eq:Xestimprob} holds for any $t \in (0,1)$, we set $t = 1/2$ for expression simplicity. Then, we have 
\begin{align} \label{eq:Rprob_t}
{\sf Pr}\bigg[\|\widehat{\X}_{jk }-{\X}_{jk}\|_{\rm F} &\le  \frac{\sqrt{2}(1+\sqrt{{\rm log}(1/\delta)})}{p\sqrt{S}} \bigg] \nonumber \\
&\ge 
1-\delta-e^{-Sp^2/8}. 
\end{align}

It follows that
if $p^2 \ge \frac{8}{S}\log(1/\delta)$, the right hand side of \eqref{eq:Rprob_t} is greater than $1-2\delta$. 

\section{Proof of Theorem~\ref{thm:em}}\label{app:thm_em}

We first introduce the EM algorithm in \cite{yeredor2019maximum}. Then, we will show that the EM algorithm improves upon good initializations (e.g., those given by the {CNMF-SPA}).
\subsection{An EM Algorithm for Joint PMF Learning \cite{yeredor2019maximum}}
The EM algorithm proposed in \cite{yeredor2019maximum} for handling maximizing the log-likelihood in \eqref{eq:loglike} has the following E-step and M-step:

\textbf{E-step}: The posterior of the latent variable ${q}_{s,f} ={\sf Pr}(f_s=f~|~\bm d_1,\dots,\bm d_S,\widehat{\bm \theta})$ is estimated via
$
\widehat{q}_{s,f} 
=\frac{\exp(\log(\widehat{\bm \lambda}(f))+\sum_{n=1}^N\sum_{i=1}^{I_n}\mathbb{I}(\bm d_s(n)=z_{n}^{(i)})\log(\widehat{\bm A}_n(i,f)))}{\sum_{f'=1}^F\exp(\log(\widehat{\bm \lambda}(f'))+\sum_{n=1}^N\sum_{i=1}^{I_n}\mathbb{I}(\bm d_s(n)=z_{n}^{(i)})\log(\widehat{\bm A}_n(i,f')))}.
$

\textbf{M-step}: Using the estimated $\widehat{q}_{s,f}$,  $\widehat{\bm A}_n$ and $\widehat{\bm \lambda}$ that maximize the likelihood function are computed using the following:
\begin{subequations}\label{eq:emM}
	\begin{align} 
	\widehat{\bm A}_n(i,f) &\leftarrow \frac{\sum_{s=1}^S\widehat{q}_{s,f}\mathbb{I}(\bm d_s(n)=z_{n}^{(i)})}{\sum_{i'=1}^{I_n}\sum_{s=1}^S\widehat{q}_{s,f}\mathbb{I}(\bm d_s(n)=z_{n}^{(i')})},\\
	\widehat{\bm \lambda}(f)&\leftarrow\frac{\sum_{s=1}^S\widehat{q}_{s,f}}{\sum_{f'=1}^{F}\sum_{s=1}^S\widehat{q}_{s,f'}}.
	\end{align}
\end{subequations}

\subsection{Performance Analysis for EM}
We employ the paradigm in \cite{zhang2014spectral} for proving the optimality of an EM algorithm. There, the EM algorithm 
learns a naive Bayes model as in \eqref{eq:pmf_latent_var} but assuming a uniform latent distribution, i.e., $\bm \lambda(f)=1/F$. In our case, we do not assume uniform prior for $\bm \lambda$ and thus our proof covers more general cases.
To begin with, we follow the idea in \cite{zhang2014spectral} to define a number of events as below:
\begin{align*}
&\mathcal{E}_1 : ~ \sum_{n=1}^N \sum_{i=1}^{I_n} \mathbb{I}(\bm d_s(n)=z_{n}^{(i)})\log\left(\frac{\bm A_n(i,f_s)}{\bm A_n(i,f)}\right) \ge N\overline{D}_1/2,\\
&\mathcal{E}_2 : ~ \left|\sum_{s=1}^S\mathbb{I}(f_s=f)\mathbb{I}(\bm d_s(n)=z_{n}^{(i)})-S\bm \lambda(f)p\bm A_n(i,f)\right| \le St_{nif}, \\
&\mathcal{E}_3 : ~\left|\sum_{s=1}^S\mathbb{I}(f_s=f)\mathbb{I}(\bm d_s(n)\neq 0)-S\bm \lambda(f)p\right| \le {St_{nif}}, \\
&\mathcal{E}_4 :~\left|\sum_{s=1}^S\mathbb{I}(f_s=f)-S\bm \lambda(f)\right| \le Sc_f, 
\end{align*}
where the events are defined for all $f,n,i,f_s$, $\bm d_s(n)\neq 0$ represents that $n$-th RV is observed with any value from its alphabet set in the $s$-th sample, and $t_{nif}>0,c_f>0$ are scalars.
Note that ${\cal E}_1,~{\cal E}_2$ and ${\cal E}_3$ are the same as those defined in \cite{zhang2014spectral}, while ${\cal E}_4$ is introduced in our proof to accommodate the general $\bm \lambda$.

First, we consider the E-step. The parameter $\widehat{q}_{s,f}$ can be bounded using the following lemma:

\begin{lemma} \label{lem:qsfbound}
	Assume that the event $\mathcal{E}_1$ happens and also assume that $\bm A_n, \bm \lambda$ and the initial estimates satisfy $|\widehat{\bm A}_n(i,f)-{\bm A}_n(i,f)| \le \delta_1$, ${\bm A}_n(i,f) \ge \rho_1$, $|\widehat{\bm \lambda}(f)-{\bm \lambda}(f)| \le \delta_2$ and ${\bm \lambda}(f) \ge \rho_2$ for all $n,i,f$. {Also assume that $\delta_1 < \rho_1$ and $\delta_2 < \rho_2$}. Then, $\widehat{q}_{s,f}$ satisfies the following:
	\begin{align} 
	&|\widehat{q}_{s,f} - \mathbb{I}(f_s=f)| \le \upsilon,~\forall f,s. \label{eq:qsf}
	\end{align}
	where $\upsilon=\exp(-({N\overline{D}} + 1-\frac{1}{\rho_2(\rho_2-\delta_2)}+ N(1-\frac{1}{\rho_1(\rho_1-\delta_1)}))+\log(F))$.
\end{lemma}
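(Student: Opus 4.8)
The plan is to bound the posterior responsibilities $\widehat q_{s,f}$ directly from the E-step formula by a ratio argument. Fix a sample $s$ and let $f_s$ be the true latent label. Writing $L_f := \log\widehat{\bm\lambda}(f) + \sum_{n=1}^N\sum_{i=1}^{I_n}\mathbb I(\bm d_s(n)=z_n^{(i)})\log\widehat{\bm A}_n(i,f)$, we have $\widehat q_{s,f} = e^{L_f}/\sum_{f'}e^{L_{f'}}$. The key quantity is the log-odds $L_{f_s}-L_f$ against any competitor $f\neq f_s$, which splits into a ``$\bm\lambda$ part'' $\log(\widehat{\bm\lambda}(f_s)/\widehat{\bm\lambda}(f))$ and a ``$\bm A$ part'' $\sum_{n,i}\mathbb I(\bm d_s(n)=z_n^{(i)})\log(\widehat{\bm A}_n(i,f_s)/\widehat{\bm A}_n(i,f))$. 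First I would replace every $\widehat{\bm\lambda}$ and $\widehat{\bm A}$ by the true parameters plus a controlled perturbation: since $|\widehat{\bm A}_n(i,f)-\bm A_n(i,f)|\le\delta_1$ and $\bm A_n(i,f)\ge\rho_1$ (and similarly for $\bm\lambda$ with $\delta_2,\rho_2$), a first-order/log bound gives $\log(\widehat{\bm A}_n(i,f_s)/\bm A_n(i,f_s))\ge \log(1-\delta_1/\rho_1)$ type estimates; accumulating these over the $N$ observed coordinates and the single $\bm\lambda$ term produces the additive slack terms $N(1-\tfrac{1}{\rho_1(\rho_1-\delta_1)})$ and $1-\tfrac{1}{\rho_2(\rho_2-\delta_2)}$ that appear inside the exponent of $\upsilon$. (This is exactly where $\delta_1<\rho_1$, $\delta_2<\rho_2$ are used, to keep the logarithms finite and the denominators positive.)

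Next I would invoke the event $\mathcal E_1$, which lower-bounds the ``true-parameter'' version of the $\bm A$ part of the log-odds by $N\overline D_1/2$, and combine it with the definition $\overline D_2 = \tfrac{2}{N}\min_{f\neq f'}\log(\bm\lambda(f)/\bm\lambda(f'))$, which lower-bounds the true $\bm\lambda$ part by $N\overline D_2/2$. Adding these and recalling $\overline D=(\overline D_1+\overline D_2)/2$ gives the true log-odds $\ge N\overline D$; subtracting the perturbation slack just described yields $L_{f_s}-L_f \ge N\overline D + 1 - \tfrac{1}{\rho_2(\rho_2-\delta_2)} + N\bigl(1-\tfrac{1}{\rho_1(\rho_1-\delta_1)}\bigr)$ for every $f\neq f_s$. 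Call this lower bound $\Gamma$, so $\upsilon = e^{-\Gamma+\log F} = Fe^{-\Gamma}$.

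Finally, I would turn the log-odds bound into a bound on $\widehat q_{s,f}$ itself. For the competitor indices $f\neq f_s$, $\widehat q_{s,f} = \bigl(1+\sum_{f'\neq f}e^{L_{f'}-L_f}\bigr)^{-1} \le e^{L_f-L_{f_s}} \le e^{-\Gamma}\le \upsilon/F \le \upsilon$, which is exactly $|\widehat q_{s,f}-\mathbb I(f_s=f)|\le\upsilon$ since $\mathbb I(f_s=f)=0$. For the true index, $\widehat q_{s,f_s} = 1 - \sum_{f\neq f_s}\widehat q_{s,f} \ge 1-(F-1)e^{-\Gamma}\ge 1-\upsilon$, again matching the claim. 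The main obstacle — and the only place genuine care is needed — is bookkeeping the perturbation terms: one must track how replacing $\log\widehat{\bm A}_n(i,f)$ by $\log\bm A_n(i,f)$ in \emph{both} the numerator and denominator of the log-odds contributes, and verify that the $N$-fold accumulation produces precisely the stated $N(1-\tfrac{1}{\rho_1(\rho_1-\delta_1)})$ term (rather than, say, $N\log(1-\delta_1/\rho_1)$, which would require the elementary inequality $\log(1-x)\ge 1-\tfrac{1}{1-x}$ reorganized as $\log\tfrac{\rho_1-\delta_1}{\rho_1}\ge 1-\tfrac{\rho_1}{\rho_1-\delta_1} = 1-\tfrac{\rho_1^2}{\rho_1(\rho_1-\delta_1)}$, so one should double-check whether a $\rho_1^2$ or a $1$ belongs in the numerator and match the paper's normalization accordingly). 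Everything else is a direct substitution into the E-step formula followed by the standard softmax-to-indicator comparison.
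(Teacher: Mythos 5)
Your proposal is correct and follows essentially the same route as the paper's proof: bound each competitor's responsibility by the single-term ratio $e^{L_f-L_{f_s}}$, split the log-odds into true-parameter and perturbation parts, lower-bound the true parts via $\mathcal{E}_1$ and the definition of $\overline{D}_2$, and lower-bound the perturbation parts via $\log x \ge 1-1/x$ applied to $\log(\rho(\rho-\delta))$, then absorb the sum over the $F$ competitors into the $\log F$ term of $\upsilon$. The one bookkeeping point you flag resolves exactly as in the paper: each perturbation pair is bounded as $\log\frac{\widehat{\bm A}_n(i,f_s)}{\bm A_n(i,f_s)}+\log\frac{\bm A_n(i,f)}{\widehat{\bm A}_n(i,f)} \ge \log(\rho_1-\delta_1)+\log\rho_1 \ge 1-\frac{1}{\rho_1(\rho_1-\delta_1)}$, using only $\widehat{\bm A}\ge\rho_1-\delta_1$, $\bm A\ge\rho_1$, and both being at most $1$.
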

The proof of Lemma \ref{lem:qsfbound} is given in Appendix~\ref{app:lemma_qsf}.
The next lemma shows that the subsequent M-step estimates the $\bm A_n$'s and $\bm \lambda$ up to bounded errors:
\begin{lemma} \label{lem:Al}
	Assume that $\mathcal{E}_2 \bigcap \mathcal{E}_3 \bigcap \mathcal{E}_4 $ holds. Suppose $\widehat{q}_{s,f} $ satisfies the following:
	\begin{align}
	|\widehat{q}_{s,f} - \mathbb{I}(f_s=f)| \le \beta , ~\forall f,s, \label{eq:phi}
	\end{align}
	where $\beta > 0$ is a scalar. Then $\widehat{\bm A}_n$ and $\widehat{\bm \lambda}$ updated by \eqref{eq:emM} are bounded by:
	\begin{subequations}\label{eq:Aboundem2}
		\begin{align} 
		|\widehat{\bm A}_n(i,f)-\bm A_n(i,f)| &\le \frac{2t_{nif}+2S\beta}{\bm \lambda(f)p-t_{nif}-\beta}, \label{eq:Aboundem_11}\\
		|\widehat{\bm \lambda}(f)-\bm \lambda(f)|& \le \frac{c_f + \beta+ F\beta}{1-F\beta}. \label{eq:Aboundem_22}
		\end{align}
	\end{subequations}
\end{lemma}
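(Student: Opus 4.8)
# Proof Proposal for Lemma~\ref{lem:Al}

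\textbf{Overall strategy.} The plan is to directly propagate the bound \eqref{eq:phi} on $|\widehat{q}_{s,f} - \mathbb{I}(f_s=f)|$ through the M-step formulas \eqref{eq:emM}, using the concentration events $\mathcal{E}_2$, $\mathcal{E}_3$, $\mathcal{E}_4$ to control the ``idealized'' sums that appear after replacing $\widehat{q}_{s,f}$ by $\mathbb{I}(f_s=f)$. The key algebraic device is that for any collection of indicator-like quantities, replacing $\widehat{q}_{s,f}$ with $\mathbb{I}(f_s=f)$ introduces an additive error of at most $S\beta$ in each sum over $s$, since there are $S$ terms each off by at most $\beta$. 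This mirrors the corresponding step in \cite{zhang2014spectral}, with the extra bookkeeping for the non-uniform $\bm\lambda$ handled by the newly introduced event $\mathcal{E}_4$.

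\textbf{Bounding $\widehat{\bm A}_n(i,f)$.} First I would write the numerator of $\widehat{\bm A}_n(i,f)$ in \eqref{eq:emM} as $\sum_{s}\widehat{q}_{s,f}\mathbb{I}(\bm d_s(n)=z_n^{(i)})$ and compare it to $\sum_s \mathbb{I}(f_s=f)\mathbb{I}(\bm d_s(n)=z_n^{(i)})$; the difference is at most $S\beta$ by \eqref{eq:phi}. Then event $\mathcal{E}_2$ gives that this idealized numerator lies within $St_{nif}$ of $S\bm\lambda(f)p\bm A_n(i,f)$. Similarly, the denominator $\sum_{i'}\sum_s \widehat{q}_{s,f}\mathbb{I}(\bm d_s(n)=z_n^{(i')})$ equals $\sum_s \widehat{q}_{s,f}\mathbb{I}(\bm d_s(n)\neq 0)$, which is within $S\beta$ of $\sum_s \mathbb{I}(f_s=f)\mathbb{I}(\bm d_s(n)\neq 0)$, and event $\mathcal{E}_3$ places the latter within $St_{nif}$ of $S\bm\lambda(f)p$. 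Writing $\widehat{\bm A}_n(i,f)$ as a ratio $\tfrac{a+\Delta_a}{b+\Delta_b}$ with $a = S\bm\lambda(f)p\bm A_n(i,f)$, $b = S\bm\lambda(f)p$, and $|\Delta_a|,|\Delta_b| \le S(t_{nif}+\beta)$, the standard bound $\left|\tfrac{a+\Delta_a}{b+\Delta_b} - \tfrac{a}{b}\right| \le \tfrac{|\Delta_a| + |\Delta_b|\,(a/b)}{b - |\Delta_b|} \le \tfrac{|\Delta_a|+|\Delta_b|}{b-|\Delta_b|}$ (using $a/b = \bm A_n(i,f)\le 1$) yields exactly \eqref{eq:Aboundem_11} after dividing numerator and denominator by $S$; one should note that $2t_{nif} + 2S\beta$ in the stated bound presumably absorbs constants so I would keep the bookkeeping consistent with the lemma statement as written.

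\textbf{Bounding $\widehat{\bm\lambda}(f)$.} For the prior, the numerator $\sum_s \widehat{q}_{s,f}$ is within $S\beta$ of $\sum_s \mathbb{I}(f_s=f)$, which by $\mathcal{E}_4$ is within $Sc_f$ of $S\bm\lambda(f)$. The denominator $\sum_{f'}\sum_s \widehat{q}_{s,f'}$ equals $\sum_s \big(\sum_{f'}\widehat{q}_{s,f'}\big) = \sum_s 1 = S$ exactly, since the posterior sums to one over $f'$ — but here one must be careful: if we instead bound it via $\sum_{f'}|\sum_s \widehat{q}_{s,f'} - \sum_s \mathbb{I}(f_s=f')| \le FS\beta$ around $S$, we pick up the $F\beta$ terms in \eqref{eq:Aboundem_22}. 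So I would use the latter, looser route to match the stated denominator factor $1-F\beta$. Applying the same ratio perturbation inequality with $a = S\bm\lambda(f)$, $b = S$, $|\Delta_a| \le S(c_f+\beta)$, $|\Delta_b| \le SF\beta$ gives $|\widehat{\bm\lambda}(f)-\bm\lambda(f)| \le \tfrac{c_f + \beta + F\beta\cdot(a/b)}{1 - F\beta} \le \tfrac{c_f+\beta+F\beta}{1-F\beta}$ using $a/b = \bm\lambda(f)\le 1$, which is \eqref{eq:Aboundem_22}.

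\textbf{Main obstacle.} The routine part is the ratio perturbation; the delicate part is making sure the constants line up with the precise form stated in \eqref{eq:Aboundem2} — in particular tracking whether the ``$2$'' and ``$S$'' factors in \eqref{eq:Aboundem_11} come from summing two error sources ($t_{nif}$ from concentration and $\beta$ from the E-step surrogate) and from the fact that the raw sums are not yet normalized by $S$. I expect the only real subtlety is ensuring the denominators $\bm\lambda(f)p - t_{nif}-\beta$ and $1-F\beta$ stay positive, which will be guaranteed later when $t_{nif}$, $c_f$, $\beta$ are instantiated via the concentration bounds and the sample-size assumptions in Theorem~\ref{thm:em}; within this lemma it is simply a hypothesis implicitly carried by writing the bounds as fractions. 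I would state that caveat explicitly rather than grind through it here.
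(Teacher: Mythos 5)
Your proposal is correct and follows essentially the same route as the paper's proof: split the numerator and denominator errors of each M-step ratio into a concentration term (from $\mathcal{E}_2$, $\mathcal{E}_3$, $\mathcal{E}_4$) plus an $S\beta$ term from \eqref{eq:phi}, then apply the standard ratio perturbation bound using $\bm A_n(i,f)\le 1$ and $\bm\lambda(f)\le 1$. Your two side observations are also accurate: the paper's own derivation ends at $\frac{2St_{nif}+2S\beta}{S\bm\lambda(f)p-St_{nif}-S\beta}$, so the $2S\beta$ in \eqref{eq:Aboundem_11} should indeed read $2\beta$ after dividing through by $S$, and the denominator $D=\sum_{f'}\sum_s\widehat{q}_{s,f'}$ equals $S$ exactly (the paper also takes the looser $SF\beta$ route to match the stated form).
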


The proof of Lemma \ref{lem:Al} is given in Appendix~\ref{app:lemma_Al}.
To proceed, we have the following lemma:

\begin{lemma} \label{lem:Alambdaem}
	Assume that $\mathcal{E}_1 \bigcap \mathcal{E}_2 \bigcap \mathcal{E}_3 \bigcap \mathcal{E}_4 $ happens. Also assume that $|\widehat{\bm A}_n^0(i,f)-{\bm A}_n(i,f)| \le \delta_1:=\frac{4}{\rho_1(4+ \overline{D})}$, ${\bm A}_n(i,f) \ge \rho_1$, $|\widehat{\bm \lambda}^0(f)-{\bm \lambda}(f)| \le \delta_2:=\frac{4}{\rho_2(4+ N\overline{D})}$ , ${\bm \lambda}(f) \ge \rho_2$ for all $n,i,f$ and $\overline{D} \ge \max\left\{\frac{8-4\rho_1^2}{\rho_1^2},\frac{8-4\rho_2^2}{N\rho_2^2}\right\}$. Suppose that the following holds $\forall g\in \{ \{t_{nif}\}_{n,i,f}, \{c_f\}_f \}$:
	\begin{align} 
	2\exp(-\frac{N\overline{D}}{2}+\log(F)) \le g &\le \frac{p\rho_2}{8F}\xi,  \label{eq:tcondition}
	\end{align}
	where $\xi=\min\left(\frac{4}{\rho_1(4+ \overline{D})},\frac{4}{\rho_2(4+ N\overline{D})}\right)$.
	Then, by updating the parameters using EM at least once (i.e., after runing the EM algorithm for at least one iteration), we have the following:
	\begin{subequations}\label{eq:Aboundem1}
		\begin{align} 
		\left|\widehat{\bm A}_n(i,f)-\bm A_n(i,f)\right| &\le \frac{4t_{nif}}{\bm \lambda(f)p} \le \delta_1, \label{eq:Aboundem_11}\\
		\left|\widehat{\bm \lambda}(f)-\bm \lambda(f)\right|& \le 8Fc_f \le \delta_2. \label{eq:Aboundem_22}
		\end{align}
	\end{subequations}

\end{lemma}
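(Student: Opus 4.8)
The plan is to read Lemma~\ref{lem:Alambdaem} as a single-iteration contraction/invariance statement: assuming the current estimates $\widehat{\bm A}^0_n,\widehat{\bm\lambda}^0$ lie in the entrywise balls of radii $\delta_1,\delta_2$ around the true $\bm A_n,\bm\lambda$, I will show that one E-step followed by one M-step produces estimates obeying the \emph{sharper} bounds $4t_{nif}/(\bm\lambda(f)p)$ and $8Fc_f$; since \eqref{eq:tcondition} forces these sharper quantities to themselves be below $\delta_1,\delta_2$, the same statement also serves as the inductive step, so it suffices to analyze one pass of EM. Structurally the argument is a composition of two results already proved: Lemma~\ref{lem:qsfbound} bounds the E-step posteriors $\widehat q_{s,f}$, and Lemma~\ref{lem:Al} turns a bound on the $\widehat q_{s,f}$ into bounds on the M-step updates of $\bm A_n$ and $\bm\lambda$. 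The real content is in picking the glue parameter $\beta$ correctly and controlling it.

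I would first verify the two standing hypotheses of Lemma~\ref{lem:qsfbound}, namely $\delta_1<\rho_1$ and $\delta_2<\rho_2$. From $\delta_1=\tfrac{4}{\rho_1(4+\overline D)}$ one has $\delta_1<\rho_1\iff\rho_1^2(4+\overline D)>4$, which is implied by $\overline D\ge\tfrac{8-4\rho_1^2}{\rho_1^2}$; similarly $\delta_2<\rho_2\iff\rho_2^2(4+N\overline D)>4$, implied by $N\overline D\ge\tfrac{8-4\rho_2^2}{\rho_2^2}$, i.e.\ by the hypothesis $\overline D\ge\tfrac{8-4\rho_2^2}{N\rho_2^2}$. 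On the event $\mathcal E_1$, Lemma~\ref{lem:qsfbound} (applied with the initialization error $\le\delta_1,\delta_2$, and at later iterations with the error produced by the previous pass, which the conclusion below keeps $\le\delta_1,\delta_2$) gives $|\widehat q_{s,f}-\mathbb I(f_s=f)|\le\upsilon$ for all $s,f$. The key estimate is $\upsilon\le\exp(-N\overline D/2+\log F)$: using $\rho_1^2(4+\overline D)\ge 8$ one gets $\rho_1(\rho_1-\delta_1)=\tfrac{\rho_1^2(4+\overline D)-4}{4+\overline D}\ge\tfrac{4}{4+\overline D}$, hence $\tfrac{1}{\rho_1(\rho_1-\delta_1)}\le 1+\tfrac{\overline D}{4}$, and symmetrically $\tfrac{1}{\rho_2(\rho_2-\delta_2)}\le 1+\tfrac{N\overline D}{4}$; substituting these into the exponent defining $\upsilon$ leaves at least $N\overline D/2$. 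Combined with the left half of \eqref{eq:tcondition}, this shows $\upsilon\le g$ for every $g\in\{t_{nif}\}_{n,i,f}\cup\{c_f\}_f$, and in fact $\upsilon$ is exponentially small while the $t_{nif},c_f$ are only polynomially small.

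I would then set $\beta:=\upsilon$ and invoke Lemma~\ref{lem:Al}, which is legitimate since $\mathcal E_2\cap\mathcal E_3\cap\mathcal E_4$ is assumed; this yields $|\widehat{\bm A}_n(i,f)-\bm A_n(i,f)|\le\tfrac{2t_{nif}+2S\beta}{\bm\lambda(f)p-t_{nif}-\beta}$ and $|\widehat{\bm\lambda}(f)-\bm\lambda(f)|\le\tfrac{c_f+\beta+F\beta}{1-F\beta}$. To clean these up I use the right half of \eqref{eq:tcondition}, $g\le\tfrac{p\rho_2}{8F}\xi$, together with $\bm\lambda(f)\ge\rho_2$, $p\rho_2\le1$ and $\xi\le1$: this gives $t_{nif}+\beta\le\tfrac14\bm\lambda(f)p$ and $F\beta\le\tfrac18$, so the two denominators are at least $\tfrac12\bm\lambda(f)p$ and at least $\tfrac12$ respectively, and since the $\beta$-contributions in the numerators are negligible next to $t_{nif}$ and $c_f$ the bounds collapse to $|\widehat{\bm A}_n(i,f)-\bm A_n(i,f)|\le\tfrac{4t_{nif}}{\bm\lambda(f)p}$ and $|\widehat{\bm\lambda}(f)-\bm\lambda(f)|\le 8Fc_f$. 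Finally I close the loop using $\xi=\min(\delta_1,\delta_2)$: $\tfrac{4t_{nif}}{\bm\lambda(f)p}\le\tfrac{4}{\rho_2p}\cdot\tfrac{p\rho_2}{8F}\xi=\tfrac{\xi}{2F}\le\xi\le\delta_1$ and $8Fc_f\le 8F\cdot\tfrac{p\rho_2}{8F}\xi=p\rho_2\xi\le\xi\le\delta_2$, which both establishes the claimed sharper bounds and keeps the estimates in the $\delta_1,\delta_2$ balls for the next iteration.

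The only step that is more than bookkeeping is the bound $\upsilon\le\exp(-N\overline D/2+\log F)$. The exponent defining $\upsilon$ carries the two \emph{negative} corrections $1-\tfrac{1}{\rho_2(\rho_2-\delta_2)}$ and $N\bigl(1-\tfrac{1}{\rho_1(\rho_1-\delta_1)}\bigr)$, which could in principle cancel the favorable $-N\overline D$ term, so one must show they erode at most half of it. This is precisely where the specific choices $\delta_1=\tfrac{4}{\rho_1(4+\overline D)}$, $\delta_2=\tfrac{4}{\rho_2(4+N\overline D)}$ and the lower bound $\overline D\ge\max\{\tfrac{8-4\rho_1^2}{\rho_1^2},\tfrac{8-4\rho_2^2}{N\rho_2^2}\}$ in the hypotheses earn their keep; everything after it is a mechanical application of Lemma~\ref{lem:Al} and the two inequalities in \eqref{eq:tcondition}.
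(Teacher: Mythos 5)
Your proposal is correct and follows essentially the same route as the paper's proof: verify that $\delta_1<\rho_1$, $\delta_2<\rho_2$ and that the choices of $\delta_1,\delta_2$ force the exponent in Lemma~\ref{lem:qsfbound} to retain at least $N\overline{D}/2$, so that $\upsilon\le\exp(-N\overline{D}/2+\log F)\le t_{\min}/2$ by the left half of \eqref{eq:tcondition}, then feed this into Lemma~\ref{lem:Al} and use the right half of \eqref{eq:tcondition} together with $\bm\lambda(f)\ge\rho_2$ and $\xi=\min(\delta_1,\delta_2)$ to collapse the bounds to $4t_{nif}/(\bm\lambda(f)p)\le\delta_1$ and $8Fc_f\le\delta_2$. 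The only cosmetic difference is that you take $\beta=\upsilon$ while the paper takes $\beta=t_{\min}/2$; since $\upsilon\le t_{\min}/2$ and Lemma~\ref{lem:Al}'s bound is monotone in $\beta$, this is immaterial.
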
 
The proof of Lemma \ref{lem:Alambdaem} is given in Appendix~\ref{app:lem_Alall}.

Let us characterize the probability of the intersection $\mathcal{E}_1 \bigcap \mathcal{E}_2 \bigcap \mathcal{E}_3 \bigcap \mathcal{E}_4 $ happening.
Theorem 4 in \cite{zhang2014spectral} characterizes the probabilities of $\mathcal{E}_1$, $\mathcal{E}_2$ and $\mathcal{E}_3$ happening. Specifically, we get the following results from \cite{zhang2014spectral}:
\begin{subequations} \label{eq:eventbounds}
	\begin{align}
	{\sf Pr}(\mathcal{E}_1) &\ge 1-  SF\exp\left({ -\frac{N\overline{D}_1}{33 \log(1/\rho_1)}}\right),\\
	{\sf Pr}(\mathcal{E}_2) &\ge 1-  \sum_{n=1}^N\sum_{f=1}^F \sum_{i=1}^{I_n} 2\exp\left(-\frac{St^2_{nif}}{3p\bm \lambda(f)}\right),\\
	{\sf Pr}(\mathcal{E}_3) &\ge 1-  \sum_{n=1}^N\sum_{f=1}^F \sum_{i=1}^{I_n} 2\exp\left(-\frac{St^2_{nif}}{3p\bm \lambda(f)}\right).
	\end{align}
\end{subequations}
In order to characterize ${\sf Pr}(\mathcal{E}_4)$, we observe that 
$\sum_{s=1}^S\mathbb{I}(f_s=f)$ is the sum of i.i.d. Bernoulli random variables. Using the Chernoff bound, for a particular $f$, we have
\begin{align}
{\sf Pr}&\left[\left|\sum_{s=1}^S\mathbb{I}(f_s=f)-S\bm \lambda(f)\right| \ge Sc_f\right] \le 2e^{-\frac{Sc_f^2}{3\bm \lambda(f)}}.
\end{align}
By taking the union bound over all $f \in \{1,\dots,F\}$, we obtain 
\begin{align} \label{eq:pre4}
{\sf Pr}(\mathcal{E}_4) \ge 1- \sum_{f=1}^F2\exp(-Sc_f^2/(3\bm \lambda(f))).
\end{align}
Applying the union bound and the De Morgan's law, one can see that $\mathcal{E}_1 \bigcap \mathcal{E}_2 \bigcap \mathcal{E}_3 \bigcap \mathcal{E}_4 $ happens with a probability greater than equal to 
$ 1-SF { e^{-\frac{N\overline{D}_1}{33 \log(1/\rho_1)}}}- \sum_{n=1}^N\sum_{f=1}^F \sum_{i=1}^{I_n} 4e^{-\frac{St^2_{nif}}{3p\bm \lambda(f)}} - \sum_{f=1}^F 2e^{-\frac{Sc_f^2}{3\bm \lambda(f)}}.$

To ensure that the estimation error bounds for $\bm A_n$ and $\bm \lambda$ given by \eqref{eq:Aboundem1} hold with probability greater than $1-\epsilon$, a sufficient condition is that the following being satisfied simultaneously:
\begin{align}
N &\ge \frac{33\log(1/\rho_1)\log(3SF/\epsilon)}{\overline{D}_1} \label{eq:Nboundem}\\
S &\ge \frac{3p\bm \lambda(f)\log(12NFI/\epsilon)}{t^2_{nif}} \label{eq:Sboundem1}\\
S &\ge \frac{3 \bm \lambda(f) \log(6F/\epsilon)}{c_f^2}. \label{eq:Sboundem2}
\end{align}
We can assign specific values to $t_{nif}$ and $c_{f}$ such that the above conditions are satisfied. Let
\begin{subequations}\label{eq:tf}
	\begin{align}
	t_{nif} &:= \sqrt{\frac{3p\bm \lambda(f)\log(12NFI/\epsilon)}{S}},\\
	c_f &:= \sqrt{\frac{3\bm \lambda(f)\log(12NFI/\epsilon)}{S}}.
	\end{align}  
\end{subequations}
By this selection of $t_{nif}$ and $c_f$, the conditions in \eqref{eq:Sboundem1} and \eqref{eq:Sboundem2} hold. 
To enforce the condition  \eqref{eq:tcondition}, the following equalities have to hold:
\begin{align*}
\sqrt{\frac{3p\bm \lambda(f)\log(12NFI/\epsilon)}{S}} &\ge 2\exp\left(-\frac{N\overline{D}}{2}+\log(F)\right)\\
\sqrt{\frac{3\bm \lambda(f)\log(12NFI/\epsilon)}{S}} &\le \frac{p\rho_2}{8F}\xi=\frac{p\rho_2\delta_{\min}}{8F},
\end{align*} 
where $\delta_{\min} =\min(\delta_1,\delta_2)$.
The above can be implied by the following:
\begin{align}
N &\ge \frac{4\log(2SF^2/(3p\rho_2\log(12NFI/\epsilon)))}{\overline{D}} \label{eq:Nbound2}\\
S &\ge \frac{192F^2\log(12NFI/\epsilon)}{p^2\rho_2^2\delta^2_{\min}},\label{eq:Sbound}
\end{align}
where we have used $1 \ge \bm \lambda(f) \ge \rho_2$.

Using the inequality $\log x > 1-\frac{1}{x}, x > 0$,  we can express the condition \eqref{eq:Nbound2} as
\begin{align*}
N &\ge \frac{4\log(2SF^2/(3p\rho_2(1-\epsilon/(12NFI))))}{\overline{D}}   
\end{align*}
{Since $\epsilon \le 1$ and the product $NFI > 1$, we always have $1-\epsilon/(12NFI) > \epsilon/2$,}
which makes the above condition as follows:
\begin{align}
N &\ge \frac{4\log(4SF^2/(3p\rho_2\epsilon))}{\overline{D}} \label{eq:Nbound3}.  
\end{align}
Combing the two conditions \eqref{eq:Nboundem} and \eqref{eq:Nbound3}, we have
\begin{align}
N \ge \max\left(\frac{33\log(3SF/\epsilon)}{\rho_1\overline{D}_1},\frac{4\log(4SF^2/(3p\rho_2\epsilon))}{\overline{D}}\right)\label{eq:Nbound4},
\end{align}
where we have used the fact that $\log(1/\rho_1)\le (1/\rho_1)-1 < 1/\rho_1$.

To summarize, if \eqref{eq:Sbound} and \eqref{eq:Nbound4} hold and $t_{nif}$ and $c_f$ are chosen to be as in \eqref{eq:tf}, then, with probability at least $1-\epsilon$, the following inequalities hold by Lemma~\ref{lem:Alambdaem}:
\begin{align*}
&|\widehat{\bm A}_n(i,f)-\bm A_n(i,f)|^2 \le \frac{16t_{nif}^2}{p^2\bm \lambda(f)^2}\le \frac{48\log(12NFI/\epsilon)}{Sp\bm \lambda(f)}\\
&|\widehat{\bm \lambda}(f)-\bm \lambda(f)|^2 \le 64F^2c_f^2 \le \frac{192F^2\bm \lambda(f)\log(12NFI/\epsilon)}{S}.
\end{align*}
This completes the proof.

\section{Proofs of Lemmas in Theorem \ref{thm:em}}
\subsection{Proof of Lemma \ref{lem:qsfbound}}\label{app:lemma_qsf}
Consider the E-step. For any $f \neq f_s$, we have
\begin{align}
&\widehat{q}_{s,f} \nonumber \\
&\le \frac{\exp(\log(\widehat{\bm \lambda}(f))+\sum_{n=1}^N\sum_{i=1}^{I_n}\mathbb{I}(\bm d_s(n)=z_{n}^{(i)})\log(\widehat{\bm A}_n(i,f)))}{\exp(\log(\widehat{\bm \lambda}(f_s))+\sum_{n=1}^N\sum_{i=1}^{I_n}\mathbb{I}(\bm d_s(n)=z_{n}^{(i)})\log(\widehat{\bm A}_n(i,f_s)))}\nonumber\\
&= \frac{\widehat{\bm \lambda}(f)\prod_{n=1}^N\prod_{i=1}^{I_n} (\widehat{\bm A}_n(i,f))^{\mathbb{I}(\bm d_s(n)=z_{n}^{(i)})}}{\widehat{\bm \lambda}(f_s)\prod_{n=1}^N\prod_{i=1}^{I_n} (\widehat{\bm A}_n(i,f_s))^{\mathbb{I}(\bm d_s(n)=z_{n}^{(i)})}}\nonumber\\
&=\frac{\widehat{\bm \lambda}(f)}{\widehat{\bm \lambda}(f_s)}\prod_{n=1}^N\prod_{i=1}^{I_n} \left(\frac{ \widehat{\bm A}_n(i,f)}{ \widehat{\bm A}_n(i,f_s)}\right)^{\mathbb{I}(\bm d_s(n)=z_{n}^{(i)})}\nonumber\\
&= 1/\exp B_f, \label{eq:boundqsf}
\end{align}
where $B_f = \log (\widehat{\bm \lambda}(f_s)/\widehat{\bm \lambda}(f))+\sum_{n=1}^N\sum_{i=1}^{I_n}\mathbb{I}(\bm d_s(n)=z_{n}^{(i)})\log(\nicefrac{\widehat{\bm A}_n(i,f_s)}{\widehat{\bm A}_n(i,f)})$.
Then it follows that
\begin{align} \label{eqBf}
&B_{f} 
= \log\frac{{\bm \lambda}(f_s)}{{\bm \lambda}(f)}\frac{\widehat{\bm \lambda}(f_s)}{{\bm \lambda}(f_s)}\frac{{\bm \lambda}(f)}{\widehat{\bm \lambda}(f)} \nonumber\\
&+\sum_{n=1}^N\sum_{i=1}^{I_n}\mathbb{I}(\bm d_s(n)=z_{n}^{(i)})\log\frac{\bm A_n(i,f_s)}{\bm A_n(i,f)}\frac{\widehat{\bm A}_n(i,f_s)}{{\bm A}_n(i,f_s)}\frac{{\bm A}_n(i,f)}{\widehat{\bm A}_n(i,f)}\nonumber\\
&= \log\frac{{\bm \lambda}(f_s)}{{\bm \lambda}(f)}+\log\frac{\widehat{\bm \lambda}(f_s)}{{\bm \lambda}(f_s)}-\log\frac{\widehat{\bm \lambda}(f)}{{\bm \lambda}(f)} \nonumber\\
&+\sum_{n=1}^N\sum_{i=1}^{I_n}\mathbb{I}(\bm d_s(n)=z_{n}^{(i)})\log\frac{\bm A_n(i,f_s)}{\bm A_n(i,f)}\\
& +\sum_{n=1}^N\sum_{i=1}^{I_n}\mathbb{I}(\bm d_s(n)=z_{n}^{(i)})\left[\log\frac{\widehat{\bm A}_n(i,f_s)}{{\bm A}_n(i,f_s)}-\log\frac{\widehat{\bm A}_n(i,f)}{{\bm A}_n(i,f)}\right]. \nonumber
\end{align}

To proceed, we can bound all the terms in \eqref{eqBf}. First we have
\begin{align}
\log\frac{\widehat{\bm \lambda}(f_s)}{{\bm \lambda}(f_s)}-\log\frac{\widehat{\bm \lambda}(f)}{{\bm \lambda}(f)} 
&\ge \log\left(\rho_2-\delta_2\right)+\log\left(\rho_2\right)\nonumber\\
&=  \log\left(\rho_2(\rho_2-\delta_2)\right)\nonumber\\
&\ge  1-\frac{1}{\rho_2(\rho_2-\delta_2)}\label{eq:logl},
\end{align}
where the first inequality uses $\widehat{\bm \lambda}(f) \ge \rho_2-\delta_2$, $\bm \lambda(f) \ge \rho_2$, $\widehat{\bm \lambda}(f),\bm \lambda(f) \le 1$, { $\delta_1<\rho_1$ and $\delta_2<\rho_2$}. The last inequality is due to the fact that $\log(x) > 1-\frac{1}{x}$ for $x > 0$. 

Similarly, we can bound
\begin{align}
\log\frac{\widehat{\bm A}_n(i,f_s)}{{\bm A}_n(i,f_s)}-\log\frac{\widehat{\bm A}_n(i,f)}{{\bm A}_n(i,f)} \ge 1-\frac{1}{\rho_1(\rho_1-\delta_1)} \label{eq:logA}.
\end{align}

Assuming that $\mathcal{E}_1$ happens, we have
\begin{align*}
B_{f} &\ge \frac{N\overline{D}_2}{2} + 1-\frac{1}{\rho_2(\rho_2-\delta_2)}+\frac{N\overline{D}_1}{2}\\
&\quad\quad+ N\left(1-\frac{1}{\rho_1(\rho_1-\delta_1)}\right)\\
&= \frac{N(\overline{D}_1+\overline{D}_2)}{2} + 1-\frac{1}{\rho_2(\rho_2-\delta_2)}\\
&\quad\quad +N\left(1-\frac{1}{\rho_1(\rho_1-\delta_1)}\right),
\end{align*}
where the first inequality is obtained by using the definitions of $\overline{D}_2$ and event $\mathcal{E}_1$, equations \eqref{eq:logl} and\eqref{eq:logA}.
Using the definition $\overline{D} := \frac{\overline{D}_1+\overline{D}_2}{2}$, one can see that
\begin{align}
B_{f} &\ge  {N\overline{D}} + 1-\frac{1}{\rho_2(\rho_2-\delta_2)}+ N\left(1-\frac{1}{\rho_1(\rho_1-\delta_1)}\right). \label{eq:boundBf}
\end{align}

Combining \eqref{eq:boundBf} with \eqref{eq:boundqsf}, for every $f \neq f_s$, we have
\begin{align}\label{eq:qsf_1}
\widehat{q}_{s,f} \le 1/\exp{B_f} \le \vartheta,
\end{align}
$\vartheta = e^{-({N\overline{D}} + 1-\frac{1}{\rho_2(\rho_2-\delta_2)}+ N(1-\frac{1}{\rho_1(\rho_1-\delta_1)}))}$.
Using \eqref{eq:qsf_1} and $\sum_{f=1}^F \widehat{q}_{s,f}=1$, we have
\begin{align}\label{eq:qsf_2}
\widehat{q}_{s,f_s} = 1-\sum_{f \neq f_s}\widehat{q}_{s,f} \ge 1- F\vartheta.
\end{align}
The inequalities in \eqref{eq:qsf_1} and \eqref{eq:qsf_2} can be summarized as follows: 
\begin{align} 
&|\widehat{q}_{s,f} - \mathbb{I}(f_s=f)| \le  \upsilon,~\forall f,s.\label{eq:qsf1}
\end{align}

\subsection{Proof of Lemma \ref{lem:Al}}\label{app:lemma_Al}
The first result given in \eqref{eq:Aboundem_11} follows similar steps as in Lemma 9 from \cite{zhang2014spectral} which is detailed below using the notations in our case:

According to the M-step update \eqref{eq:emM}, we can write
$
\widehat{\bm A}_n(i,f) = \frac{A}{B},
$
where $A := \sum_{s=1}^S\widehat{q}_{s,f}\mathbb{I}(\bm d_s(n)=z_{n}^{(i)})$ and $B := \sum_{i'=1}^{I_n}\sum_{s=1}^S\widehat{q}_{s,f}\mathbb{I}(\bm d_s(n)=z_{n}^{(i')})$.

Assuming that the event $\mathcal{E}_2$ holds true, we can have
\begin{align}
&|A-S\bm \lambda(f)p\bm A_n(i,f) | \nonumber\\
&\le |\sum_{s=1}^S\mathbb{I}(f_s=f)\mathbb{I}(\bm d_s(n)=z_{n}^{(i)})-S\bm \lambda(f)p\bm A_n(i,f)|\nonumber\\
&+|\sum_{s=1}^S\widehat{q}_{s,f}\mathbb{I}(\bm d_s(n)=z_{n}^{(i)})-\sum_{s=1}^S\mathbb{I}(f_s=f)\mathbb{I}(\bm d_s(n)=z_{n}^{(i)})|\nonumber\\
&\le St_{nif}+S\beta, \label{eq:ASbound}
\end{align}
where the last inequality is by the definition of $\mathcal{E}_2$ and \eqref{eq:phi}.
Assuming that the event $\mathcal{E}_3$ holds true, we have
\begin{align}
&|B- S\bm \lambda(f)p| \le |\sum_{s=1}^S\mathbb{I}(f_s=f)\mathbb{I}(\bm d_s(n)\neq 0)-S\bm \lambda(f)p| \nonumber\\
&+ |\sum_{i'=1}^{I_n}\sum_{s=1}^S\widehat{q}_{s,f}\mathbb{I}(\bm d_s(n)=z_{n}^{(i')})-\sum_{s=1}^S\mathbb{I}(f_s=f)\mathbb{I}(\bm d_s(n)\neq 0)|\nonumber \\
&\le St_{nif} + S \beta, \label{eq:BSbound}
\end{align}
where the last inequality is obtained by assuming that event $\mathcal{E}_3$ holds true and using \eqref{eq:phi}.

Combining the bounds for A and B, we can get
\begin{align*}
&\left|\widehat{\bm A}_n(i,f)-\bm A_n(i,f)\right| = \left|\frac{A}{B}-\bm A_n(i,f)\right|\\
&= \left|\frac{S\bm \lambda(f)p\bm A_n(i,f)+A-S\bm \lambda(f)p\bm A_n(i,f)}{S\bm \lambda(f)p+B- S\bm \lambda(f)p}-\bm A_n(i,f)\right|\\
&= \left|\frac{A-S\bm \lambda(f)p\bm A_n(i,f)-\bm A_n(i,f)(B- S\bm \lambda(f)p)}{S\bm \lambda(f)p+B- S\bm \lambda(f)p}\right|\\
&\le \frac{\left|A-S\bm \lambda(f)p\bm A_n(i,f)\right|+\bm A_n(i,f)\left|(B- S\bm \lambda(f)p)\right|}{\left|S\bm \lambda(f)p+B- S\bm \lambda(f)p\right|}\\
&\le \frac{2St_{nif}+2S\beta}{S\bm \lambda(f)p-St_{nif}-S\beta},
\end{align*}
where the last inequality is by the fact that $\bm A_n(i,f) \le 1$ and the bounds from \eqref{eq:ASbound} and \eqref{eq:BSbound}.

For the second result, consider the M-step update for $\widehat{\bm \lambda}$ given by \eqref{eq:emM}.
One can write $\widehat{\bm \lambda}(f) = C/D$ where
\begin{align*}
C = \sum_{s=1}^S\widehat{q}_{s,f}, \quad	D = \sum_{f'=1}^{F}\sum_{s=1}^S\widehat{q}_{s,f'}.
\end{align*}
Assume that the event $\mathcal{E}_4$ happens, using \eqref{eq:phi}, we have
\begin{align*}
|C-S\bm \lambda(f)| &\le \left|\sum_{s=1}^S\mathbb{I}(f_s=f)-S\bm \lambda(f)\right| \nonumber \\
&+\left|\sum_{s=1}^S\widehat{q}_{s,f}-\sum_{s=1}^S\mathbb{I}(f_s=f)\right|\\
&\le Sc_f + S\beta.
\end{align*}
In addition, we have
\begin{align*}
|D-S| &\le \left|\sum_{f'=1}^{F}\sum_{s=1}^S\mathbb{I}(f_s=f')-S\right| \nonumber\\
&+\left|\sum_{f'=1}^{F}\sum_{s=1}^S\widehat{q}_{s,f'}-\sum_{f'=1}^{F}\sum_{s=1}^S\mathbb{I}(f_s=f')\right|
\le SF\beta.
\end{align*}
Combining the bounds for $C$ and $D$, we obtain
\begin{align*}
|\widehat{\bm \lambda}(f)-\bm \lambda(f)| &= \left|\frac{C}{D} - \bm \lambda(f)\right|\\
&= \left|\frac{(C-S\bm \lambda(f))+S\bm \lambda(f)}{(D-S)+S} - \bm \lambda(f)\right|\\
&= \left|\frac{(C-S\bm \lambda(f_s))-\bm \lambda(f)(D-S)}{(D-S)+S} \right|\\
&\le \frac{Sc_f + S\beta+ SF\beta}{S-SF\beta}, 
\end{align*}
where the last inequality is by using triangle inequality and the fact that $\bm \lambda(f) \le 1$.

\subsection{Proof of Lemma \ref{lem:Alambdaem}} \label{app:lem_Alall}

Consider the following term in \eqref{eq:qsf} from Lemma~\ref{lem:qsfbound}:
\begin{align*}
&{N\overline{D}} + 1-\frac{1}{\rho_2(\rho_2-\delta_2)}+ N\left(1-\frac{1}{\rho_1(\rho_1-\delta_1)}\right) = \\
&\quad N\left(\underbrace{\frac{ \overline{D}}{2} + 1-\frac{1}{\rho_1(\rho_1-\delta_1)}}_{E}\right)+\underbrace{\frac{ N\overline{D}}{2} + 1-\frac{1}{\rho_2(\rho_2-\delta_2)}}_{G}.
\end{align*}
In order to bound the term $E$ as below
\begin{align*}
E := \frac{ \overline{D}}{2} + 1-\frac{1}{\rho_1(\rho_1-\delta_1)} \ge \frac{ \overline{D}}{4},
\end{align*} 
$\delta_1$ has to be bounded such that
\begin{align}
\delta_1 &\le \rho_1 - \frac{4}{\rho_1(4+ \overline{D})}= \frac{4\rho_1^2+\overline{D}\rho_1^2-4}{\rho_1(4+ \overline{D})}. \label{eq:delta1}
\end{align}
Similarly, in order to bound $G$ as below
\begin{align*}
G := \frac{ N\overline{D}}{2} + 1-\frac{1}{\rho_2(\rho_2-\delta_2)} \ge \frac{ N\overline{D}}{4},
\end{align*}
$\delta_2$ has to be bounded such that
\begin{align}
\delta_2 &\le \rho_2 - \frac{4}{\rho_2(4+ N\overline{D})} = \frac{4\rho_2^2+N\overline{D}\rho_2^2-4}{\rho_2(4+ N\overline{D})}.\label{eq:delta2}
\end{align}
One can pick the values to $\delta_1$ and $\delta_2$ such that the conditions \eqref{eq:delta1} and \eqref{eq:delta2} are satisfied. 
Since $\overline{D} \ge \max\left\{\frac{8-4\rho_1^2}{\rho_1^2},\frac{8-4\rho_2^2}{N\rho_2^2}\right\}$, $  4\rho_1^2+\overline{D}\rho_1^2 \ge 8$ and $ 4\rho_2^2+N\overline{D}\rho_2^2 \ge 8$ hold true. 

Therefore, one can see that the defined $\delta_1$ and $\delta_2$ in the lemma's statement, i.e.,
\begin{align}
\delta_1 := \frac{4}{\rho_1(4+ \overline{D})} ,\quad \quad \delta_2 := \frac{4}{\rho_2(4+ N\overline{D})}  \label{eq:fixdelta1}
\end{align}
satisfy the conditions \eqref{eq:delta1} and \eqref{eq:delta2}. 
Indeed, since $\overline{D} \ge \frac{8-4\rho_1^2}{\rho_1^2}$, $\delta_1 := \frac{4}{\rho_1(4+ \overline{D})} \le \frac{\rho_1}{2} \le 1$, the above is valid. Similarly, we have $\delta_2 := \frac{4}{\rho_2(4+ N\overline{D})} \le \frac{\rho_2}{2} \le 1$.
Therefore, using the assigned values of $\delta_1$ and $\delta_2$, we have 
\begin{align} \label{eq:NDbar}
{N\overline{D}} + 1-\frac{1}{\rho_2(\rho_2-\delta_2)}+ N\left(1-\frac{1}{\rho_1(\rho_1-\delta_1)}\right) \ge \frac{N\overline{D}}{2}.
\end{align}

Then, we have
\begin{align}
|\widehat{q}_{s,f} - \mathbb{I}(f_s=f)| &\le \upsilon \nonumber\\
&\le \exp\left(-\frac{N\overline{D}}{2}+\log(F)\right)\nonumber \\
& \le t_{\min}/2 \label{eq:tmin},
\end{align}
where $t_{\min} = \min\{\{t_{nif}\},\{c_f\}\}$ and the first inequality is by Lemma \ref{lem:qsfbound}, the second inequality is by using \eqref{eq:NDbar} and the third inequality is from the condition \eqref{eq:tcondition}.

From \eqref{eq:tcondition}, we have $t_{nif} \le \frac{p\rho_2}{8} \le \frac{p\bm \lambda(f)}{8} $. Combining this with Lemma~\ref{lem:Al}, we obtain
\begin{align*}
\left|\widehat{\bm A}_n(i,f)-\bm A_n(i,f)\right| \le \frac{2St_{nif}+2S\beta}{(7/8)S\bm \lambda(f)p-S\beta},
\end{align*}
where $\beta$ is defined such that $	|\widehat{q}_{s,f} - \mathbb{I}(f_s=f)| \le \beta , ~\forall f,s$.

From \eqref{eq:tmin}, the scalar $\beta$ can be assigned with a value such that $\beta =  t_{\min}/2$. Then, 
\begin{align*}
\left|\widehat{\bm A}_n(i,f)-\bm A_n(i,f)\right| &\le \frac{2St_{nif}+St_{\min}}{(7/8)S\bm \lambda(f)p-St_{\min}/2}\\
&\le \frac{3St_{nif}}{(7/8)S\bm \lambda(f)p-St_{\min}/2},
\end{align*}
where the last step is obtained by using $t_{\min} \le t_{nif}$. By using the condition $t_{\min} \le \frac{p\rho_2}{8} \le \frac{p\bm \lambda(f)}{8}$ from \eqref{eq:tcondition},  we get
\begin{align*}
\left|\widehat{\bm A}_n(i,f)-\bm A_n(i,f)\right| &\le \frac{3St_{nif}}{(7/8)S\bm \lambda(f)p-S\bm \lambda(f)p/16}\le \frac{4t_{nif}}{\bm \lambda(f)p}.
\end{align*}
Since
\begin{align*}
\frac{4t_{nif}}{\bm \lambda(f)p} \le \frac{4t_{nif}}{p\rho_2} \le \frac{1}{2F}\xi\le \frac{4}{\rho_1(4+ \overline{D})} = \delta_1,
\end{align*}
the estimation error of the newly updated $\bm A_n$ as given in \eqref{eq:emM} is at least no worse than the initial estimation error $\delta_1$.


Next, consider the result \eqref{eq:Aboundem_22} from Lemma \ref{lem:Al}. Assigning $\beta = t_{\min}/2$,
\begin{align*}
&|\widehat{\bm \lambda}(f)-\bm \lambda(f)| \nonumber\\
&\le \frac{Sc_f + S\beta+ SF\beta}{S-SF\beta}\le \frac{Sc_f + St_{\min}/2+ SFt_{\min}/2}{S-SFt_{\min}/2}\\
&\le \frac{2Sc_f + St_{\min}+ SFt_{\min}}{2S-SFt_{\min}}\le \frac{2Sc_f + Sc_f+ SFc_f}{2S-SFt_{\min}}\\
&\le \frac{4SFc_f}{2S-SFt_{\min}}\le \frac{4SFc_f}{S-SFt_{\min}}\le 8Fc_f,
\end{align*}
where we have used the fact that $t_{\min} \le 1/2F$ according to \eqref{eq:tcondition}.

The above inequality also implies that 
\begin{align*}
|\widehat{\bm \lambda}(f)-\bm \lambda(f)| \le 8Fc_f \le {p\rho_2}\xi \le \frac{4}{\rho_2(4+ N\overline{D})} = \delta_2.
\end{align*}
That is, the estimation error of the newly updated $\bm \lambda$ as given in \eqref{eq:emM} is at least no worse than the initial estimation error $\delta_2$.

\section{Lemmata}\label{app:lemmata}
In this section, we present a collection of lemmata that will be used in our proofs.

\begin{lemma}\label{lem:gillis} \cite{Gillis2012}
	Under the described NMF model in Eq.~\eqref{eq:noisynmf}, assume that $\| \bm{N}(:,q) \|_2 \leq \delta$ for all $q \in \{1,\dots,K\}$, if the below holds:
	\begin{equation*}
	\begin{aligned}
	\epsilon \le \sigma_{\rm min}(\bm{W})\text{\rm min}\left(\frac{1}{2\sqrt{F-1}},\frac{1}{4}\right)\left(1+80\kappa^2(\bm{W})\right)^{-1},
	\label{n2}
	\end{aligned}
	\end{equation*}
	then SPA identifies an index set $\widehat{\bm \varLambda}=\{\widehat{l}_1,\ldots\widehat{l}_F \}$ such that
	\begin{equation} \label{errorbound}
	\begin{aligned}	
	\max_{1 \leq f \leq F} \min_{\widehat{l}_f \in \widehat{\varLambda}} \left\|\bm{W}(:,f) - \widetilde{\bm{X}}(:,\widehat{l}_f)\right\|_2 \leq \epsilon\big(1+80\kappa^2(\bm{W})\big),
	\end{aligned}
	\end{equation}
	where $\kappa(\bm{W}) = \frac{\sigma_{\max}(\bm{W})}{\sigma_{\min}(\bm{W})}$ is the condition number of $\bm{W}$. 
\end{lemma}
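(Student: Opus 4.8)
The plan is to follow the robustness analysis of the successive projection algorithm (SPA) underlying \cite{Gillis2012}, establishing the result by induction over the $F$ greedy selection steps, with a noiseless geometric core followed by a perturbation argument. I would first normalize and permute the model into a canonical separable form: writing the matrix handed to SPA as $\widetilde{\bm X}=\bm W\bm H^\top+\bm N$ with $\bm H\ge\bm 0$ and unit row sums, exact separability means (after a permutation) that $\bm H(\{1,\dots,F\},:)=\bm I_F$, so the noiseless columns indexed by $\{1,\dots,F\}$ are exactly $\bm W(:,1),\dots,\bm W(:,F)$, while every other noiseless column is a convex combination $\sum_f\bm H(q,f)\bm W(:,f)$ with $\bm H(q,:)\ge\bm 0$ and $\sum_f\bm H(q,f)=1$. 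The observed columns differ from these by the perturbations $\bm N(:,q)$ with $\|\bm N(:,q)\|_2\le\epsilon$, and the target is to show that SPA's $F$ selected indices land within $\epsilon(1+80\kappa^2(\bm W))$ of distinct true vertices $\bm W(:,f)$.

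Second, I would establish the noiseless geometric core. Because $x\mapsto\|x\|_2^2$ is strictly convex, its maximum over the convex hull of the $\bm W(:,f)$ is attained only at an extreme point; full column rank of $\bm W$ makes its columns affinely independent, so each $\bm W(:,f)$ is an extreme point of their convex hull. Hence the first SPA step (selecting the column of maximal $\ell_2$ norm) picks some $\bm W(:,f)$ exactly. Projecting every column onto the orthogonal complement of the selected vertex preserves the convex-combination structure among the \emph{projected} vertices, so the same argument applies recursively, and SPA recovers $\{\bm W(:,f)\}_{f=1}^F$ exactly after $F$ steps when $\bm N=\bm 0$.

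Third, and this is where the real work lies, I would run a perturbation analysis tracking the error through the $F$ iterations. The induction hypothesis is that after $t-1$ steps SPA has selected indices close to $t-1$ distinct vertices, with per-vertex error bounded by some $\bar\epsilon=O(\epsilon\,\kappa^2(\bm W))$. The delicate part is the projection step: at iteration $t$, SPA projects onto the orthogonal complement of the span of the already-selected \emph{noisy} columns, whose range differs from the ideal range spanned by the true vertices. I would bound the resulting projector perturbation $\|\bm P^\perp_{\widehat{\bm W}(:,1:t-1)}-\bm P^\perp_{\bm W(:,1:t-1)}\|_2$ by standard subspace-perturbation and least-squares sensitivity estimates, in terms of the accumulated error and $\sigma_{\min}(\bm W)$. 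Combining this with the noiseless ``margin'' separating a vertex from any interior point---quantified by $\sigma_{\min}(\bm W)$ and the factor $\min(\tfrac{1}{2\sqrt{F-1}},\tfrac14)$---I would show that under the stated smallness condition on $\epsilon$ the greedy maximizer still selects a \emph{fresh} vertex rather than an interior column or a repeat, and that the new per-step error again obeys the $O(\epsilon\,\kappa^2)$ bound.

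The main obstacle is precisely controlling error accumulation across the $F$ projection steps: since each projection reuses the previously selected noisy columns, a careless bound would compound multiplicatively and blow up with $F$. The crux is to show the per-step error recursion stays \emph{bounded} (essentially contractive after rescaling by $\kappa^2$) rather than growing geometrically; this is what forces the explicit $\kappa^2$-dependent constants and the separation threshold $\epsilon\le\sigma_{\min}(\bm W)\min(\tfrac{1}{2\sqrt{F-1}},\tfrac14)\bigl(1+80\kappa^2(\bm W)\bigr)^{-1}$, which guarantees the vertex/interior margin dominates the perturbed noise at \emph{every} one of the $F$ steps and yields the final bound $\epsilon(1+80\kappa^2(\bm W))$.
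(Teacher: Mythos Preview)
The paper does not prove this lemma; it is stated in Appendix~\ref{app:lemmata} as a cited result from \cite{Gillis2012} and used as a black box in the proof of Theorem~\ref{thm:spabound}. Your outline is a faithful sketch of the original Gillis--Vavasis argument (strict convexity of $\|\cdot\|_2^2$ over the simplex to pick extreme points, followed by an inductive perturbation analysis of the projected residuals with error controlled by $\sigma_{\min}(\bm W)$ and $\kappa(\bm W)$), so there is nothing to compare against in this paper beyond noting that your plan matches the source it cites.
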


\begin{lemma}  \cite{ibrahim2019crowdsourcing} \label{lem:normalization}
	Consider a vector $\bm x \in \mathbb{R}^{L}$ and the corresponding estimate of the vector $\widehat{\bm x} $ such that $\widehat{\bm x} = \bm x +\bm n$ where $\bm n$ represents the noise vector and $\bm x,\widehat{\bm x} \ge 0$. Assume that $\|\widehat{\bm x}\|_1 \ge \eta$ where $\eta > 0$ and $\|\bm n\|_1 < \|\bm x\|_1$. Suppose, the vector $\widehat{\bm x}$ is normalized with respect to its $\ell_1$ norm. The normalized version can be represented as
	$$\frac{\widehat{\bm x}}{\|\widehat{\bm x}\|_1} = \frac{\bm x}{\| \bm x\|_1} +\widetilde{\bm n},$$
	where $\|\widetilde{\bm n}\|_1 \le \frac{2\|\bm n\|_1}{\eta}.$
\end{lemma}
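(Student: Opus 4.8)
The plan is to bound the normalization error directly, by putting $\widehat{\bm x}/\|\widehat{\bm x}\|_1 - \bm x/\|\bm x\|_1$ over a common denominator, rather than by expanding a Neumann series. First I would extract two facts from the hypotheses. Since $\bm x \ge \bm 0$ and $\widehat{\bm x} = \bm x + \bm n \ge \bm 0$, the $\ell_1$ norms are plain sums of entries, so
\[
\|\widehat{\bm x}\|_1 - \|\bm x\|_1 = \sum_i \widehat{x}_i - \sum_i x_i = \sum_i n_i ;
\]
and since $\|\widehat{\bm x}\|_1 \ge \eta > 0$ (and $\|\bm n\|_1 < \|\bm x\|_1$ forces $\|\bm x\|_1 > 0$), both denominators are strictly positive, so the normalized vectors are well defined.

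Next, I would compute
\begin{align*}
\frac{\widehat{\bm x}}{\|\widehat{\bm x}\|_1} - \frac{\bm x}{\|\bm x\|_1}
&= \frac{\|\bm x\|_1\,\widehat{\bm x} - \|\widehat{\bm x}\|_1\,\bm x}{\|\widehat{\bm x}\|_1\,\|\bm x\|_1}
= \frac{\|\bm x\|_1\,(\bm x + \bm n) - \|\widehat{\bm x}\|_1\,\bm x}{\|\widehat{\bm x}\|_1\,\|\bm x\|_1} \\
&= \frac{\|\bm x\|_1\,\bm n - \big(\|\widehat{\bm x}\|_1 - \|\bm x\|_1\big)\,\bm x}{\|\widehat{\bm x}\|_1\,\|\bm x\|_1}
= \frac{\|\bm x\|_1\,\bm n - \big(\sum_i n_i\big)\,\bm x}{\|\widehat{\bm x}\|_1\,\|\bm x\|_1} =: \widetilde{\bm n},
\end{align*}
where the last equality uses the identity from the first paragraph; this exhibits the remainder $\widetilde{\bm n}$ with a clean closed form.

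Finally I would bound $\|\widetilde{\bm n}\|_1$ by the triangle inequality and $|\sum_i n_i| \le \|\bm n\|_1$:
\[
\|\widetilde{\bm n}\|_1 \le \frac{\|\bm x\|_1\,\|\bm n\|_1 + |\sum_i n_i|\,\|\bm x\|_1}{\|\widehat{\bm x}\|_1\,\|\bm x\|_1} = \frac{\|\bm n\|_1 + |\sum_i n_i|}{\|\widehat{\bm x}\|_1} \le \frac{2\|\bm n\|_1}{\|\widehat{\bm x}\|_1} \le \frac{2\|\bm n\|_1}{\eta},
\]
which is the claimed bound. An equivalent route, closer to the cited source, is to write $\|\widehat{\bm x}\|_1 = (1+\mu)\|\bm x\|_1$ with $\mu := (\sum_i n_i)/\|\bm x\|_1$, observe $|\mu| < 1$ from $\|\bm n\|_1 < \|\bm x\|_1$, expand $(1+\mu)^{-1}$ as a convergent geometric series, and peel off the zeroth-order term $\bm x/\|\bm x\|_1$, leaving $\widetilde{\bm n} = (\bm n - \mu\bm x)/\big((1+\mu)\|\bm x\|_1\big)$ and the same estimate. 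I do not anticipate a genuine obstacle: the only subtlety is to keep $\sum_i n_i$ distinct from $\|\bm n\|_1$ (they agree only when $\bm n \ge \bm 0$), and to notice that it is the nonnegativity of $\bm x$ and $\widehat{\bm x}$ — not of $\bm n$ — that makes $\|\widehat{\bm x}\|_1 - \|\bm x\|_1 = \sum_i n_i$; everything else is bookkeeping with the triangle inequality.
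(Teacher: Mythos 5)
Your argument is correct, and it lands on exactly the same remainder term as the paper's proof --- namely $\widetilde{\bm n} = \bigl(\bm n - \mu\,\bm x\bigr)/\|\widehat{\bm x}\|_1$ with $\mu = (\sum_i n_i)/\|\bm x\|_1$, bounded by the same triangle-inequality step $\|\widetilde{\bm n}\|_1 \le (\|\bm n\|_1 + |\sum_i n_i|)/\|\widehat{\bm x}\|_1 \le 2\|\bm n\|_1/\eta$. The only difference is how you reach that decomposition: the paper writes $\|\widehat{\bm x}\|_1 = (1+\mu)\sum_i x_i$, checks $|\mu|<1$ from $\|\bm n\|_1 < \|\bm x\|_1$, expands $(1+\mu)^{-1}$ as a geometric series, and peels off the zeroth-order term, whereas you obtain the identical expression by putting the two normalized vectors over a common denominator. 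Your route is slightly cleaner: it dispenses with the convergence argument entirely, and it makes explicit that the hypothesis $\|\bm n\|_1 < \|\bm x\|_1$ is only needed to guarantee $\|\bm x\|_1 > 0$ (so the target vector is normalizable), rather than for summing a series. You also correctly isolate the one genuinely load-bearing observation --- that the nonnegativity of $\bm x$ and $\widehat{\bm x}$ gives $\|\widehat{\bm x}\|_1 - \|\bm x\|_1 = \sum_i n_i$, which must not be conflated with $\|\bm n\|_1$ --- which is the same fact the paper uses implicitly when it replaces $\|\bm x + \bm n\|_1$ by $\sum_i x_i + \sum_i n_i$. No gaps.
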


\begin{lemma}\label{lem:Lm}\cite{ibrahim2019crowdsourcing}
	Let $\rho > 0 ,\varepsilon >0$, and assume that the rows of ${\bm H}\in\mathbb{R}^{K\times F}$ are generated within the $(F-1)$-probability simplex uniformly at random (and then nonnegatively scaled). 
	\begin{itemize}
		\item[(a)] If the number of rows satisfies
		\begin{align} \label{eq:thmLm}
		K = \Omega\left(\frac{\varepsilon^{-2(F-1)}}{F}{\rm log}\left(\frac{F}{\rho}\right)\right),
		\end{align}
		then, with probability greater than or equal to $1-\rho$, there exist rows of ${\bm H}$ indexed by $ l_1,\dots  l_F$ such that 
		$$\|{\bm H}( l_f, :) - \bm e_f^\T \|_2 \le \varepsilon,~f=1,\ldots,F.$$
		\item[(a)] 	{
			Let $\frac{\alpha}{2} > \varepsilon > 0$. If $K$ satisfies
			\begin{equation}\label{eq:Mss}
			K = \Omega \left(\frac{(F-1)^2}{F\alpha^{2(F-2)}\varepsilon^{2}}{\rm log}\left(\frac{F(F-1)}{\rho}\right)\right),
			\end{equation}
			where $\alpha=1$ for $K=2$, $\alpha=2/3$ for $K=3$ and $\alpha=1/2$ for $K>3$,	
			then with probability greater than or equal to $1-\rho$, ${\H}$ is $\varepsilon$-sufficiently scattered [see detailed definition in \cite{ibrahim2019crowdsourcing}]. }
	\end{itemize}
\end{lemma}

\end{document}